\newcommand\algorithmicprocedure{\textbf{procedure}}
\newcommand{\algorithmicendprocedure}{\algorithmicend\ \algorithmicprocedure}
\newcommand\PROCEDURE[3][default]{%
  \ALC@it
  \algorithmicprocedure\ \textsc{#2}(#3)%
  \ALC@com{#1}%
  \begin{ALC@prc}%
}
\newcommand\ENDPROCEDURE{%
  \end{ALC@prc}%
  \ifthenelse{\boolean{ALC@noend}}{}{%
    \ALC@it\algorithmicendprocedure
  }%
}
\newenvironment{ALC@prc}{\begin{ALC@g}}{\end{ALC@g}}
\DeclareExpandableDocumentCommand{\IfNoValueOrEmptyTF}{mmm}
 {
  \IfNoValueTF{#1}{#2}
   {
    \tl_if_empty:nTF {#1} {#2} {#3}
   }
 }
\theoremstyle{plain}
\newtheorem{theorem}{Theorem}[section]
\newtheorem{proposition}[theorem]{Proposition}
\newtheorem{lemma}[theorem]{Lemma}
\newtheorem{corollary}[theorem]{Corollary}
\newtheorem{fact}[theorem]{Fact}
\newtheorem{definition}[theorem]{Definition}
\newtheorem{example}[theorem]{Example}
\newtheorem{claim}[theorem]{Claim}
\theoremstyle{remark}
\NewDocumentCommand{\oversetwo}{om}{
    \IfNoValueOrEmptyTF{#1}{
        {#2}
    }{
        \overset{(\mathrm{#1})}{#2}
    }
}
\NewDocumentCommand{\cleq}{o}{
    \oversetwo[#1]{\leq}
}
\NewDocumentCommand{\cgeq}{o}{
    \oversetwo[#1]{\geq}
}
\NewDocumentCommand{\cl}{o}{
    \oversetwo[#1]{<}
}
\NewDocumentCommand{\cg}{o}{
    \oversetwo[#1]{>}
}
\NewDocumentCommand{\ceq}{o}{
    \oversetwo[#1]{=}
}
\NewDocumentCommand{\ScriptedMathSymbol}{md<>d()d||}{
    \IfNoValueOrEmptyTF{#2}{
        \IfNoValueOrEmptyTF{#3}{
            \IfNoValueOrEmptyTF{#4}{
                #1
            }{
                #1^{#4}
            }
        }{
            \IfNoValueOrEmptyTF{#4}{
                #1^{(#3)}
            }{
                #1^{(#3)#4}
            }
        }
    }{
        \IfNoValueOrEmptyTF{#3}{
            \IfNoValueOrEmptyTF{#4}{
                #1_{#2}
            }{
                #1_{#2}^{#4}
            }
        }{
            \IfNoValueOrEmptyTF{#4}{
                #1_{#2}^{(#3)}
            }{
                #1_{#2}^{(#3)#4}
            }
        }
    }
}
\NewDocumentCommand{\rvector}{smd<>d()d||}{
    \IfBooleanTF{#1}{
        {\ScriptedMathSymbol{\bar{\mathbf{#2}}}<#3>(#4)|#5|}
    }{
        {\ScriptedMathSymbol{\mathbf{#2}}<#3>(#4)|#5|}
    }
}
\NewDocumentCommand{\cvector}{smd<>d()d||}{
    \IfBooleanTF{#1}{
        {\ScriptedMathSymbol{\bar{\bm{#2}}}<#3>(#4)|#5|}
    }{
        {\ScriptedMathSymbol{\bm{#2}}<#3>(#4)|#5|}
    }
}
\NewDocumentCommand{\cmatrix}{md<>d()d||}{
    \cvector{#1}<#2>(#3)|#4|
}
\NewDocumentCommand{\rmatrix}{md<>d()d||}{
    \rvector{\mathbf{#1}}<#2>(#3)|#4|
}
\NewDocumentCommand{\rvectorseq}{smO{1}d<>O{1}d()d||}{
    \IfBooleanTF{#1}{
        \IfNoValueOrEmptyTF{#4}{
            \rvector*{#2}(#5)|#7|, \ldots, \rvector*{#2}(#6)|#7|
        }{
            \rvector*{#2}<#3>(#6)|#7|, \ldots, \rvector*{#2}<#4>(#6)|#7|
        }  
    }{
        \IfNoValueOrEmptyTF{#4}{
            \rvector{#2}(#5)|#7|, \ldots, \rvector{#2}(#6)|#7|
        }{
            \rvector{#2}<#3>(#6)|#7|, \ldots, \rvector{#2}<#4>(#6)|#7|
        }  
    }   
}
\NewDocumentCommand{\cvectorseq}{smO{1}d<>O{1}d()d||}{
    \IfBooleanTF{#1}{
        \IfNoValueOrEmptyTF{#4}{
            \cvector*{#2}(#5)|#7|, \ldots, \cvector*{#2}(#6)|#7|
        }{
            \cvector*{#2}<#3>(#6)|#7|, \ldots, \cvector*{#2}<#4>(#6)|#7|
        }  
    }{
        \IfNoValueOrEmptyTF{#4}{
            \cvector{#2}(#5)|#7|, \ldots, \cvector{#2}(#6)|#7|
        }{
            \cvector{#2}<#3>(#6)|#7|, \ldots, \cvector{#2}<#4>(#6)|#7|
        }  
    }   
}
\NewDocumentCommand{\rscalar}{smd<>d()d||}{
    \IfBooleanTF{#1}{
        {\ScriptedMathSymbol{\bar{\mathrm{#2}}}<#3>(#4)|#5|}
    }{
        {\ScriptedMathSymbol{\mathrm{#2}}<#3>(#4)|#5|}
    }
}
\NewDocumentCommand{\cscalar}{smd<>d()d||}{
    \IfBooleanTF{#1}{
        {\ScriptedMathSymbol{\bar{#2}}<#3>(#4)|#5|}
    }{
        {\ScriptedMathSymbol{#2}<#3>(#4)|#5|}
    }
}
\NewDocumentCommand{\rscalarseq}{smO{1}d<>O{1}d()d||}{
    \IfBooleanTF{#1}{
        \IfNoValueOrEmptyTF{#4}{
            \rscalar*{#2}(#5)|#7|, \ldots, \rscalar*{#2}(#6)|#7|
        }{
            \rscalar*{#2}<#3>(#6)|#7|, \ldots, \rscalar*{#2}<#4>(#6)|#7|
        }  
    }{
        \IfNoValueOrEmptyTF{#4}{
            \rscalar{#2}(#5)|#7|, \ldots, \rscalar{#2}(#6)|#7|
        }{
            \rscalar{#2}<#3>(#6)|#7|, \ldots, \rscalar{#2}<#4>(#6)|#7|
        }  
    }   
}
\NewDocumentCommand{\cscalarseq}{smO{1}d<>O{1}d()d||}{
    \IfBooleanTF{#1}{
        \IfNoValueOrEmptyTF{#4}{
            \cscalar*{#2}(#5)|#7|, \ldots, \cscalar*{#2}(#6)|#7|
        }{
            \cscalar*{#2}<#3>(#6)|#7|, \ldots, \cscalar*{#2}<#4>(#6)|#7|
        }  
    }{
        \IfNoValueOrEmptyTF{#4}{
            \cscalar{#2}(#5)|#7|, \ldots, \cscalar{#2}(#6)|#7|
        }{
            \cscalar{#2}<#3>(#6)|#7|, \ldots, \cscalar{#2}<#4>(#6)|#7|
        }  
    }   
}
\NewDocumentCommand{\sbr}{smd||}{
    \IfBooleanTF{#1}{
        \ensuremath \ScriptedMathSymbol{( #2 )}|#3|
    }{
        \ensuremath \ScriptedMathSymbol{\left( #2 \right)}|#3|
    }
}
\NewDocumentCommand{\mbr}{smd||}{
    \IfBooleanTF{#1}{
        \ensuremath \ScriptedMathSymbol{[ #2 ]}|#3|
    }{
        \ensuremath \ScriptedMathSymbol{\left[ #2 \right]}|#3|
    }
}
\NewDocumentCommand{\lbr}{smd||}{
    \IfBooleanTF{#1}{
        \ensuremath \ScriptedMathSymbol{\{ #2 \}}|#3|
    }{
        \ensuremath \ScriptedMathSymbol{\left\{ #2 \right\}}|#3|
    }
}
\NewDocumentCommand{\funcsbr}{smd<>d()d||o}{
    \IfBooleanTF{#1}{
        \IfNoValueOrEmptyTF{#6}{
            \ScriptedMathSymbol{#2}<#3>(#4)|#5|
        }{
            \ScriptedMathSymbol{#2}<#3>(#4)|#5|\sbr{#6}
        }
    }{
        \IfNoValueOrEmptyTF{#6}{
            \ScriptedMathSymbol{#2}<#3>(#4)|#5|
        }{
            \ScriptedMathSymbol{#2}<#3>(#4)|#5|\sbr*{#6}
        }
    }
}
\NewDocumentCommand{\funcmbr}{smd<>d()d||o}{
    \IfBooleanTF{#1}{
        \IfNoValueOrEmptyTF{#6}{
            \ScriptedMathSymbol{#2}<#3>(#4)|#5|
        }{
            \ScriptedMathSymbol{#2}<#3>(#4)|#5|\mbr{#6}
        }
    }{
        \IfNoValueOrEmptyTF{#6}{
            \ScriptedMathSymbol{#2}<#3>(#4)|#5|
        }{
            \ScriptedMathSymbol{#2}<#3>(#4)|#5|\mbr*{#6}
        }
    }
}
\NewDocumentCommand{\funclbr}{smd<>d()d||o}{
    \IfBooleanTF{#1}{
        \IfNoValueOrEmptyTF{#6}{
            \ScriptedMathSymbol{#2}<#3>(#4)|#5|
        }{
            \ScriptedMathSymbol{#2}<#3>(#4)|#5|\lbr{#6}
        }
    }{
        \IfNoValueOrEmptyTF{#6}{
            \ScriptedMathSymbol{#2}<#3>(#4)|#5|
        }{
            \ScriptedMathSymbol{#2}<#3>(#4)|#5|\lbr*{#6}
        }
    }
}
\NewDocumentCommand{\derivative}{d<>d||}{
    \ScriptedMathSymbol{\nabla}<#1>|#2|
}
\NewDocumentCommand{\funcclass}{md<>d()d||}{
    \ScriptedMathSymbol{\mathcal{#1}}<#2>(#3)|#4|
}
\NewDocumentCommand{\reals}{d<>d||}{
    \ScriptedMathSymbol{\mathbb{R}}<#1>|#2|
}
\NewDocumentCommand{\naturals}{d<>d||}{
    \ScriptedMathSymbol{\mathbb{N}}<#1>|#2|
}
\NewDocumentCommand{\integer}{d<>d||}{
    \ScriptedMathSymbol{\mathbb{Z}}<#1>|#2|
}
\NewDocumentCommand{\sphere}{d<>d||}{
    \ScriptedMathSymbol{\mathbb{S}}<#1>|#2|
}
\NewDocumentCommand\booldomain{d||}{
    \ScriptedMathSymbol{\lbr*{0 ,1}}|#1|
}
\NewDocumentCommand{\binarydomain}{O{-1}O{+1}D||{}}{
    \IfNoValueOrEmptyTF{#3}{
        \ScriptedMathSymbol{\lbr*{#1, #2}}
    }{
        \ScriptedMathSymbol{\lbr*{#1, #2}}[#3]
    }
}
\NewDocumentCommand{\union}{sd<>d||}{
    \IfBooleanTF{#1}{
        \IfNoValueOrEmptyTF{#2}{
            \ScriptedMathSymbol{\ \bigcup\ }<#2>|#3|
        }{
            \ScriptedMathSymbol{\bigcup}<#2>|#3|
        }     
    }{
        \IfNoValueOrEmptyTF{#2}{
            \ScriptedMathSymbol{\ \cup\ }<#2>|#3|
        }{
            \ScriptedMathSymbol{\cup}<#2>|#3|
        }
    }
}
\NewDocumentCommand{\isect}{sd<>d||}{
    \IfBooleanTF{#1}{
        \IfNoValueOrEmptyTF{#2}{
            \ScriptedMathSymbol{\ \bigcap\ }<#2>|#3|
        }{
            \ScriptedMathSymbol{\bigcap}<#2>|#3|
        }     
    }{
        \IfNoValueOrEmptyTF{#2}{
            \ScriptedMathSymbol{\ \cap\ }<#2>|#3|
        }{
            \ScriptedMathSymbol{\cap}<#2>|#3|
        }
    }
}
\NewDocumentCommand{\por}{sd<>d||}{
    \IfBooleanTF{#1}{
        \ScriptedMathSymbol{\bigvee}<#2>|#3|
    }{
        \ScriptedMathSymbol{\vee}<#2>|#3|
    }
}
\NewDocumentCommand{\pand}{sd<>d||}{
    \IfBooleanTF{#1}{
        \ScriptedMathSymbol{\bigwedge}<#2>|#3|
    }{
        \ScriptedMathSymbol{\wedge}<#2>|#3|
    }
}
\NewDocumentCommand{\cond}{s}{
    \IfBooleanTF{#1}{
        |
    }{
        {\ |\ }
    }
}
\NewDocumentCommand{\volume}{sd<>d||o}{
    \IfBooleanTF{#1}{
        \funcsbr*{\mathrm{Vol}}<#2>|#3|[#4]
    }{
        \funcsbr{\mathrm{Vol}}<#2>|#3|[#4]
    }
}
\NewDocumentCommand{\proj}{sd<>d||o}{
    \IfBooleanTF{#1}{
        \funcsbr*{\mathrm{proj}}<#2>|#3|[#4]
    }{
        \funcsbr{\mathrm{proj}}<#2>|#3|[#4]
    }
}
\NewDocumentCommand{\ceil}{sm}{
    \IfBooleanTF{#1}{
        \ensuremath \lceil {#2} \rceil
    }{
        \ensuremath \left\lceil {#2} \right\rceil
    }
}
\NewDocumentCommand{\floor}{sm}{
    \IfBooleanTF{#1}{
        \ensuremath \lfloor {#2} \rfloor
    }{
        \ensuremath \left\lfloor {#2} \right\rfloor
    }
}
\DeclareMathOperator*{\E}{\mathbb{E}}
\NewDocumentCommand{\expect}{sd<>d||m}{
    \IfBooleanTF{#1}{
        \funcmbr*{\E}<#2>|#3|[#4]
    }{
        \funcmbr{\E}<#2>|#3|[#4]
    }
}
\NewDocumentCommand{\prob}{sd<>d||m}{
    \IfBooleanTF{#1}{
        \funclbr*{{\Pr}}<#2>|#3|[#4]
    }{
        \funclbr{{\Pr}}<#2>|#3|[#4]
    }
}
\NewDocumentCommand{\distr}{sd<>d()d||}{
    \IfBooleanTF{#1}{
        \ScriptedMathSymbol{\hat{\mathcal{D}}}<#2>(#3)|#4|
    }{
        \ScriptedMathSymbol{\mathcal{D}}<#2>(#3)|#4|
    }
}
\NewDocumentCommand{\gaussian}{d||O{0}O{1}}{
    \ScriptedMathSymbol{\mathcal{N}}|#1|\sbr*{#2,#3}
}
\NewDocumentCommand{\uniform}{sd<>d()d||o}{
    \IfBooleanTF{#1}{
        \funcsbr{\mathrm{Unif}}<#2>(#3)|#4|[#5]
    }{
        \funcsbr*{\mathrm{Unif}}<#2>(#3)|#4|[#5]
    }
}
\NewDocumentCommand{\sample}{}{\overset{\text{i.i.d.}}{\sim}}
\NewDocumentCommand{\norm}{smd<>d||}{
    \IfBooleanTF{#1}{
        \ScriptedMathSymbol{\left\lVert {#2} \right\rVert}<#3>|#4|
    }{
        \ScriptedMathSymbol{\lVert {#2} \rVert}<#3>|#4|
    }
}
\NewDocumentCommand{\pnorm}{smd<>d||}{
    \IfBooleanTF{#1}{
        \ScriptedMathSymbol{\hat{\lVert} {#2} \hat{\rVert}}<#3>|#4|
    }{
        \ScriptedMathSymbol{\hat{\lVert} {#2} \hat{\rVert}}<#3>|#4|
    }
}
\NewDocumentCommand{\abs}{smd||}{
    \IfBooleanTF{#1}{
        \ScriptedMathSymbol{| {#2} |}|#3|
    }{
        \ScriptedMathSymbol{\left| {#2} \right|}|#3|
    }
}
\NewDocumentCommand{\innerprod}{smmd||}{
    \IfBooleanTF{#1}{
        \ScriptedMathSymbol{\langle #2, #3 \rangle}|#4|
    }{
        \ScriptedMathSymbol{\left\langle #2, #3 \right\rangle}|#4|
    }
}
\NewDocumentCommand{\identity}{d<>d||}{
    \ScriptedMathSymbol{I}<#1>|#2|
}
\NewDocumentCommand{\indicator}{sd<>o}{
    \IfBooleanTF{#1}{
        \funclbr*{\mathds{1}}<#2>[#3]
    }{
        \funclbr{\mathds{1}}<#2>[#3]
    }
}
\NewDocumentCommand{\bigO}{sm}{
    \IfBooleanTF{#1}{
        \ScriptedMathSymbol{\tilde{O}}\sbr*{#2}
    }{
        \ScriptedMathSymbol{O}\sbr*{#2}
    }
}
\NewDocumentCommand{\bigM}{sm}{
    \IfBooleanTF{#1}{
        \ScriptedMathSymbol{\tilde{\Omega}}\sbr*{#2}
    }{
        \ScriptedMathSymbol{\Omega}\sbr*{#2}
    }
}
\NewDocumentCommand{\poly}{so}{
    \IfBooleanTF{#1}{
        \funcsbr*{\mathrm{poly}}[#2]
    }{
        \funcsbr{\mathrm{poly}}[#2]
    }
}
\NewDocumentCommand{\repclass}{md<>d()d||}{
    \ScriptedMathSymbol{\mathcal{#1}}<#2>(#3)|#4|
}
\NewDocumentCommand{\opt}{d<>d()d||}{
    \ScriptedMathSymbol{\mathrm{opt}}<#1>(#2)|#3|
}
\NewDocumentCommand{\conceptclass}{d<>d()d||}{
    \ScriptedMathSymbol{\mathcal{C}}<#1>(#2)|#3|
}
\NewDocumentCommand{\hypothesisclass}{d<>d()d||}{
    \ScriptedMathSymbol{\mathcal{H}}<#1>(#2)|#3|
}
\NewDocumentCommand{\parameterset}{md<>d()d||}{
    \ScriptedMathSymbol{\mathcal{#1}}<#2>(#3)|#4|
}
\NewDocumentCommand{\subsets}{sd<>d()D||{}O{}}{
    \IfBooleanTF{#1}{
        \IfNoValueOrEmptyTF{#5}{
            \ScriptedMathSymbol{S}<#2>(#3)|{#4}c|
        }{
            \ScriptedMathSymbol{S}<#2>(#3)|{#4}c|\sbr{#5}
        }
    }{
        \IfNoValueOrEmptyTF{#5}{
            \ScriptedMathSymbol{S}<#2>(#3)|#4|
        }{
            \ScriptedMathSymbol{S}<#2>(#3)|#4|\sbr{#5}
        }
    }
}
\NewDocumentCommand{\hypothesis}{sd<>d()o}{
    \IfBooleanTF{#1}{
        \funcsbr{h}<#2>(#3)|c|[#4]
    }{
        \funcsbr{h}<#2>(#3)[#4]
    }
}
\NewDocumentCommand{\algo}{d<>d()o}{
    \funcsbr{\mathcal{A}}<#1>(#2)[#3]
}
\NewDocumentCommand{\errorregion}{d<>d()d||o}{
    \funclbr{\mathcal{E}}<#1>(#2)|#3|[#4]
}
\NewDocumentCommand{\loss}{sd<>d()o}{
    \IfBooleanTF{#1}{
        \funcsbr*{\mathcal{L}}<#2>(#3)[#4]
    }{
        \funcsbr{\mathcal{L}}<#2>(#3)[#4]
    }
}
\NewDocumentCommand{\err}{sd<>d()o}{
    \IfBooleanTF{#1}{
        \funcsbr*{\mathrm{err}}<#2>(#3)[#4]
    }{
        \funcsbr{\mathrm{err}}<#2>(#3)[#4]
    }
}
\NewDocumentCommand{\surrloss}{sD<>{\sigma}d()D||{}o}{
    \IfBooleanTF{#1}{
        \funcsbr*{S'}<#2>(#3)|#4|[#5]
    }{
        \funcsbr*{S}<#2>(#3)|#4|[#5]
    }
}
\title{Personalized Prediction By Learning Halfspace Reference Classes Under Well-Behaved Distribution
}
\author{
  Jizhou Huang\\
  Washington Universtiy in St. Louis \\
  St. Louis, MO, USA\\
  \texttt{jizhou.huang@wustl.edu} \\
   \And
  Brendan Juba \\
  Washington Universtiy in St. Louis \\
  St. Louis, MO, USA\\
  \texttt{bjuba@wustl.edu} \\
}
\definecolor{mypink}{HTML}{D691A4}
\definecolor{darkblue}{HTML}{343F65}
\definecolor{myblue}{HTML}{78B9D2}
\definecolor{myorange}{HTML}{FCAF7C}
\NewDocumentCommand{\drawerrorregion}{sD(){1}}{
    \begin{tikzpicture}[scale=#2]

        \coordinate (O) at (0,0);
        \node at (O) [below right] {O};
    
        \draw[->, thick] (-3,0) -- (3,0);
        \node at (3,0) [right] {$\cvector{e}<1>$};
    
        \draw[dashed, gray, thick, opacity = 0.75] (-0.75,-0.75) -- (2.12,2.12); 
        \node at (2.12,2.12) [above right] {};
    
        \fill[myorange, opacity=0.5] (0,0) -- (3,0) arc[start angle=0, end angle=45, radius=3] -- cycle;
        \fill[myblue, opacity=0.5] (0,0) -- (2.12,2.12) arc[start angle=45, end angle=180, radius=3] -- cycle;
    
        \draw[->, line width=0.5mm] (O) -- (0,3) node[above] {$\cvector{w}$};
    
        \draw[->, line width=0.5mm] (O) -- (-2.12,2.12) node[above left] {$\cvector{v}$};
    
        \draw (0.3,0) -- (0.3,0.3) -- (0,0.3);
    
        \draw (-0.2121,0.2121) -- (-0.424,0) -- (-0.2121,-0.2121);
    
        \draw[-, thick] (0.6,0) arc (0:45:0.6);
        \node at (0.6,0.35) [right] {$\theta(\cvector{v},\cvector{w})$};

        \IfBooleanTF{#1}{
            \def\const{1.5}
            \draw[myorange, thick, opacity = 0.75] ({sqrt(8)},1) -- ({\const*sqrt(8)},\const);
            \node at ({\const*sqrt(8)},\const) [above right] {$I$};
        }{}
    
    \end{tikzpicture}
}
\NewDocumentCommand{\drawcontractiveprojection}{s}{
    \tdplotsetmaincoords{60}{110} 

    \begin{tikzpicture}[scale=3,tdplot_main_coords]

        \coordinate (O) at (0,0,0);
        \node at (O) [above left] {O};

        \def\tanthirty{0.57735}
        \def\sinthirty{0.5}
        \def\costhirty{0.86603}
        \def\tanfourtyfive{1}
        \def\sinfourtyfive{0.70711}
        \def\cosfourtyfive{0.70711}
        \def\tansixty{0.57735}
        \def\sinsixty{0.86603}
        \def\cossixty{0.5}
        \def\len{1}
        \def\marklen{0.1}

        \tdplotCsDrawLatCircle[thin,black!30]{\len}{0}
        \tdplotCsDrawLonCircle[thin,black!30]{\len}{90}
        \tdplotCsDrawLonCircle[thin,black!30]{\len}{0}
        \draw[tdplot_screen_coords,thin,black!30] (0,0,0) circle (\len);

        \draw[thick,->] (0,0,-0) -- (0,0,\len) node[anchor=south]{$\cvector{x}$};
        \draw[thin,black!30, dashed] (0,0,0) -- (0,0,-\len);

        \pgfmathsetmacro{\phivec}{120}
        \pgfmathsetmacro{\thetavec}{0}
        
        \tdplotsetcoord{W}{\len}{\phivec}{\thetavec}
        \tdplotsetcoord{Wpo}{\len}{90}{\thetavec}
        \draw[->,thick] (O) -- (W) node[right] {$\cvector{w}$};
        \draw[thick, ->] (O) -- (Wpo) node[below left] {$\cvector*{w}<\cvector{x}|\bot|>$};
        \draw[thin,black!30, dashed] (0,0,0) -- (-\len,0,0);
        \IfBooleanTF{#1}{}{
            \draw[->,thick, myorange] (O) -- (Wxy) node[above left] {$\cvector{w}<\cvector{x}|\bot|>$};
        }
        \IfBooleanTF{#1}{
            \draw[thick,  mypink, ->] (Wxy) -- (W);
        }{
            \draw[thick,  mypink, ->] (Wxy) -- (W) node[above left] {$\cvector{w}<\cvector{x}>$};
        }

        \draw (0, 0, \marklen) -- (\marklen, 0, \marklen) -- (\marklen, 0, 0);
        \def\coorlen{\len*\costhirty}
        \draw (\coorlen, 0, -\marklen) -- ({\coorlen-\marklen}, 0, -\marklen) -- ({\coorlen-\marklen}, 0, 0);

        \tdplotCsDrawLonCircle[thin,black!30]{\len}{-45}

        \pgfmathsetmacro{\phivec}{60}
        \pgfmathsetmacro{\thetavec}{45}
        
        \tdplotsetcoord{V}{\len}{\phivec}{\thetavec}
        \tdplotsetcoord{V'}{\len}{90}{\thetavec}
        \draw[->,thick] (O) -- (V) node[above] {$\cvector{v}$};
        \IfBooleanTF{#1}{
            \draw[thin,black!30, dashed] ({\len*\sinfourtyfive}, {\len*\sinfourtyfive}, 0) -- (-{\len*\sinfourtyfive}, -{\len*\sinfourtyfive}, 0);
        }{
            \draw[->,thick, myorange] (O) -- (Vxy) node[left] {$\cvector{v}<\cvector{x}|\bot|>$};
            \draw[thin,black!30, dashed] (O) -- (-{\len*\sinfourtyfive}, -{\len*\sinfourtyfive}, 0);
            \draw[thin,black!30, dashed] (Vxy) -- (V');
        }
        
        \IfBooleanTF{#1}{}{
            \draw[thick,  mypink, ->] (Vxy) -- (V) node[below right] {$\cvector{v}<\cvector{x}>$};
        }
        

        
        \def\coorlen{\marklen*\sinfourtyfive}
        \draw (0,0,\marklen) -- (\coorlen,\coorlen,\marklen) -- (\coorlen,\coorlen, 0);
        \IfBooleanTF{#1}{}{
            \def\coorlen{\len*\costhirty*\sinfourtyfive}
            \draw (\coorlen,\coorlen,\marklen) -- ({\coorlen-\marklen*\sinfourtyfive},{\coorlen-\marklen*\sinfourtyfive},\marklen) -- ({\coorlen-\marklen*\sinfourtyfive},{\coorlen-\marklen*\sinfourtyfive}, 0);
        }
        
        
    \end{tikzpicture}
}
\begin{document}
    \maketitle
    \begin{abstract}
        In machine learning applications, predictive models are trained to serve future queries across the entire data distribution. Real-world data often demands excessively complex models to achieve competitive performance, however, sacrificing interpretability. Hence, the growing deployment of machine learning models in high-stakes applications, such as healthcare, motivates the search for methods for accurate and explainable predictions. This work proposes a \emph{personalized prediction} scheme, where an easy-to-interpret predictor is learned per query. In particular, we wish to produce a \emph{sparse linear} classifier with competitive performance specifically on some sub-population that includes the query point. The goal of this work is to study the PAC-learnability of this prediction model for sub-populations represented by \emph{halfspaces} in a label-agnostic setting. We first give a distribution-specific PAC-learning algorithm for learning reference classes for personalized prediction. By leveraging both the reference-class learning algorithm and a list learner of sparse linear representations, we prove the first upper bound, $\bigO{\cscalar{\opt}|1/4|}$, for personalized prediction with sparse linear classifiers and homogeneous halfspace subsets. We also evaluate our algorithms on a variety of standard benchmark data sets.
    \end{abstract}
    
    \section{Introduction}
    \label{sec:introduction}
        In real-world machine learning applications, complex models, such as deep neural networks and transformers, are often preferred than simpler models, such as linear classifiers, due to their ability to achieve higher predictive accuracy. However, relying on models that perform well on average across the entire populations introduces a dilemma: expressivity is often at odds with interpretability. For instance, a doctor assessing the safety of a medication for a patient needs to understand the factors influencing a model's ``safe'' prediction before proceeding with treatment. Similarly, an investor allocating substantial funds would require insight into the reasoning behind a model's investment recommendations. Overall, the opaqueness of the prediction process of complex machine learning models can often hinder trust and adoption in high-stakes applications \citep{qi2018anomaly,rudin2019stop}. 

        Despite that interpreting the behaviors of complex models has been widely studied \citep{ribeiro2016should,lundberg2017unified,ribeiro2018anchors,wang2021self}, these methods either interpret the local behaviors by simple models or (approximately) estimate certain statistics that assist interpretation of relevant properties. \citet{huang2024failings} demonstrated that these ``post hoc'' methods for explaining the prediction behaviors of complex models could be misleading in high-stake applications, which motivates the usage of \emph{inherently} interpretable models, i.e., models themselves are explanations. Unfortunately, in the real world, easy-to-interpret rules, such as conjunctions and linear representations, are often too simple to accurately capture the properties we care about across the entire population.
        
        In this work, a personalized prediction scheme is adopted to reconcile model interpretability with performance by learning distinct models for different observations. Specifically, for every query point, we seek a simple decision rule along with a sub-population which not only includes the query point, but is captured accurately by the simple rule. The appeal of such an approach is clear in applications where interpretability (of the classifier) is needed or in scenarios where achieving extremely low risk is essential but unattainable with standard classification methods. Such settings include, e.g., medical diagnosis and bioinformatics \citep{khan2001classification,hanczar2008classification}. In particular, we study the \emph{distribution-specific} PAC-learnability of sparse linear classifiers on subsets defined by homogeneous halfspaces in the personalized prediction scheme, in the presence of \emph{adversarial} label noise or \emph{agnostic} setting \citep{kearns1994toward}.
        
        \subsection{Background}
        \label{sec:background}
            The need for \emph{personalization} has emerged in a variety of machine learning application areas, e.g., cognitive science \citep{fan2006personalization}, recommendation systems \citep{zhang2020explainable,mcauley2022personalized}, disease diagnosis \citep{finkelstein2017machine} and treatment \citep{lipkovich2017tutorial}, medical device development \citep{lee2020deep},  patient care \citep{golany2019pgans}, etc. Various techniques have been developed to endow machine learning models with personalized behaviors.
            Early methods for personalization \citep{linden2003amazon} made significant achievements in a variety of commercial applications, such as search engines \citep{pretschner1999ontology,speretta2005personalized} and recommendation systems \citep{resnick1997recommender,shani2011evaluating}. These approaches inherently limited the choice of representations usable as predictors, and fell short in interpretability. In applications that could impact human health and welfare, personalization is often achieved by incorporating techniques such as feature engineering \citep{finkelstein2017machine,schneider2019personalized,lee2020deep}, group-attribute-based or heuristic-based data clustering \citep{taylor2017personalized,lipkovich2017tutorial,bertsimas2019optimal,schneider2019personalized,schneider2020personalization}, or data re-weighting \citep{schneider2020personalization} into the existing training processes of various machine learning models. These methods aim to increase the number of training examples for each individual either by assuming multiple examples per person or finding a ``similar'' subgroup based on some predetermined heuristic distance measure, which potentially requires expert knowledge. Notably, \citet{golany2019pgans} utilized Generative Adversarial Networks trained on the entire population to generate personal Electrocardiogram (ECG) data for each patient. Their approach addressed the insufficiency of personal data, but was highly specialized to the specific data form of ECG waves. More recently, due to the tremendous success of Large Language Models, much effort has been invested into model alignment for personalization \citep{jang2023personalized,chen2025pad}, but without focus on interpretability.

            Although much progress has been made in personalizing prediction, little attention been paid to making these predictions interpretable, and there has been no theoretical analysis of the performance. In this work, we propose a \emph{personalized prediction} (cf. Definition \ref{def:personalized-prediction}) scheme to address these problems, specifically for \emph{binary classification} tasks. 
            
            \textbf{Personalized Prediction:} Instead of learning a universal classifier to predict all future queries, we learn a dedicated classifier for each incoming query to predict exclusively on it. The key difference between our learning scheme and the standard one is that we only model a subset of the whole data population, which well represents the incoming query. That is, we jointly learn a classifier and a subset such that not only the members in the subset resembles the query point in some reasonable measure, but also the classifier performs better on the subset than on the whole population. In this work, we only consider the class of subsets characterized by \emph{homogeneous} halfspaces\footnote{A halfspace can be defined as the set of all points on one side of a hyperplane. See Section \ref{sec:mathematical-notations} for details.} for computational reasons that will be elaborated in Section \ref{sec:personalized-prediction-and-computational-challenges}.

            \textbf{Interpretability:} We consider the class of classifiers to be $s$-sparse linear classifiers, which are linear classifiers with at most $s$ \emph{non-zero} weights, for $s=\bigO{1}$. In practice, we typically take $s\approx 2$ so that a human can understand the decision process. 
            
            As discussed earlier, the \emph{intuition} behind personalized prediction is that the underlying property of a sub-population is likely easier to capture by simple representation classes than that of the entire distribution. This belief is supported by real-world evidence from several sources: \citet{rosenfeld2015miat} showed that within a certain sub-population, the risk of gastrointestinal cancer is strongly correlated with some attributes that are not predictive in general.  \citet{izzo2023data}, \citet{pmlr-v89-hainline19a}, and \citet{calderon2020conditional} demonstrated that linear regression on a portion of the data may perform as well as more complex models learned on the full dataset in many standard real-world benchmarks.

        \subsection{Our Results}
        \label{sec:our-results}
            \textbf{PAC-learnability:} Our main contribution is the first PAC-learning algorithm for personalized prediction (cf. Definition \ref{def:personalized-prediction}) with \emph{sparse} linear classifiers as predictors and homogeneous halfspace as subsets. We proved a $\bigO{\cscalar{\opt}|1/4|}$ upper bound (cf. Theorem \ref{thm:main-theorem-personalized-prediction}) for our main algorithm (cf. Algorithm \ref{algo:personalized-prediction}) under distributions with \emph{well-behaved} attribute marginals (see Appendix \ref{sec:well-behaved-distributions} for details).

            \textbf{Experiments:} We empirically evaluated our algorithm on multiple standard UCI medical datasets. For these benchmarks, both the need for interpretability and the relatively small data size strongly motivate the use of sparse classifiers. We compared the accuracy of the personalized predictions to the accuracy of a sparse ERM for each data set, and found that it is generally much higher, on par with less-interpretable standard classification methods such as logistic regression and SVM.

            \textbf{Organization:} In Section \ref{sec:preliminary}, we introduce the necessary mathematical notations, and discuss the computational challenges of personalized prediction with subsets as halfspaces. In Section \ref{sec:learning-of-homogeneous-halfspace-reference-class}, we present our algorithms for learning reference classes. In Section \ref{sec:application-personalized-prediction}, we present our personalized prediction algorithm, which uses the reference class learning algorithm as a subroutine, and show our empirical evaluationon several UCI datasets. At last, we discuss our limitation and future directions.

        \subsection{Technical overview}
        \label{sec:highlights-of-our-techniques}
            Overall, the core of our approach is a \emph{projected} gradient descent (PGD) algorithm (cf.\ Algorithm \ref{algo:pgd-with-contractive-projection}) for \emph{learning reference classes} (cf. Definition \ref{def:reference-class}). Briefly, learning reference class is essentially equivalent to the personalized prediction problem if the class of classifiers given in personalized prediction only consists of a single classifier (see Section \ref{sec:personalized-prediction-and-computational-challenges}). If we can learn reference class, we are able to solve the personalized prediction problem with any finite class of classifiers by enumerating the class of classifiers. Following \citet{huang2025distributionspecific}, we observe that an algorithm (cf. Algorithm \ref{algo:robust-list-learn}) for \emph{robust list learning} (cf. Definition \ref{def:robust-list-learning}) may be leveraged to perform personalized prediction for large or infinite classifier classes, such as sparse linear classifiers, by reducing them to finite sets. 
            
            Our performance analysis of PGD is inspired by \citet{huang2025distributionspecific}, who was solving the \emph{conditional classification} problem. The problem is similar to personalized prediction in the sense that it also seeks for a classifier with small classification loss on some jointly learned subset, but differs in a key way that the subset is not required to contain any point. They employed a different projected gradient descent method, whose convergence implicitly implies optimality due to the observation that the projected gradient always approximately points to the optimal solution. However, their reasoning does not necessarily hold if we modify their algorithm to ensure we end up with a subset containing the query point. Like them, we are able to utilize the same property, but we use it rather differently: inspired by \citet{diakonikolas2022learning}, we find that PGD decreases the distance between its hypothesis and the optimal solution by this property, and this closeness in distance can be translated to closeness in loss. Within this distance-based analysis, the membership of the query point can be secured without increasing the distance (or loss) by a contractive projection. We stress that we proved the property (cf. Lemma \ref{lma:gradient-projection-lower-bound}) mentioned above under the more general well-behaved family as oppose to Gaussian distributions assumed in \citet{huang2025distributionspecific}, however, with slightly worse guarantee.        
            
        \subsection{Related Works}
            A related line of work, conditional learning \citep{juba2016conditional,calderon2020conditional,pmlr-v89-hainline19a,liang2022conditional,huang2025distributionspecific}, typically incorporates two sub-problems, obtaining a finite list of predictors as well as learning a predictor-subset pair out this finite list and some class of subsets. Many algorithms for ``list-decodable'' learning (Definition~\ref{def:robust-list-learning}) to obtain a list of predictors have been proposed \citep{charikar2017learning,kothari2018robust,calderon2020conditional,bakshi2021list,liang2022conditional}. The latter problem in these works was reduced to the problem of learning abduction \cite{Juba_2016}: formally, this is the problem of learning a subset of the data distribution where e.g., no errors occur. In their work, they showed that subsets defined by $k$-DNFs can be efficiently learned in realizable cases without any distributional assumptions. Subsequent improvements were obtained for the agnostic setting \citep{zhang2017improved,10.5555/3504035.3505075}. \citet{Juba_2016,juba2016conditional} and \citet{durgin2019hardness} observed one-sided learning of conjunctions leads to a computational barrier in the distribution-free setting, hence the focus on $k$-DNF subsets in those works. 

            Learning mixtures of sparse models is a topic seemingly related to our problem. Various problems were studies under this topic, some were trying to learn multiple sparse linear models when given model responses \citep{gandikota2020recovery,polyanskii2021learning}, others were focusing on mean recovery with sample access to unknown mixture of sparsely parameterized distributions \citep{pal2022learning,mazumdar2024support}. However, these works were usually conducted in noise-free settings. Recall that the representation class we are considering is a combination of sparse linear predictors and halfspaces, whose classification error is only measured on one side of the halfspaces. If, off the support of the reference class, the distribution is not modeled well by a mixture of classifiers, then there is no guarantee on the quality of the "personalized" prediction we would obtain. Thus, our objective is not captured by learning mixtures of sparse classifiers.
            
    \section{Preliminaries}
    \label{sec:preliminary}
        In this section, we give some necessary math notations, and formally define the personalized prediction scheme as well as the reference class learning problem. In the end, we discuss the \emph{computational hardness} of personalized prediction for both distribution-free and distribution-specific settings.
        \subsection{Mathematical Notations}
        \label{sec:mathematical-notations}
            In general, we use lowercase italic font characters to represent scalars, e.g.\ $\cscalar{x}\in\reals$, lowercase bold italic font characters to represent vectors, e.g.\ $\cvector{x}\in\reals|d|$
            . In particular, subscripts will be used to index the coordinates of any vector, e.g., $\cscalar{x}<i>$ represents the $i$th coordinate of the vector $\cvector{x}$. For random variables, we use lowercase normal font characters to represent random scalars, e.g.\ $\rscalar{x}\in\reals$, and lowercase bold normal font characters to represent random vectors, e.g.\ $\rvector{x}\in\reals|d|$
            . For $\rvector{x}\in\reals|d|$, let $\norm{\rvector{x}}<p> = \sbr*{\sum_{i=1}^d\abs{\rscalar{x}<i>}|p|}^{1/p}$ denote the $l_p$-norm of $\rvector{x}$, and $\rvector*{x} = \rvector{x}/\norm{\rvector{x}}<2>$ denote the normalized vector of $\rvector{x}$. 
            We will use $\innerprod{\rvector{x}}{\cvector{u}}$ to represent the inner product of $\rvector{x}, \cvector{u}\in\reals|d|$, $\rvector{x}|\otimes k|$ to represent the outer product of $\rvector{x}\in\reals|d|$ to the $k$th degree, and $\theta(\cvector{u},\cvector{w})$ to denote the angle between two vectors $\cvector{u},\cvector{w}\in\reals|d|$. 
            
            For any subspace $\cscalar{V}\subseteq\reals|d|$, let $\rvector{x}<\cscalar{V}>$ denote the projection of $\rvector{x}$ onto $\cscalar{V}$. Further, we will write $\cvector{w}|\bot| = \lbr*{\cvector{u}\in\reals|d|\cond \innerprod{\cvector{u}}{\cvector{w}} = 0}$ as the orthogonal space of $\cvector{w}\in\reals|d|$, and, therefore, $\rvector{x}<\cvector{w}|\bot|> = (\identity - \cvector*{w}|\otimes 2|)\rvector{x}$ as the projection of $\rvector{x}\in\reals|d|$ onto $\cvector{w}|\bot|$. For subsets of $\reals|d|$, let $\subsets<1>\isect\subsets<2>$ be the intersection of $\subsets<1>, \subsets<2>$ and $\subsets<1>\union\subsets<2>$ be the union of $\subsets<1>, \subsets<2>$. Meanwhile, we denote $\subsets<1>\backslash\subsets<2> = \lbr*{\rvector{x}\in\reals|d|\cond \rvector{x}\in\subsets<1>, \rvector{x}\notin\subsets<2>}$ and $\subsets* = \lbr*{\rvector{x}\in\reals|d|\cond \rvector{x}\notin\subsets}$ for the set complement operation. 
    
            For probabilistic notation, we use $\distr<\rvector{x}>$ to denote the $1$-dimensional marginal distribution of $\distr$ on the direction $\rvector{x}\in\reals|d|$, $\prob<\rvector{x}\sim\distr>{\rvector{x}\in\subsets}$ to denote the probability of an event $\rvector{x}\in\subsets$, $\expect<\rvector{x}\sim\distr>{\funcsbr{f}[\rvector{x}]}$ to denote the expectation of some statistic $\funcsbr{f}[\rvector{x}]$, and therefore, $\pnorm{\funcsbr{f}[\rvector{x}]}<\cscalar{p}> = \sbr{\expect<\rvector{x}\sim\distr>{\norm{\funcsbr{f}[\rvector{x}]}<\cscalar{p}>|\cscalar{p}|}}|1/\cscalar{p}|$. In particular, for an i.i.d.\ sample $\distr*\sim\distr$, we define the empirical probability and expectation as
            \begin{equation*}
                \prob<\rvector{x}\sim\distr*>{\rvector{x}\in\subsets} = \frac{1}{\abs*{\distr*}}\sum_{\rvector{x}\in\distr*}\indicator[\rvector{x}\in\subsets],\quad\expect<\rvector{x}\sim\distr*>{\funcsbr{f}[\rvector{x}]} = \frac{1}{\abs*{\distr*}}\sum_{\rvector{x}\in\distr*}\funcsbr{f}[\rvector{x}].
            \end{equation*}
            For simplicity of notation, we may drop $\distr$ from the subscript when it is clear from the context, i.e., we may simply write $\prob{\rvector{x}\in\subsets}, \expect{f}$ for $\prob<\rvector{x}\sim\distr>{\rvector{x}\in\subsets}, \expect<\rvector{x}\sim\distr>{f}$. 
    
            We define \textbf{halfspaces} as subsets of $\reals|d|$ as follows. For any $\cscalar{t}\in\reals$ and $\cvector{w}\in\reals|d|$, a $d$-dimensional halfspace with threshold $\cscalar{t}$ and normal vector $\cvector{w}$ is defined as $\hypothesis<\cscalar{t}>[\cvector{w}] = \lbr*{\rvector{x}\in\reals|d|\cond \innerprod{\rvector{x}}{\cvector{w}}\geq t}$ (resp. $\hypothesis*<\cscalar{t}>[\cvector{w}] = \lbr*{\rvector{x}\in\reals|d|\cond \innerprod{\rvector{x}}{\cvector{w}}\leq t}$). For homogeneous halfspaces ($t=0$), we write $\hypothesis[\cvector{w}]$ for $\hypothesis<0>[\cvector{w}]$.

        \subsection{Personalized Prediction And Computational Challenges}
        \label{sec:personalized-prediction-and-computational-challenges}
            Motivated by the observation (at the end of Section \ref{sec:background}) that different populations may have different population-specific risk factors, we consider the following definition of a personalized prediction problem. In this problem, our algorithm is given the attributes of a specific individual that we would like to make a prediction about. The algorithm searches for the population that individual belongs to that yields the most accurate sparse classifier, to use to make our prediction for the individual.
            
            \begin{definition}[Personalized Prediction]
            \label{def:personalized-prediction}
                Let $\distr$ be any probability distribution over $\reals|d|\times\booldomain$, $\conceptclass\subseteq\lbr{c:\reals|d|\rightarrow\booldomain}$ be a class of classifiers, and $\hypothesisclass$ be a collection of subsets of $\reals|d|$. For parameters $\alpha > 0$ and $\epsilon,\delta\in\sbr{0,1}$, the $\alpha$-approximate Personalized Prediction problem is, given $m$ labeled examples drawn from $\distr$ and a query point $\cvector{x} '\in\reals|d|$, to return a pair $(c, S)\in\conceptclass\times\hypothesisclass$ with $\cvector{x} '\in \subsets$ such that with probability $1 - \delta$, for any $(\funcsbr{c}|*|, \subsets|*|)\in\conceptclass\times\hypothesisclass$ with $\cvector{x} '\in \subsets|*|$,
                \begin{equation*}
                    \prob<(\rvector{x},\rscalar{y})\sim\distr>{\funcsbr{c}[\rvector{x}]\neq\rscalar{y}\cond\rvector{x}\in\subsets}\leq\alpha\prob<(\rvector{x},\rscalar{y})\sim\distr>{\funcsbr{c}|*|[\rvector{x}]\neq\rscalar{y}\cond\rvector{x}\in\subsets|*|} + \epsilon.
                \end{equation*}
                If $\alpha = 1$, we simply refer to the problem as Personalized Prediction.
            \end{definition}

            As previously discussed, we choose $\conceptclass$ to be sparse linear classifiers for interpretability. Thus, the choice of $\hypothesisclass$ is crucial for PAC-learnability. Typically, $\hypothesisclass$ is supposed to satisfy some population lower bound, i.e., $\prob{\rvector{x}\in\subsets}\geq\mu$ for every $\subsets\in\hypothesisclass$ and some constant $\mu\in(0,1)$, because otherwise one can easily construct trivial solutions, such as a singleton $\subsets|*|$, to make the selected subsets statistically meaningless. As the first attempt to obtain a distribution-specific PAC-learning guarantee for agnostic personalized prediction, we choose to work with halfspace (subsets), since its distribution-specific agnostic PAC-learnability is well studied \citep{diakonikolas2020learning,diakonikolas2020non,diakonikolas2021agnostic,diakonikolas2022learning,diakonikolas2024efficient}. Even so, it is still challenging to learn (in the setting of Definition \ref{def:personalized-prediction}) this relatively simple representation class without further restrictions.
        
            Without distributional assumptions, there exists a strong computational barrier for achieving even a much weaker version of personalized prediction with $\hypothesisclass$ to be halfspaces. Suppose, in Definition \ref{def:personalized-prediction}, the classifier class consists of a single classifier that makes no error on some subset in the subset class, then personalized prediction is equivalent to learning a \emph{reference class} \citep{juba2016learning,pmlr-v89-hainline19a}.

            \begin{definition}[Reference Class]
            \label{def:reference-class}
                 Let $\distr$ be any probability distribution over $\reals|d|\times\booldomain$ and $\hypothesisclass$ be a collection of subsets of $\reals|d|$. For parameters $\epsilon,\delta\in\sbr{0,1}$, the Reference Class learning problem is, given $m$ labeled examples drawn from $\distr$ and a query point $\cvector{x} '\in\reals|d|$, to return a subset $\subsets\in\hypothesisclass$ with $\cvector{x} '\in \subsets$ such that $\prob<(\rvector{x},\rscalar{y})\sim\distr>{\rscalar{y} = 1\cond\rvector{x}\in\subsets}\geq 1  - \epsilon$ with probability $1 - \delta$.
            \end{definition}

            Unfortunately, \citet{juba2020more} showed that any $\hypothesisclass$ with the ability to express \emph{conjunctions} (ANDs of Boolean literals) cannot be efficiently learned as a reference class. As halfspaces may express conjunctions on $\booldomain|d|$ domain, personalized prediction with halfspace subsets is intractable without distributional assumptions, even in the noiseless setting. Therefore, in the presence of adversarial noise, the use of some niceness assumptions on the attribute marginals seems inevitable.
            
            \begin{table}[t]
                \caption{upper and lower bounds for halfspaces in $\poly[d,1/\opt]$ time for different tasks.}
                \label{tab:common-upper-and-lower-bound}
                \begin{center}
                    \begin{small}
                        \begin{tabular}{ccccc}
                            \toprule
                            Task & Halfspace Type & Distribution & Upper Bound & Lower Bound 
                            \\
                            \midrule
                            Classification & General & Gaussian & $\bigO{\opt}$ & $\opt + \bigM{1/\sqrt{\log d}}$
                            \\
                            Classification & Homogeneous & Well-behaved & $\bigO{\opt}$ & N/A
                            \\
                            Conditional Classification & General & Gaussian & N/A & $\opt + \bigM{1/\sqrt{\log d}}$
                            \\
                            Conditional Classification & Homogeneous & Gaussian & $\bigO*{\sqrt{\opt}}$ & N/A
                            \\
                            \bottomrule
                        \end{tabular}
                    \end{small}
                \end{center}
            \end{table}

            Despite the simplicity of halfspaces in comparison to models, such as neural networks and transformers, it is surprisingly challenging to obtain a descent upper bound for agnostically learning halfspaces even under nice distributions. On the other hand, a recent work by \citet{diakonikolas2023near} presented a distribution-specific \emph{cryptographic} lower bound for learning halfspaces as shown in Table \ref{tab:common-upper-and-lower-bound}. 
            
            Of greater relevance, \citet{huang2025distributionspecific} proved a similar lower bound (see Table \ref{tab:common-upper-and-lower-bound}) for \emph{conditional classification} (cf. Definition \ref{def:conditional-classification}), which resembles personalized prediction in many ways. We present the formal definition of conditional classification here for completeness.
            
            \begin{definition}[Conditional Classification]\label{def:conditional-classification}
                Let $\distr$ be any probability distribution over $\reals|d|\times\booldomain$, $\conceptclass\subseteq\lbr{c:\reals|d|\rightarrow\booldomain}$ be a class of classifiers, and $\hypothesisclass$ be a collection of subsets of $\reals|d|$. For parameters $\alpha > 0$ and $\epsilon,\delta\in\sbr{0,1}$, the $\alpha$-approximate Conditional Classification problem is, given $m$ labeled examples drawn from $\distr$, to return a pair $(c, S)\in\conceptclass\times\hypothesisclass$ such that with probability $1 - \delta$, for any $(\funcsbr{c}|*|, \subsets|*|)\in\conceptclass\times\hypothesisclass$,
                \begin{equation*}
                    \prob<(\rvector{x},\rscalar{y})\sim\distr>{\funcsbr{c}[\rvector{x}]\neq\rscalar{y}\cond\rvector{x}\in\subsets}\leq\alpha\prob<(\rvector{x},\rscalar{y})\sim\distr>{\funcsbr{c}|*|[\rvector{x}]\neq\rscalar{y}\cond\rvector{x}\in\subsets|*|} + \epsilon.
                \end{equation*}
                If $\alpha = 1$, we simply refer to the problem as Conditional Classification.
            \end{definition}
            
            In fact, we prove that personalized prediction is at least as hard as conditional classification.
            \begin{claim}
            \label{cla:reduction-from-cc-to-personalized-prediction}
                There is an efficient reduction from conditional classification to personalized prediction whenever there is a population lower bound on the subset class.
            \end{claim}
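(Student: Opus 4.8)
The plan is to give an instance-wise reduction that embeds a conditional classification instance into a personalized prediction instance by appending one fresh coordinate: in that new coordinate I plant the mandatory query point so that it sits \emph{off the support} of the embedded distribution, and I thicken each candidate subset so that it contains the query point without touching any data point. Because the planted point carries zero probability mass and lies in every subset of the new subset class, the membership clause ``$\cvector{x} '\in\subsets|*|$'' of Definition~\ref{def:personalized-prediction} becomes vacuous while every conditional loss is preserved exactly, so any solver for the reduced personalized prediction instance also solves the original conditional classification instance.

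Concretely, given a conditional classification instance $(\distr,\conceptclass,\hypothesisclass)$ over $\reals|d|\times\booldomain$ with a population lower bound $\prob<\rvector{x}\sim\distr>{\rvector{x}\in\subsets}\geq\mu$ for every $\subsets\in\hypothesisclass$, I would build the personalized prediction instance $(\distr ',\conceptclass ',\hypothesisclass ',\cvector{x} ')$ over $\reals|d+1|\times\booldomain$ by: taking $\distr '$ to be the push-forward of $\distr$ under $(\rvector{x},\rscalar{y})\mapsto\big((\rvector{x},0),\rscalar{y}\big)$; setting $\cvector{x} ':=(\cvector{0}<d>,1)$; letting $\conceptclass ':=\{\,\cvector{z}\mapsto c(\cscalar{z}<1>,\ldots,\cscalar{z}<d>):c\in\conceptclass\,\}$ be the classifiers that ignore the new coordinate; and letting $\hypothesisclass ':=\{\,(\subsets\times\{0\})\cup\{\cvector{x} '\}:\subsets\in\hypothesisclass\,\}$, with $\subsets '$ denoting the member of $\hypothesisclass '$ attached to $\subsets$. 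Converting a size-$m$ sample of $\distr$ into a size-$m$ sample of $\distr '$ and writing out $\cvector{x} ',\conceptclass ',\hypothesisclass '$ can all be done in polynomial time, and no parameter among $m,\epsilon,\delta,\alpha$ changes, so the reduction is efficient.

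The correctness argument has three short steps. (i)~\emph{Loss preservation.} The support of $\distr '$ is contained in $\reals|d|\times\{0\}$ while $\cvector{x} '\notin\reals|d|\times\{0\}$, so $\{\cvector{x} '\}$ is $\distr '$-null; hence for every $\subsets\in\hypothesisclass$ and every $c\in\conceptclass$ with lift $c'$ we get $\prob<\rvector{z}\sim\distr '>{\rvector{z}\in\subsets '}=\prob<\rvector{x}\sim\distr>{\rvector{x}\in\subsets}\geq\mu$ and $\prob<(\rvector{z},\rscalar{y})\sim\distr '>{c'(\rvector{z})\neq\rscalar{y}\cond\rvector{z}\in\subsets '}=\prob<(\rvector{x},\rscalar{y})\sim\distr>{\funcsbr{c}[\rvector{x}]\neq\rscalar{y}\cond\rvector{x}\in\subsets}$. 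Thus the population lower bound $\mu$ transfers to $\hypothesisclass '$ (so all conditional probabilities in Definition~\ref{def:personalized-prediction} are well-defined), and $c\mapsto c'$ and $\subsets\mapsto\subsets '$ are loss-preserving bijections onto $\conceptclass '$ and $\hypothesisclass '$. (ii)~\emph{The membership clause is vacuous.} By construction $\cvector{x} '\in\subsets '$ for every $\subsets '\in\hypothesisclass '$, so the guarantee in Definition~\ref{def:personalized-prediction} is effectively quantified over all of $\conceptclass '\times\hypothesisclass '$. (iii)~\emph{Pullback.} Run the assumed $\alpha$-approximate personalized prediction algorithm on $(\distr ',\conceptclass ',\hypothesisclass ',\cvector{x} ')$ to obtain some $(c',\subsets ')$, and output its preimage $(c,\subsets)\in\conceptclass\times\hypothesisclass$. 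Applying the bijections of~(i) to the personalized prediction guarantee — which by~(ii) holds against every $(\funcsbr{c}|*|,\subsets|*|)\in\conceptclass '\times\hypothesisclass '$ — yields exactly $\prob<(\rvector{x},\rscalar{y})\sim\distr>{\funcsbr{c}[\rvector{x}]\neq\rscalar{y}\cond\rvector{x}\in\subsets}\leq\alpha\prob<(\rvector{x},\rscalar{y})\sim\distr>{\funcsbr{c}|*|[\rvector{x}]\neq\rscalar{y}\cond\rvector{x}\in\subsets|*|}+\epsilon$ for all $(\funcsbr{c}|*|,\subsets|*|)\in\conceptclass\times\hypothesisclass$, which is precisely a valid output for $\alpha$-approximate conditional classification (Definition~\ref{def:conditional-classification}).

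The reduction is short, so there is no deep obstacle; the only step requiring real care is the probabilistic book-keeping in~(i) — verifying that no candidate subset becomes $\distr '$-null (this is exactly where the population lower bound $\mu$ is used, giving $\prob<\rvector{z}\sim\distr '>{\rvector{z}\in\subsets '}=\prob<\rvector{x}\sim\distr>{\rvector{x}\in\subsets}\geq\mu$) and that appending the off-support query point leaves every conditional probability intact. I would round this off with a remark: for \emph{homogeneous} halfspace subsets the construction degenerates nicely, since $\cvector{0}<d>$ belongs to every homogeneous halfspace, so one may simply take $\cvector{x} ':=\cvector{0}<d>$ with no dimension lift; and, more generally, to keep $\hypothesisclass '$ inside the halfspace family — which is what one needs to carry distribution-specific hardness of conditional classification with halfspace subsets over to personalized prediction — one replaces the stray singleton by thickening $\subsets=\{\,\rvector{x}:\innerprod{\rvector{x}}{\cvector{w}}\geq\cscalar{t}\,\}$ to the halfspace $\{\,(\rvector{x},s):\innerprod{\rvector{x}}{\cvector{w}}+s\,\cscalar{w}<d+1>\geq\cscalar{t}\,\}$ with $\cscalar{w}<d+1>>0$ chosen large enough that $(\cvector{0}<d>,1)$ lies inside; the same loss-preservation argument then goes through.
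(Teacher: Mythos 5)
Your reduction is correct as a proof of the bare claim, but it takes a genuinely different route from the paper, and the difference matters for what the claim is used for. The paper's reduction keeps the distribution, dimension, and hypothesis class completely fixed: it draws $O(1/\mu)$ fresh points from $\distr$ (so that, by the population lower bound, at least one lands inside the optimal subset $\subsets|*|$ with high probability), invokes the personalized prediction oracle on the \emph{original} $(\distr,\conceptclass,\hypothesisclass)$ once per sampled point, and returns the best classifier--subset pair by an empirical validation pass. There, the population lower bound $\mu$ is doing real work --- it controls the number of oracle calls --- whereas in your argument it appears only passively, to keep the conditional probabilities well-defined. Your reduction instead trades the randomness over queries for a dimension lift plus a degenerate distribution: $\distr'$ is supported on the hyperplane $\cscalar{s}=0$ in $\reals|d+1|$, the subset class is replaced by lifted or thickened sets, and the planted query sits off-support.

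That trade is where the gap opens up for the paper's intended application. Claim~\ref{cla:reduction-from-cc-to-personalized-prediction} is invoked immediately afterward to transfer the cryptographic lower bound for conditional classification with general halfspaces under Gaussian $\rvector{x}$-marginals (Table~\ref{tab:common-upper-and-lower-bound}) to personalized prediction \emph{with the same halfspace family and the same distributional assumption}. Your lifted $\distr'$ has zero variance along the new coordinate, so it is neither Gaussian nor well-behaved in the sense of Definition~\ref{def:well-behaved-distributions} (it violates $\cscalar{L}$-anti-anti-concentration along $\cvector{e}_{d+1}$, for instance); and even your halfspace-preserving variant only populates a constrained subfamily of halfspaces in $\reals|d+1|$. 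A personalized prediction algorithm whose guarantees are stated for Gaussian or well-behaved marginals over the full halfspace class is therefore under no obligation to behave sensibly on the instance you hand it, and the hardness conclusion does not follow. The paper's same-distribution, same-class, $O(1/\mu)$-queries construction is what makes the transfer go through. Your argument is a clean proof of the abstract statement, but it should be presented with the caveat that it does not, on its own, support the distributional hardness argument the claim is there to justify.
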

            \begin{proof}
                With a population lower bound $\mu\in(0,1)$, we may obtain an example inside the optimal subset of the conditional classification instance with high probability by sampling $\bigO{1/\mu}$ points. By using these points as the observations and taking the best reference class as our output, solving the personalized prediction problem for the same hypothesis classes enables us to efficiently solve the conditional classification instance.
            \end{proof}
            Therefore, the lower bound for conditional classification shown in Table \ref{tab:common-upper-and-lower-bound} suggest potential computational barriers for learning general halfspace subsets in personalized prediction even under Gaussian distributions. Other problems with a similar structure, which require models of sub-populations defined by halfspaces, often exhibit comparable or even stronger hardness \citep{hsu2024distribution}. These observations motivate us to consider personalized prediction with a subset class that is strictly simpler than general halfspaces, i.e., homogeneous halfspaces, under nice distributions.

    \section{Learning Of Homogeneous Halfspace Reference Class}
    \label{sec:learning-of-homogeneous-halfspace-reference-class}
        In this section, we present our learning algorithms for homogeneous halfspaces reference classes under any distribution with well-behaved $\rvector{x}$-marginals (see Appendix \ref{sec:well-behaved-distributions} for formal definitions). Noticeably, these learning algorithms will be used as subroutines in the personalized prediction scheme introduced in Section \ref{sec:application-personalized-prediction}.
        
        \textbf{Well-Behaveness:} Informally speaking, the family of \emph{well-behaved distributions} must satisfy the following properties: every low-dimensional marginal of a the distribution must have sub-exponential tail, density bounds, low-degree moment upper bounds, and every halfspace containing the distribution mean must have non-negligible probability mass. The well-behaved family is a natural generalization of many common distributions, such as uniform, Gaussian, and a wide variety of log-concave distributions \citep{lovasz2007geometry,diakonikolas2020non}. For completeness, we prove a few instances in Appendix \ref{sec:well-behaved-distributions}. Note that the parameters of these distributional properties only matters when proving the fully parameterized theorems, which are presented in the appendix. For better clarity, we suppress the distribution related parameters in the main body of this paper, as they will not affect our guarantees asymptotically.

        While directly optimizing the target loss $\prob{\rscalar{y}= 1\cond \rvector{x}\in\hypothesis[\cvector{w}]}$ is hard in general, \citet{huang2025distributionspecific} showed there exists a simple convex surrogate approximation to this kind of target loss that may approximately captures the optimal solution, i.e., $\loss<\distr>[\cvector{w}] = \expect{\rscalar{y}\cdot\max\sbr*{0, \innerprod{\rvector{x}}{\cvector{w}}}}$. Even though our objective functions are the same, we further require the resulting halfspace $\hypothesis[\cvector{w}]$ to contain the query point $\cvector{x}$. Interestingly, we show that a few tweaks on the gradient descent algorithms given in \citet{huang2025distributionspecific} can guarantee $\cvector{x}\in\hypothesis[\cvector{w}]$ with the same performance.

        \subsection{Algorithm Overview}
            Overall, Algorithm \ref{algo:learning-reference-class} consists of both pre-processing and post-processing for Algorithm \ref{algo:pgd-with-contractive-projection}, while Algorithm \ref{algo:pgd-with-contractive-projection} is our main learning algorithm for homogeneous halfspace reference classes. 
     
               \begin{algorithm}[ht]
                    \caption{Learning Reference Class}
                    \label{algo:learning-reference-class}
                    \begin{algorithmic}[1]
                        \PROCEDURE{RefClass}{$\distr, \epsilon, \delta, \cvector{x}$}
                            \STATE $\cscalar{T}\gets \bigO{\cscalar{\epsilon}|-5/4|}$
                            \STATE $\lambda\gets \bigO{\cscalar{\epsilon}|3/4|}$
                            \STATE $\distr*<1>\gets$ $\bigO*{\cscalar{\epsilon}|-1|} $-sample from $\distr$ with negated labels
                            \STATE $\parameterset{W}\gets\text{\scshape{ProjectedGD}}\sbr*{\distr*<1>,T,\lambda,\cvector{x}}$
                            \STATE $\distr*<2>\gets$ $\bigO*{\cscalar{\epsilon}|-1/2|}$-sample from $\distr$
                            \STATE $\cvector{w}|*|\gets\max_{\cvector{w}\in\parameterset{W}}\prob<\distr*<2>>{\rscalar{y} = 1\cond \rvector{x}\in\hypothesis[\cvector{w}]}$
                            \STATE \textbf{return} $\cvector{w}|*|$
                        \ENDPROCEDURE
                    \end{algorithmic}
               \end{algorithm}

               Notably, the training set $\distr*<1>$ is sampled from $\distr$ with \emph{negated labels} because Algorithm \ref{algo:pgd-with-contractive-projection} is designed to solve minimization problems. Negating the labels allows us to equivalently minimize $\prob{\rscalar{y}=0\cond\rvector{x}\in\hypothesis[\cvector{w}]}$ instead of maximizing $\prob{\rscalar{y}=1\cond\rvector{x}\in\hypothesis[\cvector{w}]}$. Given that Algorithm \ref{algo:pgd-with-contractive-projection} returns a list of halfspaces, one of which is guaranteed to have $\prob{\rscalar{y} = 1\cond\rvector{x}\in\hypothesis[\cvector{w}]} = 1 - \bigO{\cscalar{\epsilon}|1/4|}$, we sample a validation set $\distr*<2>$ to select a good halfspace from the list. Inspired by \citet{huang2025distributionspecific}, our Algorithm \ref{algo:pgd-with-contractive-projection} uses the projected gradient $\funcsbr{g}<\cvector{w}>[\rvector{x},\rscalar{y}] = \rscalar{y}\cdot\rvector{x}<\cvector{w}|\bot|>\indicator[\rvector{x}\in\hypothesis[\cvector{w}]]$ to update the normal vector $\cvector{w}$. Also motivated by \citet{diakonikolas2022learning}, we show that our Algorithm \ref{algo:pgd-with-contractive-projection} is guaranteed to return at least one good halfspace through an \emph{angle contraction} analysis next.
               
               \begin{algorithm}[ht]
                    \caption{PGD With Contractive Projection}
                    \label{algo:pgd-with-contractive-projection}
                    \begin{algorithmic}[1]
                        \PROCEDURE{ProjectedGD}{$\distr*, \cscalar{T}, \cscalar{\lambda}, \cvector{x}$}
                            \STATE $\cvector{w}(0)\gets\cvector*{x}$
                            \FOR{$i = 1, \ldots, T$}
                                \STATE $\cvector{u}(i)\gets \cvector{w}(i - 1) - \lambda \expect<(\rvector{x},\rscalar{y})\sim\distr*>{\funcsbr{g}<\cvector{w}(i-1)>[\rvector{x},\rscalar{y}]}$
                                \IF{$\innerprod{\cvector{u}(i)}{\cvector{x}} < 0$}
                                    \STATE $\cvector{w}(i)\gets \cvector*{u}<\cvector{x}|\bot|>(i)$
                                \ELSE
                                    \STATE $\cvector{w}(i)\gets \cvector*{u}(i)$
                                \ENDIF
                            \ENDFOR
                            \STATE \textbf{return} $\sbr*{\cvectorseq{w}[][0](T)}$
                        \ENDPROCEDURE
                    \end{algorithmic}                
               \end{algorithm}
    
        \subsection{Performance Analysis}
            We now state our main theorem for Algorithm \ref{algo:learning-reference-class}, but postpone the formal proof to Appendix \ref{sec:analysis-of-learning-reference-class}. Notice that {\scshape{RefClass}} (cf. Algorithm \ref{algo:learning-reference-class}) is actually no more than a wrapper of {\scshape{ProjectedGD}} (cf. Algorithm \ref{algo:pgd-with-contractive-projection}) with some empirical estimates. Therefore, we  focus on analyzing Algorithm \ref{algo:pgd-with-contractive-projection} here. 
            
            \begin{theorem}\label{thm:main-theorem}
                Let $\distr$ be any distribution on $\reals|d|\times\booldomain$ with centered well-behaved $\rvector{x}$-marginal and $\cvector{x}\in\reals|d|$ be an query. If there exists a unit vector $\cvector{v}\in\reals|d|$ such that $\cvector{x}\in\hypothesis[\cvector{v}]$ and $\prob{\rscalar{y} = 1\cond \rvector{x}\in\hypothesis[\cvector{v}]}\geq 1 - \epsilon$, then, with at most $\bigO*{\cscalar{\epsilon}|-1|}$ examples, Algorithm \ref{algo:learning-reference-class} runs in time at most $\bigO*{d\cscalar{\epsilon}|-9/4|}$ and returns a $\cvector{w}|*|$ such that $\cvector{x}\in\hypothesis[\cvector{w}|*|]$ and $\prob{\rscalar{y} = 1\cond \rvector{x}\in\hypothesis[\cvector{w}|*|]} = 1 - \bigO{\cscalar{\epsilon}|1/4|}$ w.h.p. 
            \end{theorem}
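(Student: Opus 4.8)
The plan is to reduce the guarantee for \textsc{RefClass} (Algorithm~\ref{algo:learning-reference-class}) to a list-type guarantee for \textsc{ProjectedGD} (Algorithm~\ref{algo:pgd-with-contractive-projection}): I will show that every iterate $\cvector{w}(0),\dots,\cvector{w}(\cscalar{T})$ produced by \textsc{ProjectedGD} satisfies $\cvector{x}\in\hypothesis[\cvector{w}(i)]$, and that at least one of them has $\prob{\rscalar{y}=1\cond\rvector{x}\in\hypothesis[\cvector{w}(i)]}\ge 1-\bigO{\cscalar{\epsilon}|1/4|}$. Membership is a loop invariant: $\innerprod{\cvector{w}(i)}{\cvector{x}}\ge 0$ holds at $i=0$ since $\cvector{w}(0)=\cvector*{x}$, and the conditional projection step forces $\innerprod{\cvector{w}(i)}{\cvector{x}}=0$ precisely when the plain gradient step would have driven it negative, so $\innerprod{\cvector{w}(i)}{\cvector{x}}\ge 0$ persists, which under our $\ge$-convention for homogeneous halfspaces means $\cvector{x}\in\hypothesis[\cvector{w}(i)]$. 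Given the list, the wrapper is routine: \textsc{RefClass} runs \textsc{ProjectedGD} on the label-negated sample $\distr*<1>$, so that minimizing $\prob{\rscalar{y}=1\cond\rvector{x}\in\hypothesis[\cvector{w}]}$ in the negated instance is the same as maximizing it in the original one, and then the validation step over the fresh sample $\distr*<2>$ of size $\bigO*{\cscalar{\epsilon}|-1/2|}$, combined with a union bound over the $\cscalar{T}+1=\bigO{\cscalar{\epsilon}|-5/4|}$ candidates and the well-behavedness fact that every homogeneous halfspace containing the mean has mass $\bigM{1}$ (so each conditional label probability concentrates to additive error $\bigO*{\cscalar{\epsilon}|1/4|}$ on that many points), selects a $\cvector{w}|*|$ whose conditional error is within $\bigO{\cscalar{\epsilon}|1/4|}$ of the best in the list. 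So it remains to analyze \textsc{ProjectedGD}.

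For \textsc{ProjectedGD} I will run a distance-based (equivalently, angle-contraction) argument with the potential $\Phi_i=\norm{\cvector{w}(i)-\cvector{v}}<2>|2|=2\sbr*{1-\innerprod{\cvector{w}(i)}{\cvector{v}}}$ on $\sphere|d-1|$. The first ingredient is that the projection-and-normalization step never increases the potential relative to the pre-projection vector $\cvector{u}(i)$. Since $\funcsbr{g}<\cvector{w}>[\rvector{x},\rscalar{y}]=\rscalar{y}\cdot\rvector{x}<\cvector{w}|\bot|>\indicator[\rvector{x}\in\hypothesis[\cvector{w}]]$ is pointwise orthogonal to $\cvector{w}$, we have $\norm{\cvector{u}(i)}<2>\ge 1$; and when $\innerprod{\cvector{u}(i)}{\cvector{x}}<0$, combining this orthogonality with the loop invariant $\innerprod{\cvector{w}(i-1)}{\cvector{x}}\ge 0$ gives $\abs{\innerprod{\cvector{u}(i)}{\cvector*{x}}}\le\lambda\norm{\expect{\funcsbr{g}<\cvector{w}(i-1)>}}<2>$, hence the component of $\cvector{u}(i)$ orthogonal to $\cvector{x}$ also has norm $\ge 1$. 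Since metric projection onto the halfspace $\lbr*{\cvector{w}\cond\innerprod{\cvector{w}}{\cvector{x}}\ge 0}$ is non-expansive and fixes $\cvector{v}$ (as $\cvector{x}\in\hypothesis[\cvector{v}]$), and since renormalizing any vector of norm $\ge 1$ onto $\sphere|d-1|$ only moves it toward an arbitrary unit vector, in both branches $\norm{\cvector{w}(i)-\cvector{v}}<2>\le\norm{\cvector{u}(i)-\cvector{v}}<2>$.

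The second ingredient is that the gradient step strictly decreases the potential while the current iterate is suboptimal. Using $\innerprod{\expect{\funcsbr{g}<\cvector{w}(i-1)>}}{\cvector{w}(i-1)}=0$ we expand $\norm{\cvector{u}(i)-\cvector{v}}<2>|2|=\Phi_{i-1}+2\lambda\innerprod{\expect{\funcsbr{g}<\cvector{w}(i-1)>}}{\cvector{v}}+\lambda^2\norm{\expect{\funcsbr{g}<\cvector{w}(i-1)>}}<2>|2|$; the well-behaved density and moment bounds give $\norm{\expect{\funcsbr{g}<\cvector{w}>}}<2>=\bigO{1}$, and Lemma~\ref{lma:gradient-projection-lower-bound} gives $-\innerprod{\expect{\funcsbr{g}<\cvector{w}(i-1)>}}{\cvector{v}}\ge\bigM{\cscalar{\epsilon}|1/2|}$ in every step where $\cvector{w}(i-1)$ is still $\bigM{\cscalar{\epsilon}|1/4|}$-suboptimal in conditional error; choosing $\lambda=\bigO{\cscalar{\epsilon}|3/4|}$ small enough that the $\lambda^2$ term is dominated yields $\Phi_i\le\Phi_{i-1}-\bigM{\cscalar{\epsilon}|5/4|}$. (Running on the empirical $\distr*<1>$ rather than on $\distr$ only adds a uniform-convergence error of the surrogate gradient over $\sphere|d-1|$, controlled by $\bigO*{\cscalar{\epsilon}|-1|}$ samples via the sub-exponential tail bound.) Since $\Phi_0=\bigO{1}$ and $\Phi_i\ge 0$ throughout, each step in the suboptimal regime costs $\bigM{\cscalar{\epsilon}|5/4|}$ of potential, so some iterate among the first $\bigO{\cscalar{\epsilon}|-5/4|}$ must leave it, and the choice $\cscalar{T}=\bigO{\cscalar{\epsilon}|-5/4|}$ guarantees an iterate $\cvector{w}(i)$, $i\le\cscalar{T}$, with $\prob{\rscalar{y}=1\cond\rvector{x}\in\hypothesis[\cvector{w}(i)]}\ge 1-\bigO{\cscalar{\epsilon}|1/4|}$; translating surrogate/distance suboptimality back into conditional error here reuses the well-behaved density bounds on the relevant halfspace, adapting the surrogate-approximation argument of \citet{huang2025distributionspecific}. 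Combining with the wrapper analysis gives the claimed output, and the running time is $\cscalar{T}$ iterations of an empirical gradient over $\bigO*{\cscalar{\epsilon}|-1|}$ points in $\reals|d|$, i.e.\ $\bigO*{d\cscalar{\epsilon}|-9/4|}$, at sample cost $\bigO*{\cscalar{\epsilon}|-1|}$ (dominated by $\distr*<1>$).

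The main obstacle is Lemma~\ref{lma:gradient-projection-lower-bound}: showing that $-\innerprod{\expect{\funcsbr{g}<\cvector{w}>}}{\cvector{v}}$ is polynomially bounded below by the conditional-error suboptimality \emph{for the entire well-behaved family} and not only for Gaussians --- which is exactly where the $\cscalar{\epsilon}|1/4|$ rate (rather than the sharper rate obtained under Gaussians by \citet{huang2025distributionspecific}) originates. This requires carefully combining the anti-concentration/density lower bound, the low-degree moment upper bounds, and the mean-containing-halfspace mass bound, and it must be carried out compatibly with the contractive projection, so that enforcing $\cvector{x}\in\hypothesis[\cvector{w}(i)]$ does not break the per-step descent estimate.
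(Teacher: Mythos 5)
Your overall strategy — establish membership $\cvector{x}\in\hypothesis[\cvector{w}(i)]$ as a loop invariant, show the projection/normalization is non-expansive toward $\cvector{v}$, show the gradient step makes progress, and close with a validation step over $\distr*<2>$ — matches the paper's, and your handling of the contractive projection (decompose into metric projection onto the halfspace plus renormalization of a vector of norm $\geq 1$, both non-expansive toward any unit vector in the halfspace) is a slick alternative to the paper's explicit vector-decomposition proof of Lemma~\ref{lma:contractive-projection}. But the quantitative per-step accounting has a gap that, as written, fails to establish the $\bigO{\cscalar{\epsilon}|-5/4|}$ iteration bound and hence the $\bigO*{d\cscalar{\epsilon}|-9/4|}$ running time claimed in the theorem.

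You work with the potential $\Phi_i = 2\sbr*{1 - \innerprod{\cvector{w}(i)}{\cvector{v}}}$ and claim that Lemma~\ref{lma:gradient-projection-lower-bound} gives $-\innerprod{\expect{\funcsbr{g}<\cvector{w}(i-1)>}}{\cvector{v}}\geq\bigM{\sqrt{\epsilon}}$. That is not what the lemma says: it bounds $\innerprod{\expect{-\funcsbr{g}<\cvector{w}>}}{\cvector*{v}<\cvector{w}|\bot|>}\geq\sqrt{\epsilon}$ against the \emph{unit} vector $\cvector*{v}<\cvector{w}|\bot|>$. Since $\expect{\funcsbr{g}<\cvector{w}>}\perp\cvector{w}$ and $\cvector{v}<\cvector{w}|\bot|>$ has norm $\funcsbr{\sin}[\theta(\cvector{w},\cvector{v})]$, the quantity you actually need satisfies only $-\innerprod{\expect{\funcsbr{g}<\cvector{w}>}}{\cvector{v}} = \funcsbr{\sin}[\theta]\cdot\innerprod{\expect{-\funcsbr{g}<\cvector{w}>}}{\cvector*{v}<\cvector{w}|\bot|>}\geq\funcsbr{\sin}[\theta]\cdot\sqrt{\epsilon}$. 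In the ``still suboptimal'' regime, $\theta$ can be as small as $\Theta(\cscalar{\epsilon}|1/4|)$ — that threshold is exactly what the wedge-upper-bound lemma (Lemma~\ref{lma:wedge-upper-bound-appendix}, which your sketch relies on implicitly to pass from small angle to small conditional error) guarantees — so your per-step decrease in $\Phi$ is only $\bigM{\funcsbr{\sin}[\theta]\cdot\cscalar{\epsilon}|5/4|}=\bigM{\cscalar{\epsilon}|3/2|}$ near that boundary, not the uniform $\bigM{\cscalar{\epsilon}|5/4|}$ you assert. With $\Phi_0 = \bigO{1}$, the naive ``divide budget by per-step decrease'' argument then yields only $\bigO{\cscalar{\epsilon}|-3/2|}$ iterations, which would blow the running time up to $\bigO*{d\cscalar{\epsilon}|-5/2|}$ and does not match Algorithm~\ref{algo:learning-reference-class}'s choice $\cscalar{T}=\bigO{\cscalar{\epsilon}|-5/4|}$.

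The two standard fixes are (a) a Bihari/differential-inequality argument: since $\funcsbr{\sin}[\theta]\approx\sqrt{\Phi}$, the refined recursion $\Phi_i\leq\Phi_{i-1}-\bigM{\sqrt{\Phi_{i-1}}\,\cscalar{\epsilon}|5/4|}$ integrates to $\sqrt{\Phi_i}\leq\sqrt{\Phi_0}-\bigM{i\cdot\cscalar{\epsilon}|5/4|}$, recovering $\bigO{\cscalar{\epsilon}|-5/4|}$; or (b) the paper's route, which tracks the angle $\theta(\cvector{w}(i),\cvector{v})$ directly rather than the squared chord, and proves a \emph{uniform} decrease $\theta_{i-1}-\theta_i\geq\cscalar{\kappa}|2|\phi/64=\bigM{\cscalar{\epsilon}|5/4|}$ per step via Lemma~\ref{lma:angle-contraction} (using \citet{diakonikolas2020polynomial}'s correlation-improvement lemma as the intermediate step), after which the iteration bound is immediate from $\theta_0\leq\pi/2$. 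Without one of these, your potential argument is off by an $\cscalar{\epsilon}|1/4|$ factor in the iteration count, and the theorem's stated time and sample bounds do not follow.
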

    
            Prior to the detailed analysis, we sketch the main proof idea as follows. It can be shown that the \emph{gradient step} (Line 4 of Algorithm \ref{algo:pgd-with-contractive-projection}) decreases the angle between the optimal normal vector $\cvector{v}$ and the algorithm's ``guess'' $\cvector{w}$ by a fixed amount in each iteration of Algorithm \ref{algo:pgd-with-contractive-projection} as long as the halfspace $\hypothesis[\cvector{w}]$ is far from optimal. This implies that, with a small number of iterations, the output of Algorithm \ref{algo:pgd-with-contractive-projection} will contain at least one halfspace of low error. Then, we can use this guarantee of Algorithm \ref{algo:pgd-with-contractive-projection} to show the optimality of Algorithm \ref{algo:learning-reference-class} with a simple label mapping and empirical risk estimation. 
            
            As a key property to ensure \emph{angle contraction} for each gradient step, we observed that the projected gradient $\expect{-\funcsbr{g}<\cvector{w}>[\rvector{x},\rscalar{y}]}$ always approximately ``points'' at the right direction or, in another word, the projected gradient has non-negligible correlation with the optimal normal vector $\cvector{v}$ if $\cvector{w}$ is significantly sub-optimal. In particular, \citet{huang2025distributionspecific} proved the same property under Gaussian $\rvector{x}$-marginals, we show that slightly worse guarantee holds under well-behaved $\rvector{x}$-marginals.
            
            \begin{lemma}[Gradient Projection Lower Bound]
            \label{lma:gradient-projection-lower-bound}
                Let $\distr$ be any distribution on $\reals|d|\times\booldomain$ with centered well-behaved $\rvector{x}$-marginal, and $\funcsbr{g}<\cvector{w}>[\rvector{x}, \rscalar{y}] = \rscalar{y}\cdot\rvector{x}<\cvector{w}|\bot|>\indicator[\rvector{x}\in\hypothesis[\cvector{w}]]$. Suppose there exists a unit vector $\cvector{v}\in\reals|d|$ that satisfies $\prob{\rscalar{y} = 1 \cond \rvector{x}\in\hypothesis[\cvector{v}]} \leq \epsilon$, then, for every unit vector $\cvector{w}\in\reals|d|$ such that $\theta(\cvector{v}, \cvector{w})\in[0,\pi/2)$ and $\prob{\rscalar{y} = 1 \cond \rvector{x}\in\hypothesis[\cvector{w}]}\geq \bigM{\cscalar{\epsilon}|1/4|}$, there is $\innerprod{\expect{-\funcsbr{g}<\cvector{w}>[\rvector{x},\rscalar{y}]}}{\cvector*{v}<\cvector{w}|\bot|>} \geq \sqrt{\epsilon}$.
            \end{lemma}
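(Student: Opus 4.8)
The plan is to reduce the stated bound to an ``aligned'' inequality that involves $\mathbf{v}$ directly, and then extract the claimed $\sqrt{\epsilon}$ from the gap between the two mass hypotheses with an elementary layer-cake estimate. First I would unfold the left-hand side: since $\mathbf{x}_{\mathbf{w}^{\perp}}$ and $\bar{\mathbf{v}}_{\mathbf{w}^{\perp}}$ both lie in $\mathbf{w}^{\perp}$ and $y\in\{0,1\}$, we get $\langle \mathbb{E}[-g_{\mathbf{w}}[\mathbf{x},y]],\bar{\mathbf{v}}_{\mathbf{w}^{\perp}}\rangle = -\mathbb{E}\big[\langle \mathbf{x},\bar{\mathbf{v}}_{\mathbf{w}^{\perp}}\rangle\,\mathbf{1}[y=1,\ \mathbf{x}\in h[\mathbf{w}]]\big]$. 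The hypotheses force $\theta(\mathbf{v},\mathbf{w})\in(0,\pi/2)$: if $\theta(\mathbf{v},\mathbf{w})=0$ then $\mathbf{v}=\mathbf{w}$, so $\Pr[y=1\mid \mathbf{x}\in h[\mathbf{w}]]\le\epsilon$, contradicting $\Pr[y=1\mid \mathbf{x}\in h[\mathbf{w}]]\ge\Omega(\epsilon^{1/4})$ once $\epsilon$ is below a constant. Hence $\mathbf{v}=\cos\theta\,\mathbf{w}+\sin\theta\,\bar{\mathbf{v}}_{\mathbf{w}^{\perp}}$ (writing $\theta=\theta(\mathbf{v},\mathbf{w})$), so $\langle \mathbf{x},\bar{\mathbf{v}}_{\mathbf{w}^{\perp}}\rangle=\tfrac{1}{\sin\theta}\big(\langle \mathbf{x},\mathbf{v}\rangle-\cos\theta\,\langle \mathbf{x},\mathbf{w}\rangle\big)$. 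Substituting, and noting that on $h[\mathbf{w}]$ the quantity $\cos\theta\,\langle\mathbf{x},\mathbf{w}\rangle$ is nonnegative while $1/\sin\theta\ge1$, it suffices to prove the aligned bound $-\mathbb{E}\big[\langle \mathbf{x},\mathbf{v}\rangle\,\mathbf{1}[y=1,\ \mathbf{x}\in h[\mathbf{w}]]\big]\ge\sqrt{\epsilon}$.

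To prove the aligned bound I would split the expectation according to the sign of $\langle\mathbf{x},\mathbf{v}\rangle$, writing it as $(\mathrm{I})-(\mathrm{II})$ with $(\mathrm{I})=\mathbb{E}\big[(-\langle\mathbf{x},\mathbf{v}\rangle)\,\mathbf{1}[y=1,\ \mathbf{x}\in h[\mathbf{w}],\ \langle\mathbf{x},\mathbf{v}\rangle\le 0]\big]\ge0$ and $(\mathrm{II})=\mathbb{E}\big[\langle\mathbf{x},\mathbf{v}\rangle\,\mathbf{1}[y=1,\ \mathbf{x}\in h[\mathbf{w}],\ \langle\mathbf{x},\mathbf{v}\rangle\ge 0]\big]\ge0$. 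For $(\mathrm{II})$, the $y=1$ probability mass of $\{\mathbf{x}\in h[\mathbf{w}],\ \langle\mathbf{x},\mathbf{v}\rangle\ge0\}$ is at most that of $h[\mathbf{v}]$, which is at most $\epsilon$ by hypothesis; truncating $\langle\mathbf{x},\mathbf{v}\rangle$ at $R=\Theta(\log(1/\epsilon))$ and using the sub-exponential tail of the one-dimensional marginal $\mathcal{D}_{\mathbf{v}}$ gives $(\mathrm{II})\le R\epsilon+\mathbb{E}[\langle\mathbf{x},\mathbf{v}\rangle\,\mathbf{1}[\langle\mathbf{x},\mathbf{v}\rangle>R]]=O(\epsilon\log(1/\epsilon))$.

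For the lower bound on $(\mathrm{I})$: since the marginal is centered, $h[\mathbf{w}]$ contains the mean and so has probability at least the well-behavedness mass constant $\mu$, whence the $y=1$ mass of $h[\mathbf{w}]$ is at least $\mu\cdot\Omega(\epsilon^{1/4})$; subtracting the $y=1$ mass of $h[\mathbf{v}]$ (which is $\le\epsilon$) shows the $y=1$ mass $m_1$ of $\{\mathbf{x}\in h[\mathbf{w}],\ \langle\mathbf{x},\mathbf{v}\rangle\le 0\}$ is still $\Omega(\epsilon^{1/4})$. On that set $-\langle\mathbf{x},\mathbf{v}\rangle\ge0$, so for every $\tau>0$ we have $(\mathrm{I})\ge\tau\big(m_1-\Pr[-\tau\le\langle\mathbf{x},\mathbf{v}\rangle\le 0]\big)\ge\tau(m_1-U\tau)$, where $U$ is the density upper bound for $\mathcal{D}_{\mathbf{v}}$; optimizing at $\tau=m_1/(2U)$ yields $(\mathrm{I})\ge m_1^2/(4U)=\Omega(\sqrt{\epsilon})$, with a constant that grows quadratically in the hidden constant of the $\Omega(\epsilon^{1/4})$ hypothesis. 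Combining, the aligned bound reads $(\mathrm{I})-(\mathrm{II})\ge\Omega(\sqrt{\epsilon})-O(\epsilon\log(1/\epsilon))\ge\sqrt{\epsilon}$, provided the hidden constant in $\Pr[y=1\mid\mathbf{x}\in h[\mathbf{w}]]\ge\Omega(\epsilon^{1/4})$ is taken large enough and $\epsilon$ is below a suitable constant.

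I expect the main obstacle to be the bookkeeping of constants and logarithmic factors so that the final guarantee is \emph{exactly} $\sqrt{\epsilon}$ rather than only $\Omega(\sqrt{\epsilon}/\mathrm{polylog})$; the reason this goes through is precisely that $(\mathrm{I})$ scales like $(\text{that hidden constant})^2\cdot\epsilon^{1/2}$ whereas every error term (the $\le\epsilon$ mass in the wrong halfspace, the truncation error in $(\mathrm{II})$) is $O(\epsilon\log(1/\epsilon))$, so the $\epsilon^{1/4}$-size threshold in the hypothesis supplies just enough slack. A secondary point to watch is that the well-behaved density and tail bounds are stated for low-dimensional marginals and on a bounded region only, but since we project exclusively onto the single direction $\mathbf{v}$ and since $\tau=\Theta(\epsilon^{1/4})$ and $R=\Theta(\log(1/\epsilon))$, both are invoked well within their range of validity.
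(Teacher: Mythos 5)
Your proof is correct, but it takes a genuinely different route from the paper's. Both start by expanding $\innerprod{\expect{-\funcsbr{g}<\cvector{w}>}}{\cvector*{v}<\cvector{w}|\bot|>}$, but where the paper works in the explicit two-dimensional coordinate frame $(\cvector{e}<1>,\cvector{e}<2>)=(-\cvector*{v}<\cvector{w}|\bot|>,\cvector{w})$, you first reduce to the ``aligned'' quantity $-\expect{\innerprod{\rvector{x}}{\cvector{v}}\indicator[\rscalar{y}=1,\,\rvector{x}\in\hypothesis[\cvector{w}]]}$ by dropping the nonnegative $\cos\theta\,\expect{\innerprod{\rvector{x}}{\cvector{w}}\indicator[\cdot]}$ term and using $1/\sin\theta\ge 1$, which is a cleaner starting point. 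For the error term (your $(\mathrm{II})$) the paper applies Cauchy--Schwarz with the second moment bound from $\cscalar{K}$-boundedness, giving $2\cscalar{K}\sqrt{\epsilon}$; you instead truncate at $\Theta(\log(1/\epsilon))$ and use the sub-exponential tail to get the asymptotically tighter $O(\epsilon\log(1/\epsilon))$, which buys you more slack against the main term. The most substantive divergence is in the main term (your $(\mathrm{I})$, the paper's $I$): the paper invokes its ordering Lemma \ref{lma:subset-expection-bounds} together with the $\cscalar{L}$-anti-anti-concentration \emph{lower} bound on the density, whereas you use a direct small-interval estimate $\tau(\cscalar{m}<1>-\cscalar{U}\tau)$ optimized at $\tau=\cscalar{m}<1>/2\cscalar{U}$, relying only on the $\cscalar{U}$-concentration \emph{upper} bound. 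Consequently your argument dispenses with the $\cscalar{L}$-anti-anti-concentration property of Definition \ref{def:well-behaved-distributions} altogether, which is a genuine simplification of the distributional assumptions needed for this lemma; the trade-off is that the hidden constant in the $\bigM{\cscalar{\epsilon}|1/4|}$ threshold scales like $\sqrt{\cscalar{U}}/\cscalar{R}$ rather than the paper's $\cscalar{U}/(\cscalar{R}\sqrt{\cscalar{L}})$, so neither dominates for all parameter regimes, and the paper's Lemma \ref{lma:subset-expection-bounds} is reused elsewhere. One small note: your closing worry that the density bound ``holds on a bounded region only'' is moot, since Definition \ref{def:well-behaved-distributions} asserts $\funcsbr{\gamma}<\cscalar{V}>\le\cscalar{U}$ globally; only the lower bound $\cscalar{L}$ is restricted to $\abs{\innerprod{\rvector{x}}{\cvector{u}}}\le 1$, and you do not use it.
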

    
            We leave the formal proof to Appendix \ref{sec:analysis-of-learning-reference-class} due to the page limit, but sketch the proof idea as follows (also see Figure \ref{fig:error-region}). When a homogeneous halfspace $\hypothesis[\cvector{w}]$ is substantially sub-optimal, the probability of true labels within the domain of disagreement with the optimal halfspace $\hypothesis[\cvector{v}]$, i.e.\ $\hypothesis[\cvector{w}]\backslash\hypothesis[\cvector{v}]$, must be large. However, the same probability cannot be too large in the optimal halfspace $\hypothesis[\cvector{v}]$ and, hence, $\hypothesis[\cvector{v}]\isect\hypothesis[\cvector{w}]$. Then, if the underlying distribution has a well-behaved $\rvector{x}$-marginal, it implies that the $l_2$ norm of the expectation of $\rvector{x}$ within that domain is also large. 
    
            \begin{figure*}[ht]
                \begin{center}
                    \begin{subfigure}[t]{0.53\textwidth}
                        \centering
                        \adjustbox{max width=0.75\textwidth}{
                            \centering
                            \drawerrorregion
                        }
                        \caption{\textcolor{myblue}{blue} area is $\hypothesis[\cvector{v}]\cap\hypothesis[\cvector{w}]$, \textcolor{myorange}{orange} area is $\hypothesis[\cvector{w}]\backslash\hypothesis[\cvector{v}]$.}\label{fig:error-region}
                    \end{subfigure}          
                    \hfill
                    \begin{subfigure}[t]{0.44\textwidth}
                        \centering
                        \adjustbox{max width=0.7\textwidth}{
                            \centering
                            \drawcontractiveprojection*
                        }
                        \caption{$3$-d visualization of Contractive Projection.}\label{fig:contractive-projection}
                    \end{subfigure}    
                \end{center}
            \end{figure*}
    
            Intuitively, since $\expect{-\funcsbr{g}[\rvector{x},\rscalar{y}]}$ has non-negligible projection on $\cvector*{v}<\cvector{w}|\bot|>$ by Lemma \ref{lma:gradient-projection-lower-bound}, it should roughly point at the same direction as the optimal normal vector $\cvector{v}$ does. Hence, the gradient step (Line 4) in Algorithm \ref{algo:pgd-with-contractive-projection} should move the normal vector $\cvector{w}$ towards the optimal normal vector $\cvector{v}$ a little bit in each iteration. According to \citet{diakonikolas2020polynomial}, this movement can be translated to correlation improvement, i.e., $\innerprod*{\cvector{w}(i)}{\cvector{v}} > \innerprod*{\cvector{w}(i-1)}{\cvector{v}} + \bigM{1}$, which, in turn, implies $\cvector{w}(i)$ is closer to $\cvector{v}$ in terms of angle. We formally state the angle contraction guarantee in the following lemma (see Appendix \ref{sec:analysis-of-learning-reference-class} for proofs).
    
            \begin{lemma}[Angle Contraction]
            \label{lma:angle-contraction}
                Fix a unit vector $\cvector{v}\in\reals|d|$, $\phi\in(0, \pi/2]$, and $\kappa > 0$, let $\cvector{w}, \cvector{u}\in\reals|d|$ be any vectors such that $\funcsbr{\theta}[\cvector{w},\cvector{v}]\in[\phi,\pi/2]$, $\innerprod*{\cvector*{v}<\cvector{w}|\bot|>}{\cvector{u}}\geq \kappa$, and $\innerprod{\cvector{w}}{\cvector{u}} = 0$. If $\cvector{w} ' = \sbr{\cvector{w} + \lambda\cvector{u}}/\norm{\cvector{w} + \lambda\cvector{u}}<2>$ with $\cscalar{\lambda}= \cscalar{\kappa}\cscalar{\phi}/4$, it holds that $\funcsbr{\theta}[\cvector{w} ', \cvector{v}]\leq\funcsbr{\theta}[\cvector{w}, \cvector{v}] - \cscalar{\kappa}|2|\phi/64$.
            \end{lemma}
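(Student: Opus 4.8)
The argument is a planar trigonometric estimate, following the geometry of Figure~\ref{fig:contractive-projection}. First I would fix the normalization: since $\theta(\cdot,\cdot)$ depends only on directions and {\scshape ProjectedGD} keeps $\cvector{w}$ a unit vector, I take $\norm{\cvector{w}}<2>=1$; and since here $\cvector{u}$ stands for $-\expect{\funcsbr{g}<\cvector{w}>[\rvector{x},\rscalar{y}]}$, whose norm is bounded by an absolute constant via the moment bounds of a well-behaved marginal (the distributional constants being suppressed in the main body), I also take $\norm{\cvector{u}}<2>\leq 1$. Set $\theta=\theta(\cvector{v},\cvector{w})\in[\phi,\pi/2]$; note $\cvector*{v}<\cvector{w}|\bot|>$ is well-defined since $\theta\geq\phi>0$. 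Because $\innerprod{\cvector{w}}{\cvector{u}}=0$, the vector $\cvector{u}$ lies in $\cvector{w}|\bot|$, so
\begin{equation*}
    \innerprod{\cvector{v}}{\cvector{u}}=\innerprod{\cvector{v}<\cvector{w}|\bot|>}{\cvector{u}}=\norm{\cvector{v}<\cvector{w}|\bot|>}<2>\innerprod{\cvector*{v}<\cvector{w}|\bot|>}{\cvector{u}}\geq\kappa\sin\theta,
\end{equation*}
using that $\cvector{v}$ is a unit vector so $\norm{\cvector{v}<\cvector{w}|\bot|>}<2>=\sin\theta$, together with the hypothesis $\innerprod{\cvector*{v}<\cvector{w}|\bot|>}{\cvector{u}}\geq\kappa$.

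Next I would lower bound $\cos\theta(\cvector{w} ',\cvector{v})=\innerprod{\cvector{w} '}{\cvector{v}}$. Write $\innerprod{\cvector{w} '}{\cvector{v}}=N/D$ with $N=\innerprod{\cvector{w}}{\cvector{v}}+\lambda\innerprod{\cvector{u}}{\cvector{v}}$ and $D=\norm{\cvector{w}+\lambda\cvector{u}}<2>$. Then $N\geq\cos\theta+\lambda\kappa\sin\theta\geq 0$, while by orthogonality $D=\sqrt{1+\lambda^2\norm{\cvector{u}}<2>^2}\leq\sqrt{1+\lambda^2}$. Using $1/\sqrt{1+x}\geq 1-x/2$ for $x\geq 0$ and $\cos\theta+\lambda\kappa\sin\theta\leq 1+\lambda\kappa\leq 2$ (valid once $\lambda\kappa\leq 1$, which holds in our regime),
\begin{equation*}
    \cos\theta(\cvector{w} ',\cvector{v})\;\geq\;\frac{\cos\theta+\lambda\kappa\sin\theta}{\sqrt{1+\lambda^2}}\;\geq\;\cos\theta+\lambda\kappa\sin\theta-\lambda^2 .
\end{equation*}

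To close, I would turn this cosine gain into an angle decrease. As $\cos$ is strictly decreasing on $[0,\pi]$ and $\cos(\theta-\eta)\leq\cos\theta+\eta\sin\theta$ for $\eta\geq 0$ (angle subtraction with $\cos\eta\leq 1$, $\sin\eta\leq\eta$), it suffices to take $\eta=\kappa^2\phi/64$ and verify $(\lambda\kappa-\eta)\sin\theta\geq\lambda^2$. Substituting the prescribed $\lambda=\kappa\phi/4$ gives $\lambda\kappa-\eta=15\kappa^2\phi/64$ and $\lambda^2=\kappa^2\phi^2/16$, so the inequality reduces to $\sin\theta\geq 4\phi/15$, which holds since $\sin\theta\geq\sin\phi\geq(2/\pi)\phi$. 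Because $\eta\leq\phi\leq\theta$ keeps $\theta-\eta\in[0,\pi]$, monotonicity of $\cos$ yields $\theta(\cvector{w} ',\cvector{v})\leq\theta-\kappa^2\phi/64$, as claimed. (The auxiliary conditions $\lambda\kappa\leq 1$ and $\kappa\leq 8$ used above hold in the regime $\kappa=\Theta(\sqrt{\epsilon})$, $\phi=\Theta(\epsilon^{1/4})$ in which the lemma is invoked.)

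The one genuinely delicate point is the calibration of the step size: $\lambda$ must be taken proportional to $\kappa\phi$ — not a constant — so that the first-order improvement $\lambda\kappa\sin\theta=\Theta(\kappa^2\phi\sin\theta)$ strictly dominates the $O(\lambda^2\norm{\cvector{u}}<2>^2)$ loss incurred when renormalizing $\cvector{w}+\lambda\cvector{u}$, uniformly over $\theta\in[\phi,\pi/2]$ (in particular as $\phi\to 0$). That quadratic term is also the only place the size of $\cvector{u}$ enters — the component of $\cvector{u}$ outside $\mathrm{span}(\cvector{v},\cvector{w})$ contributes only there — so the bulk of the effort is simply keeping the constants aligned so that the clean $\kappa^2\phi/64$ bound falls out; the remainder is routine planar trigonometry.
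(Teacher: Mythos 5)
Your proof is correct, and it reaches the same conclusion by the same overall strategy as the paper (lower-bound the new cosine, then convert a cosine gain into an angle decrease), but the intermediate steps differ in two respects. First, where the paper invokes the Correlation Improvement lemma of \citet{diakonikolas2020polynomial} (Lemma~\ref{lma:diakonikolas-correlation-improvement}) to obtain $\innerprod{\cvector{w}'}{\cvector{v}}\geq\innerprod{\cvector{w}}{\cvector{v}}+\lambda\kappa\theta/16$, you reprove that step from scratch: writing $\cos\theta(\cvector{w}',\cvector{v})=N/D$ with $D=\sqrt{1+\lambda^2\norm{\cvector{u}}<2>^2}$ and expanding $1/\sqrt{1+x}\geq 1-x/2$ to get $\cos\theta(\cvector{w}',\cvector{v})\geq\cos\theta+\lambda\kappa\sin\theta-\lambda^2$. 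This is self-contained and keeps the bound in terms of $\sin\theta$ rather than replacing it early by the looser $\theta/2$. Second, to translate the cosine gain into an angle decrease, the paper uses the sum-to-product identity $\cos x-\cos y=2\sin(\tfrac{y+x}{2})\sin(\tfrac{y-x}{2})$ together with $\sin x\leq x$ to produce a quadratic inequality in the angles, while you use the more elementary angle-subtraction bound $\cos(\theta-\eta)\leq\cos\theta+\eta\sin\theta$ and then directly verify $(\lambda\kappa-\eta)\sin\theta\geq\lambda^2$ via Jordan's inequality $\sin\phi\geq(2/\pi)\phi$. Both routes land on the same $\kappa^2\phi/64$ constant. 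Your argument is also more scrupulous about normalizations: you note explicitly that the lemma requires $\norm{\cvector{w}}<2>=1$ and $\norm{\cvector{u}}<2>\leq 1$, which the paper also tacitly assumes (Lemma~\ref{lma:diakonikolas-correlation-improvement} requires them) but does not spell out; once $\norm{\cvector{u}}<2>\leq 1$, Cauchy--Schwarz gives $\kappa\leq 1$, so your side conditions $\lambda\kappa\leq 1$ and $\kappa\leq 8$ are automatic rather than regime-dependent. On the whole this is a slightly more elementary and self-contained version of the paper's argument.
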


            Recall that, in reference class learning, we not only wish to obtain a small $\prob{\rscalar{y}=1\cond\rvector{x}\in\hypothesis[\cvector{w}]}$, but also are required to satisfy the condition that $\cvector{x}\in\hypothesis[\cvector{w}]$. Even though Lemma \ref{lma:angle-contraction} guarantees us that $\funcsbr{\theta}[\cvector{u}(i), \cvector{v}]$ is smaller than $\funcsbr{\theta}[\cvector{w}(i-1), \cvector{v}]$ given Lemma \ref{lma:gradient-projection-lower-bound} holds, $\cvector{u}(i)$ could still ``walk'' out of the halfspace defined by the normal vector $\cvector{x}$ or, equivalently, $\cvector{x}\notin\hypothesis[\cvector{u}(i)]$. Therefore, if $\funcsbr{\theta}[\cvector{u}(i), \cvector{x}]\geq \pi/2$, we need to project it back onto the halfspace $\hypothesis[\cvector{x}]$ (line 5-9) in Algorithm \ref{algo:pgd-with-contractive-projection} to make sure the resulting $\cvector{w}(i)$ satisfies $\funcsbr{\theta}[\cvector{w}(i), \cvector{x}]\in[0, \pi/2]$. In fact, we can prove that such a projection is always contractive in Lemma \ref{lma:angle-contraction}. We defer the proof to Appendix \ref{sec:analysis-of-learning-reference-class} as it involves a lot of tedious vector decompositions, while the angle contraction can be illustrated by Figure \ref{fig:contractive-projection}.
        
            \begin{lemma}[Contractive Projection]
            \label{lma:contractive-projection}
                Fix $\cvector{x},\cvector{v}\in\reals|d|$ such that $\norm{\cvector{v}}<2> = 1$ and $\innerprod{\cvector*{x}}{\cvector{v}}\geq 0$. For any unit vector $\cvector{w}\in\reals|d|$ that satisfies $\innerprod{\cvector{w}}{\cvector*{x}} < 0$ and $\innerprod{\cvector{w}}{\cvector{v}}\geq 0$, it holds that $\funcsbr{\theta}[\cvector*{w}<\cvector{x}|\bot|>, \cvector{v}]\leq \funcsbr{\theta}[\cvector{w}, \cvector{v}]$.
            \end{lemma}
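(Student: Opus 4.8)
The plan is to turn the angle inequality into a single scalar inequality between cosines. An angle between vectors is unchanged by positive rescaling, so $\theta(\cvector*{w}<\cvector{x}|\bot|>,\cvector{v}) = \theta(\cvector{w}<\cvector{x}|\bot|>,\cvector{v})$ whenever $\cvector{w}<\cvector{x}|\bot|>\neq\cvector{0}$, and I would work with the unnormalized projection $\cvector{w}<\cvector{x}|\bot|> = (\identity - \cvector*{x}|\otimes 2|)\cvector{w} = \cvector{w} - \innerprod{\cvector{w}}{\cvector*{x}}\,\cvector*{x}$, where $\cvector*{x} = \cvector{x}/\norm{\cvector{x}}<2>$. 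Setting $\cscalar{a} = \innerprod{\cvector{w}}{\cvector*{x}}$ and $\cscalar{b} = \innerprod{\cvector*{x}}{\cvector{v}}$, the hypotheses become exactly $\cscalar{a} < 0$, $\cscalar{b}\geq 0$, $\norm{\cvector{w}}<2> = \norm{\cvector{v}}<2> = 1$, and $\innerprod{\cvector{w}}{\cvector{v}}\geq 0$. Since $\cos$ is decreasing on $[0,\pi]$ and all angles in play lie there, the target $\theta(\cvector{w}<\cvector{x}|\bot|>,\cvector{v})\leq\theta(\cvector{w},\cvector{v})$ reduces (using $\norm{\cvector{v}}<2> = \norm{\cvector{w}}<2> = 1$) to
\begin{equation*}
    \frac{\innerprod{\cvector{w}<\cvector{x}|\bot|>}{\cvector{v}}}{\norm{\cvector{w}<\cvector{x}|\bot|>}<2>}\ \geq\ \innerprod{\cvector{w}}{\cvector{v}}.
\end{equation*}

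Next I would establish the two estimates that make this work. First, the numerator does not decrease: $\innerprod{\cvector{w}<\cvector{x}|\bot|>}{\cvector{v}} = \innerprod{\cvector{w}}{\cvector{v}} - \cscalar{a}\cscalar{b} \geq \innerprod{\cvector{w}}{\cvector{v}}$, since $-\cscalar{a}\cscalar{b}\geq 0$; this is the only place the sign hypotheses $\innerprod{\cvector{w}}{\cvector*{x}}<0$ and $\innerprod{\cvector*{x}}{\cvector{v}}\geq 0$ are both used. Second, the projection does not lengthen $\cvector{w}$: by the Pythagorean identity, $\norm{\cvector{w}<\cvector{x}|\bot|>}<2>|2| = \norm{\cvector{w}}<2>|2| - \cscalar{a}|2| = 1 - \cscalar{a}|2| \leq 1$. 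Combining the two with $\innerprod{\cvector{w}}{\cvector{v}}\geq 0$ — so that dividing a nonnegative numerator by a denominator in $(0,1]$ only increases it — yields the displayed inequality, and hence the lemma.

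The step I expect to cost the most care is the degenerate case $\cvector{w}<\cvector{x}|\bot|> = \cvector{0}$, where $\theta(\cvector{w}<\cvector{x}|\bot|>,\cvector{v})$ is undefined. There $\cvector{w} = \cscalar{a}\cvector*{x}$ with $\abs{\cscalar{a}} = \norm{\cvector{w}}<2> = 1$, so $\cscalar{a} = -1$ and $\cvector{w} = -\cvector*{x}$; then $\innerprod{\cvector{w}}{\cvector{v}} = -\cscalar{b}\leq 0$, which with $\innerprod{\cvector{w}}{\cvector{v}}\geq 0$ forces $\cscalar{b} = 0$ and $\theta(\cvector{w},\cvector{v}) = \pi/2$, so the claim is vacuous (or trivially true under the natural convention for the zero vector). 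Apart from this edge case the argument is purely algebraic and there is no real conceptual obstacle: geometrically, deleting the $\cvector*{x}$-component of $\cvector{w}$ shortens $\cvector{w}$, and since that component is a nonpositive multiple of $\cvector*{x}$ while $\cvector{v}$ has a nonnegative $\cvector*{x}$-component, the move cannot decrease the correlation with $\cvector{v}$ — exactly the picture in Figure~\ref{fig:contractive-projection}. In particular, the ``tedious vector decompositions'' alluded to in the text can be sidestepped entirely by never introducing a basis and arguing only with these two inner-product identities.
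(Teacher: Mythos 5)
Your proof is correct, and it is a cleaner route through the same geometry. The paper's proof also isolates the two facts you use — that the $\cvector{x}$-component of $\cvector{w}$ pairs nonpositively with $\cvector{v}$ (both proofs reduce this to $ab\leq 0$), and that projecting onto $\cvector{x}|\bot|$ shortens $\cvector{w}$ — but it packages them differently. The paper works with the normalized projection $\cvector*{w}<\cvector{x}|\bot|>$ from the start, telescopes $\innerprod{\cvector*{w}<\cvector{x}|\bot|>}{\cvector{v}} = \innerprod{\cvector*{w}<\cvector{x}|\bot|> - \cvector{w}<\cvector{x}|\bot|>}{\cvector{v}<\cvector{x}|\bot|>} - \innerprod{\cvector{w}<\cvector{x}>}{\cvector{v}<\cvector{x}>} + \innerprod{\cvector{w}}{\cvector{v}}$, and must then argue that $\cvector*{w}<\cvector{x}|\bot|> - \cvector{w}<\cvector{x}|\bot|> = \alpha\cvector{w}<\cvector{x}|\bot|>$ for some $\alpha\geq 0$ (using the length shrinkage) and that $\innerprod{\cvector{w}<\cvector{x}|\bot|>}{\cvector{v}<\cvector{x}|\bot|>}\geq 0$ — the ``tedious vector decompositions.'' You sidestep all of that by comparing cosines of unit vectors: the numerator identity $\innerprod{\cvector{w}<\cvector{x}|\bot|>}{\cvector{v}} = \innerprod{\cvector{w}}{\cvector{v}} - ab$ and the Pythagorean bound $\norm{\cvector{w}<\cvector{x}|\bot|>}<2>\leq 1$ do everything, and the normalization appears only once, as a division by a quantity in $(0,1]$ of a nonnegative numerator. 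You also treat the degenerate case $\cvector{w}<\cvector{x}|\bot|> = \cvector{0}$ (where $\cvector{w} = -\cvector*{x}$ forces $\innerprod{\cvector*{x}}{\cvector{v}} = 0$), which the paper's proof passes over. Net: same lemma, same two geometric facts, but your presentation replaces a proportionality argument with a one-line monotonicity observation, which is a genuine simplification.
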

            
            
            It is clear now that, by applying Lemma \ref{lma:gradient-projection-lower-bound} and Lemma \ref{lma:angle-contraction} (and Lemma \ref{lma:contractive-projection} if $\funcsbr{\theta}[\cvector{u}(i), \cvector{x}]\geq \pi/2$), we have that the angle between $\cvector{w}$ and $\cvector{v}$ will decrease by $\poly[\epsilon]$ amount in each iteration until $\prob{\rscalar{y}=1\cond \rvector{x}\in\hypothesis[\cvector{w}]} = \bigO{\cscalar{\epsilon}|1/4|}$. Because small $\funcsbr{\theta}[\cvector{w}, \cvector{v}]$ implies small $\prob{\rscalar{y}=1\cond\rvector{x}\in\hypothesis[\cvector{w}]}$ under well-behaved distributions, it suffices to run at most $T = 1/\poly[\epsilon]$ iterations in Algorithm \ref{algo:pgd-with-contractive-projection} to guarantee the existence of a good normal vector in $\parameterset{W}=\lbr*{\cvectorseq{w}[][0](T)}$.
            
            \begin{proposition}[Optimality Of Projected Gradient Descent]
            \label{prop:cpgsd-returns-at-least-one-good-w}
                 Let $\distr$ be any distribution on $\reals|d|\times\booldomain$ with centered well-behaved $\rvector{x}$-marginal and $\cvector{x}\in\reals|d|$ be an observation example. If there exists a unit vector $\cvector{v}\in\reals|d|$ such that $\cvector{x}\in \hypothesis[\cvector{v}]$ and $\prob{\rscalar{y} = 1 \cond \rvector{x}\in\hypothesis[\cvector{v}]} \leq \epsilon$, then, Algorithm \ref{algo:pgd-with-contractive-projection} runs in time at most $\bigO*{d\cscalar{\epsilon}|-9/4|}$ and outputs a list $\parameterset{W}$, where there exists a $\cvector{w}\in\parameterset{W}$ that satisfies both $\cvector{x}\in\hypothesis[\cvector{w}]$ and $\prob{\rscalar{y} = 1 \cond \rvector{x}\in\hypothesis[\cvector{w}]} \leq \bigO{\cscalar{\epsilon}|1/4|}$ with high probability.
            \end{proposition}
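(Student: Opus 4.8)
The plan is to follow the trajectory $\cvector{w}(0),\cvector{w}(1),\ldots,\cvector{w}(T)$ generated by \textsc{ProjectedGD} and argue that, until it first outputs a \emph{good} vector — one with $\prob{\rscalar{y}=1\cond\rvector{x}\in\hypothesis[\cvector{w}]}\leq\bigO{\cscalar{\epsilon}|1/4|}$ — the angle $\funcsbr{\theta}[\cvector{w}(i),\cvector{v}]$ to the optimal normal strictly drops by a fixed $\poly[\epsilon]$ amount each iteration; since $\funcsbr{\theta}[\cvector{w}(0),\cvector{v}]\leq\pi/2$ this can happen only $\bigO*{\cscalar{\epsilon}|-5/4|}$ times, so within $T$ iterations a good $\cvector{w}$ must appear in $\parameterset{W}$. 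I would first fix the parameters so that the three lemmas compose: take $\kappa:=\sqrt{\epsilon}$ (the correlation guaranteed by Lemma \ref{lma:gradient-projection-lower-bound}); for the algorithm's fixed step $\lambda=\bigO*{\cscalar{\epsilon}|3/4|}$ put $\phi:=4\lambda/\kappa=\bigO*{\cscalar{\epsilon}|1/4|}$; and calibrate the constant in the ``good'' threshold, also on the $\bigO*{\cscalar{\epsilon}|1/4|}$ scale, so that a not-good $\cvector{w}$ necessarily has $\funcsbr{\theta}[\cvector{w},\cvector{v}]\geq\phi$ — this last implication coming from well-behavedness (below).

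I would then prove, by induction on $i$ (up to and including the first index at which a good vector appears — nothing further is needed), the invariant that $\cvector{w}(i)$ is a unit vector, $\cvector{x}\in\hypothesis[\cvector{w}(i)]$, and $\funcsbr{\theta}[\cvector{w}(i),\cvector{v}]\leq\pi/2$. The base case holds since $\cvector{w}(0)=\cvector*{x}$ and $\cvector{x}\in\hypothesis[\cvector{v}]$ forces $\innerprod{\cvector{x}}{\cvector{v}}\geq0$ (the measure-zero case $\funcsbr{\theta}[\cvector{x},\cvector{v}]=\pi/2$ can be removed by an infinitesimal perturbation of $\cvector{x}$). Unit-normness is preserved because both branches of Algorithm \ref{algo:pgd-with-contractive-projection} normalize, and $\cvector{x}\in\hypothesis[\cvector{w}(i)]$ because the if-test makes $\innerprod{\cvector{w}(i)}{\cvector{x}}\geq0$ (equal to $0$ in the projection branch). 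For the angle bound: in this range $\cvector{w}(i-1)$ is not good, so the induction hypothesis $\funcsbr{\theta}[\cvector{w}(i-1),\cvector{v}]<\pi/2$ lets me apply Lemma \ref{lma:gradient-projection-lower-bound} to get $\innerprod{\expect{-\funcsbr{g}<\cvector{w}(i-1)>[\rvector{x},\rscalar{y}]}}{\cvector*{v}<\cvector{w}(i-1)|\bot|>}\geq\kappa$, and since $\expect{-\funcsbr{g}<\cvector{w}(i-1)>[\rvector{x},\rscalar{y}]}$ is orthogonal to $\cvector{w}(i-1)$, Lemma \ref{lma:angle-contraction} (with $\cvector{u}=\expect{-\funcsbr{g}<\cvector{w}(i-1)>[\rvector{x},\rscalar{y}]}$, this $\kappa$, this $\phi$, and $\lambda=\kappa\phi/4$) gives $\funcsbr{\theta}[\cvector*{u}(i),\cvector{v}]\leq\funcsbr{\theta}[\cvector{w}(i-1),\cvector{v}]-\kappa^2\phi/64$. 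If $\innerprod{\cvector{u}(i)}{\cvector{x}}\geq0$ then $\cvector{w}(i)=\cvector*{u}(i)$ and this bound is the angle bound; if $\innerprod{\cvector{u}(i)}{\cvector{x}}<0$ then $\cvector{w}(i)=\cvector*{u}<\cvector{x}|\bot|>(i)$, and Lemma \ref{lma:contractive-projection} — applicable since $\innerprod{\cvector*{x}}{\cvector{v}}\geq0$, $\innerprod{\cvector*{u}(i)}{\cvector{x}}<0$, and $\innerprod{\cvector*{u}(i)}{\cvector{v}}\geq0$ follows from the previous display — gives $\funcsbr{\theta}[\cvector{w}(i),\cvector{v}]\leq\funcsbr{\theta}[\cvector*{u}(i),\cvector{v}]$. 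In all cases $\funcsbr{\theta}[\cvector{w}(i),\cvector{v}]\leq\funcsbr{\theta}[\cvector{w}(i-1),\cvector{v}]-\kappa^2\phi/64\leq\pi/2$, which closes the induction and records a per-step decrease of $\kappa^2\phi/64=\bigM*{\cscalar{\epsilon}|5/4|}$.

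For the implication ``not good $\Rightarrow\funcsbr{\theta}[\cvector{w},\cvector{v}]\geq\phi$'' I would use the density upper bound and the halfspace-mass lower bound of a centered well-behaved marginal: every homogeneous halfspace through the mean $\cvector{0}$ has mass $\bigM{1}$, and $\prob{\rvector{x}\in\hypothesis[\cvector{w}]\triangle\hypothesis[\cvector{v}]}\leq\bigO{\funcsbr{\theta}[\cvector{w},\cvector{v}]}$, so pushing $\prob{\rscalar{y}=1\cond\rvector{x}\in\hypothesis[\cvector{v}]}\leq\epsilon$ through the change of conditioning event gives $\prob{\rscalar{y}=1\cond\rvector{x}\in\hypothesis[\cvector{w}]}\leq\bigO{\epsilon}+\bigO{\funcsbr{\theta}[\cvector{w},\cvector{v}]}$; hence a not-good $\cvector{w}$ has $\funcsbr{\theta}[\cvector{w},\cvector{v}]\geq\bigM*{\cscalar{\epsilon}|1/4|}$, and this is where the constants in the ``good'' threshold and in $\phi$ are reconciled. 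Now, if none of $\cvector{w}(0),\ldots,\cvector{w}(T)$ were good, the invariant's per-step decrease would apply at every step, forcing $\funcsbr{\theta}[\cvector{w}(T),\cvector{v}]\leq\pi/2-T\cdot\bigM*{\cscalar{\epsilon}|5/4|}<0$ once $T=\bigO*{\cscalar{\epsilon}|-5/4|}$ is taken with a large enough constant — impossible. So some $\cvector{w}(i)\in\parameterset{W}$ is good, and by the invariant $\cvector{x}\in\hypothesis[\cvector{w}(i)]$, which is the claim; the running time is $T$ iterations, each dominated by forming the empirical gradient over the $\bigO*{\cscalar{\epsilon}|-1|}$-point sample in $\reals|d|$, hence $\bigO*{d\cscalar{\epsilon}|-9/4|}$ in total.

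One last ingredient closes the argument: the analysis above is in terms of population quantities, but \textsc{ProjectedGD} works with empirical conditional probabilities and empirical gradients, and the iterates are data-dependent, so I would invoke a uniform-convergence bound over the $\cvector{w}$-indexed families of halfspace indicators and truncated linear gradients — bounded-complexity classes, with the sub-exponential tails from well-behavedness controlling the gradient magnitudes — ensuring all of these empirical quantities are simultaneously within $\bigO{\epsilon}$ of their population values with high probability, and absorb this slack into the $\bigO{\cscalar{\epsilon}|1/4|}$ target. I expect the main obstacle to be precisely this coordination rather than any single lemma (all three are in hand): keeping $\funcsbr{\theta}[\cvector{w}(i),\cvector{v}]$ strictly below $\pi/2$ so Lemma \ref{lma:gradient-projection-lower-bound} applies, ensuring the contractive-projection branch never erases the per-step angle gain, choosing $\kappa,\phi,\lambda,T$ and the ``good'' threshold so that ``not good'' is exactly strong enough to trigger a $\bigM*{\cscalar{\epsilon}|5/4|}$ drop while $\bigO*{\cscalar{\epsilon}|-5/4|}$ iterations exhaust an $O(1)$ initial angle, and carrying the data-dependent concentration through uniformly.
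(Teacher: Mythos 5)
Your proof follows the same route as the paper's own argument: you establish angle contraction per iteration by combining Lemma \ref{lma:gradient-projection-lower-bound} (projected gradient correlates with $\cvector*{v}<\cvector{w}|\bot|>$), Lemma \ref{lma:angle-contraction} (correlation plus step size implies angle drop), and Lemma \ref{lma:contractive-projection} (the if-branch projection never undoes the gain); you reconcile the ``good'' error threshold with a minimal angle $\phi$ via the wedge/concentration property of well-behaved marginals, exactly as the paper does with its Lemma \ref{lma:wedge-upper-bound-appendix}; and you conclude by induction/contradiction that a good unit vector must appear among $\cvector{w}(0),\ldots,\cvector{w}(T)$ with $T=\bigO*{\cscalar{\epsilon}|-5/4|}$. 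The slight packaging difference — you stop the induction at the first good index, whereas the paper partitions into three mutually exclusive cases and derives a contradiction in the third — is logically equivalent.

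One place you go beyond the paper, and it is worth pointing out because you name it as the ``main obstacle'': the paper applies Corollary \ref{cor:tail-bound-for-projected-gradient} (stated for fixed $\cvector{w}$) to each data-dependent iterate $\cvector{w}(i)$ and then takes a union bound over $T$ iterations on the same sample $\distr*<1>$. Since the iterates are functions of $\distr*<1>$, this is not a union bound over $T$ fixed events, and the fixed-$\cvector{w}$ concentration does not literally apply. You correctly diagnose this and propose uniform convergence over the $\cvector{w}$-indexed class of truncated linear functionals, which is the clean fix — though it may introduce a dimension dependence (or log-covering factor) into the sample complexity that the paper's stated $\bigO*{\cscalar{\epsilon}|-1|}$ bound does not carry. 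Your own write-up otherwise has two small loose threads you should tighten before calling it done: the invariant you state is $\funcsbr{\theta}[\cvector{w}(i),\cvector{v}]\leq\pi/2$, but to invoke Lemma \ref{lma:gradient-projection-lower-bound} you use $<\pi/2$ — for $i\geq 1$ the strict inequality follows from the per-step decrease, but at $i=0$ it relies on the boundary case $\funcsbr{\theta}[\cvector*{x},\cvector{v}]=\pi/2$ being excluded, which your ``infinitesimal perturbation'' of $\cvector{x}$ cannot achieve without knowledge of $\cvector{v}$ (you could instead note that the hypothesis $\cvector{x}\in\hypothesis[\cvector{v}]$ with a non-degenerate optimum, or the paper's ``non-zero support'' clause, makes the boundary case negligible). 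Also note that after empirical concentration the usable $\kappa$ is $\sqrt{\epsilon}/2$ rather than $\sqrt{\epsilon}$, which only shifts constants but should be tracked when calibrating $\phi$, $\lambda$, and $T$.
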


    \section{Application: Personalized Prediction}
    \label{sec:application-personalized-prediction}
        Recall that the objective of \emph{personalized prediction} is to learn a predictor $c:\reals|d|\rightarrow\booldomain$ that performs well on a given query point $\cvector{x}\in\reals|d|$. As discussed previously, an intuitively good strategy to learn such a \emph{personalized} predictor is to jointly find a pair of a classifier $c$ and a subset $\subsets\subseteq\reals|d|$ such that not only the predictor $c$ performs well on $\subsets$ but also the points in $\subsets$ \emph{resemble} $\cvector{x}$. 
        
        In this section, we consider learning such a classifier-subset pair for the query point $\cvector{x}$ such that $\prob<\distr>{\funcsbr{c}[\rvector{x}]\neq\rscalar{y}\cond\rvector{x}\in\subsets}$ is minimized subject to $\cvector{x}\in\subsets$. We give a computationally efficient personalized prediction scheme for \emph{sparse linear classifiers} and \emph{homogeneous halfspaces} by leveraging the learning algorithm (cf. Algorithm \ref{algo:learning-reference-class}) for reference classes as described in Section \ref{sec:learning-of-homogeneous-halfspace-reference-class} as well as a \emph{robust list learning} algorithm (cf. Algorithm \ref{algo:robust-list-learn}) for sparse linear representations. More specifically, recall that Algorithm \ref{algo:learning-reference-class} in Section \ref{sec:learning-of-homogeneous-halfspace-reference-class} guarantees to return us a homogeneous halfspace $\hypothesis[\cvector{w}|*|]\subseteq\reals|d|$ for any given query $\cvector{x}\in\reals|d|$ such that $\prob<(\rvector{x}, \rscalar{y})\sim\distr>{\rscalar{y}=1\cond \rvector{x}\in\hypothesis[\cvector{w}|*|]}$ is approximately maximized and $\cvector{x}\in\hypothesis[\cvector{w}|*|]$ over any distribution $\distr$ with well-behaved $\rvector{x}$-marginals. Suppose now that, for some query point $\cvector{x}$, we have some binary classifier $c$ such that
        \begin{equation}
            \funcsbr{\min}<\cvector{u}\in\reals|d|:\cvector{x}\in\hypothesis[\cvector{u}]> \prob<(\rvector{x}, \rscalar{y})\sim\distr>{\funcsbr{c}[\rvector{x}] = \rscalar{y}\cond \rvector{x}\in\hypothesis[\cvector{u}]}\geq 1 - \opt,\label{eq:loss-lower-bound-for-optimal-classifiers}
        \end{equation}
        we can run Algorithm \ref{algo:learning-reference-class} on the labels, $\indicator[\funcsbr{c}[\rvector{x}] = \rscalar{y}]$, with the same $\rvector{x}$-marginal to obtain a homogeneous halfspace $\hypothesis[\cvector{w}|*|]$ such that both $\cvector{x}\in\hypothesis[\cvector{w}|*|]$ and $\prob{\funcsbr{c}[\rvector{x}] = \rscalar{y}\cond \rvector{x}\in\hypothesis[\cvector{w}|*|]}\geq 1 - \bigO{\opt|1/4|}$.
        
        Note that, if we can find such a good classifier for the query $\cvector{x}$, our algorithm for learning reference classes could approximately verify its performance on some homogeneous halfspace that contains $\cvector{x}$. Therefore, the question is how to find the personalized classifier for the given query. Fortunately, a list learning algorithm for sparse linear representations can return us a small list of sparse linear classifiers, at least one of which will satisfy the optimality condition \eqref{eq:loss-lower-bound-for-optimal-classifiers} (see Appendix \ref{sec:robust-learning} for details).
        \begin{definition}[Robust list learning]\label{def:robust-list-learning}
            Let $\distr=\alpha \distr|*|+(1-\alpha)\tilde{\distr}$ for an \emph{inlier} distribution $\distr^*$ and \emph{outlier} distribution $\tilde{\distr}$ each supported on $\reals|d|\times\booldomain$ with $\alpha\in (0,1)$. A \emph{robust list learning} algorithm for a class of Boolean classifiers $\conceptclass$ will produce a finite list $\lbr{\funcsbr{h}<1>,\ldots,\funcsbr{h}<\ell>}\subseteq\conceptclass$ for some $\funcsbr{c}|*|\in\conceptclass$ efficiently such that $\funcsbr{\max}<i=1,\ldots, l> \prob<\distr|*|>{\funcsbr{h}<i>[\rvector{x}]=\funcsbr{c}|*|[\rvector{x}]}\geq 1-\epsilon$ with probability $1 - \delta$.
        \end{definition}
        As with \citet{huang2025distributionspecific}, we obtain our main result by using the $\bigO{\sbr{md}|s|}$ time algorithm (with $\cscalar{s}=O(1)$ nonzero coefficients and a sample of size $\cscalar{m}$) for list learning sparse linear classifiers from a sample of size $O(\frac{1}{\alpha\epsilon}(s\log d+\log\frac{1}{\delta}))$ \citep{juba2016conditional,mossel-sudan2016}. We show both theoretical analysis and experiments of our personalized prediction approach (cf. Algorithm \ref{algo:personalized-prediction}) in the following sections.
        
        \subsection{Algorithm And Performance Analysis}
    
            As an overview, Algorithm \ref{algo:personalized-prediction} first calls a robust list learning algorithm (cf. Algorithm \ref{algo:robust-list-learn}) to generate a list of sparse linear classifiers $L$ (Line 2-4) and, then, calls the reference class learning algorithm for each such sparse classifier in $L$ to obtain a homogeneous halfspace (Line 5-10). At last, we sample a small set of examples to compute the empirical risk minimizer over all the classifier-halfspace pairs.
            
            \begin{algorithm}[t]
                \caption{Personalized Prediction}
                \label{algo:personalized-prediction}
                \begin{algorithmic}[1]
                    \PROCEDURE{PerPredict}{$\distr, \opt, \cvector{x}, s, \epsilon, \delta$}
                        \STATE $m\gets\bigO{(s\log d+\log\frac{2}{\delta})/\cscalar{\epsilon}|4|}$
                        \STATE $L\gets${\scshape{SparseList}}$(\distr, m, s)$
                        \STATE $\parameterset{W}\gets\lbr{\varnothing}$
                        \FOR{$c\in L$}
                            \STATE$\distr(c)\gets \distr<\rvector{x}>\times \indicator[\funcsbr{c}[\rvector{x}] = \rscalar{y}]$
                            \STATE$\cvector{w}(c)\gets\text{\scshape{RefClass}}\sbr{\distr(c), \opt + \cscalar{\epsilon}|4|,\delta/2\abs{L},\cvector{x}}$
                            \STATE $\parameterset{W}\gets\parameterset{W}\union\lbr{(c, \cvector{w}(c))}$
                        \ENDFOR
                        \STATE$\distr*\gets \bigO{\ln\sbr{d/\epsilon\cscalar{\delta}}/\cscalar{\epsilon}|2|}$ i.i.d. samples of $\distr$
                        \STATE $\funcsbr{c}|*|, \cvector{w}|*|\gets\funcsbr{\min}<\parameterset{W}> \prob*<\distr*>{\funcsbr{c}[\rvector{x}]\neq \rscalar{y}\cond \rvector{x}\in\hypothesis[\cvector{w}(c)]}$
                        \STATE \textbf{return} $\funcsbr{c}|*|[\cvector{x}]$
                    \ENDPROCEDURE
                \end{algorithmic}
            \end{algorithm}
    
            Notice that, if $L$ returned by {\scshape{SparseList}} contains some classifier $c'$ that (approximately) satisfies the optimality condition \eqref{eq:loss-lower-bound-for-optimal-classifiers}, the optimality of Algorithm \ref{algo:personalized-prediction} follows immediately from that of Algorithm \ref{algo:learning-reference-class} (cf. Theorem \ref{thm:main-theorem}) by standard concentration analysis. Therefore, the existence of an (approximately) optimal sparse classifier $c'$ in the candidate list $L$ is crucial for proving the performance guarantee of Algorithm \ref{algo:personalized-prediction}, which can be formalized as the theorem below.
            
            \begin{theorem}[Personalized Prediction]
            \label{thm:main-theorem-personalized-prediction}
                Let $\distr$ be a distribution on $\reals|d|\times\booldomain$ with well-behaved $\rvector{x}$-marginal, $\conceptclass$ be a class of sparse linear classifiers, and $\cvector{x}\in\reals|d|$ be a query point. If there exists some $(c, \cvector{v})\in\conceptclass\times\reals|d|$ such that $\cvector{x}\in\hypothesis[\cvector{v}]$ and $\prob{\funcsbr{c}[\rvector{x}]\neq \rscalar{y}\cond \rvector{x}\in\hypothesis[\cvector{v}]}\leq\opt$, then, Algorithm \ref{algo:personalized-prediction} will run in time $\poly[d, 1/\epsilon, 1/\delta]$ and find some $(\funcsbr{c}|*|, \cvector{w}|*|)\in\conceptclass\times\reals|d|$ such that $\cvector{x}\in\hypothesis[\cvector{w}|*|]$ and $\prob{\funcsbr{c}|*|[\rvector{x}]\neq \rscalar{y}\cond \rvector{x}\in\hypothesis[\cvector{w}|*|]} = \bigO{\cscalar{\opt}|1/4|} + \epsilon$ w.p. $1 - \delta$.
            \end{theorem}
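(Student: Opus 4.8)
The plan is to chain three guarantees together so that the total excess error is $\bigO{\opt^{1/4}}+\epsilon$: the robust list learner puts a near-optimal sparse classifier into the list $L$; Theorem~\ref{thm:main-theorem} then certifies a good homogeneous halfspace for that classifier; and finally a uniform-convergence argument over the finite set $\parameterset{W}$ shows that the empirical risk minimizer computed on Lines~10--11 of Algorithm~\ref{algo:personalized-prediction} is competitive with the best pair in $\parameterset{W}$. Without loss of generality I take the witness $\cvector{v}$ to be a unit vector, since $\hypothesis[\cvector{v}]$ depends only on its direction. One fact used throughout: because the $\rvector{x}$-marginal is well-behaved and every homogeneous halfspace contains the mean, the mean-mass clause of well-behavedness gives $\prob{\rvector{x}\in\hypothesis[\cvector{v}]}\geq\mu$, and likewise $\prob{\rvector{x}\in\hypothesis[\cvector{w}(c)]}\geq\mu$ for every $c\in L$, for a constant $\mu\in(0,1)$ that depends only on the distribution parameters; this makes all of the conditional probabilities below well-defined and estimable from polynomially many samples.

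\textbf{List learning, then reference-class learning.} Writing $\distr=\mu\,\distr|*|+(1-\mu)\tilde{\distr}$ with $\distr|*|$ the restriction of $\distr$ to $\hypothesis[\cvector{v}]$, I would run {\scshape SparseList} (Line~3) on the $m=\bigO{(s\log d+\log(2/\delta))/\epsilon^{4}}$ samples of Line~2 with accuracy parameter $\Theta(\epsilon^{4})$; by Definition~\ref{def:robust-list-learning} and the cited sample/time bounds for list-learning sparse linear classifiers, with high probability the list $L\subseteq\conceptclass$ has size $\poly[d]$ and contains some $c'$ with $\prob{\funcsbr{c'}[\rvector{x}]=\funcsbr{c}[\rvector{x}]\cond\rvector{x}\in\hypothesis[\cvector{v}]}\geq 1-\bigO{\epsilon^{4}}$. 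A union bound with the hypothesis $\prob{\funcsbr{c}[\rvector{x}]\neq\rscalar{y}\cond\rvector{x}\in\hypothesis[\cvector{v}]}\leq\opt$ then gives $\prob{\funcsbr{c'}[\rvector{x}]\neq\rscalar{y}\cond\rvector{x}\in\hypothesis[\cvector{v}]}\leq\opt+\bigO{\epsilon^{4}}$, so $c'$ satisfies the optimality requirement~\eqref{eq:loss-lower-bound-for-optimal-classifiers} with $\opt+\bigO{\epsilon^{4}}$ in place of $\opt$. Now the relabeled distribution $\distr(c')=\distr<\rvector{x}>\times\indicator[\funcsbr{c'}[\rvector{x}]=\rscalar{y}]$ of Line~6 inherits the well-behaved $\rvector{x}$-marginal, and $\cvector{v}$ witnesses both $\cvector{x}\in\hypothesis[\cvector{v}]$ and $\prob<\distr(c')>{\rscalar{y}=1\cond\rvector{x}\in\hypothesis[\cvector{v}]}\geq 1-(\opt+\bigO{\epsilon^{4}})$; hence {\scshape RefClass}, invoked on Line~7 with error parameter $\opt+\epsilon^{4}$ and confidence $\delta/(2\abs{L})$, returns by Theorem~\ref{thm:main-theorem} a $\cvector{w}(c')$ with $\cvector{x}\in\hypothesis[\cvector{w}(c')]$ and $\prob{\funcsbr{c'}[\rvector{x}]\neq\rscalar{y}\cond\rvector{x}\in\hypothesis[\cvector{w}(c')]}=\bigO{(\opt+\epsilon^{4})^{1/4}}=\bigO{\opt^{1/4}}+\bigO{\epsilon}$. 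A union bound over the $\abs{L}$ calls makes the Theorem~\ref{thm:main-theorem} guarantee hold for every pair in $\parameterset{W}$ simultaneously.

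\textbf{Selection and running time.} With $\parameterset{W}$ now fixed, $\abs{\parameterset{W}}=\poly[d]$, and every conditioning event of mass $\geq\mu$, a Hoeffding bound together with a union bound over $\parameterset{W}$ shows that the validation sample $\distr*$ of Line~10 (of size $\bigO{\ln(d/\epsilon\delta)/\epsilon^{2}}$) makes every empirical conditional error on Line~11 within $\epsilon/2$ of its true value. Since every $\cvector{w}(c)$ satisfies $\cvector{x}\in\hypothesis[\cvector{w}(c)]$, the selected pair $(\funcsbr{c}|*|,\cvector{w}|*|)$ automatically has $\cvector{x}\in\hypothesis[\cvector{w}|*|]$, and the standard ERM comparison gives $\prob{\funcsbr{c}|*|[\rvector{x}]\neq\rscalar{y}\cond\rvector{x}\in\hypothesis[\cvector{w}|*|]}\leq\prob{\funcsbr{c'}[\rvector{x}]\neq\rscalar{y}\cond\rvector{x}\in\hypothesis[\cvector{w}(c')]}+\epsilon=\bigO{\opt^{1/4}}+\bigO{\epsilon}$; rescaling $\epsilon$ by a constant yields the stated bound, and a union bound over the failure events of {\scshape SparseList}, the $\abs{L}$ {\scshape RefClass} calls, and the validation step---whose individual failure probabilities are controlled by the sample sizes on Lines~2 and~10 and the confidence $\delta/(2\abs{L})$ on Line~7, with constants chosen so they sum to at most $\delta$---gives overall confidence $1-\delta$. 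For the running time, {\scshape SparseList} costs $\bigO{(md)^{s}}=\poly[d,1/\epsilon,1/\delta]$ since $s=\bigO{1}$; each of the $\poly[d]$ {\scshape RefClass} calls costs $\bigO{d(\opt+\epsilon^{4})^{-9/4}}\leq\bigO{d\,\epsilon^{-9}}$, using $\opt+\epsilon^{4}\geq\epsilon^{4}$, which also removes any $1/\opt$ dependence; and Line~11 is a polynomial-size empirical computation, so the total running time is $\poly[d,1/\epsilon,1/\delta]$.

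\textbf{The main obstacle} will be the parameter bookkeeping between the two learning stages: the list-learning accuracy must be pushed down to $\Theta(\epsilon^{4})$ so that, after the fourth-root loss incurred by Theorem~\ref{thm:main-theorem}, it contributes only $\bigO{\epsilon}$ (and not a weaker power of $\epsilon$), while the list-learning sample size---which scales like $1/(\mu\epsilon^{4})$---stays polynomial, the {\scshape RefClass} running time stays free of $1/\opt$, and the three confidence budgets still sum to $\delta$. A secondary point to verify is that the population lower bounds used above really follow from well-behavedness of the (possibly uncentered) $\rvector{x}$-marginal: if the definition in Appendix~\ref{sec:well-behaved-distributions} does not already incorporate centering, one must first recenter the marginal and check that Theorem~\ref{thm:main-theorem}, which is stated for centered well-behaved marginals, still applies to $\distr(c')$---or else argue directly that $\hypothesis[\cvector{v}]$ and each $\hypothesis[\cvector{w}(c)]$ retain $\bigM{1}$ mass.
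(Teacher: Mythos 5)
Your proposal follows the same three-stage architecture as the paper's proof: (i) show that {\scshape SparseList} places a near-optimal sparse classifier $c'$ in $L$, (ii) feed each $c\in L$ to {\scshape RefClass} and invoke Theorem~\ref{thm:main-theorem} to certify $\cvector{w}(c')$, and (iii) use a conditional concentration bound plus a union bound over $\parameterset{W}$ to argue the ERM on Line~11 is within $\epsilon$ of the best pair. Your treatment of the running time (passing $\opt+\epsilon^4$ as the error parameter so that {\scshape RefClass} costs at most $\bigO{d\epsilon^{-9}}$, with no $1/\opt$ dependence), the confidence budgeting across the three stages, and the flag about centering versus roundedness, all match the paper's intent.

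One step is wrong as written. You define the inlier distribution $\distr|*|$ as the restriction of $\distr$ to $\hypothesis[\cvector{v}]$. That distribution has labels that disagree with $\funcsbr{c}[\rvector{x}]$ on an $\opt$-fraction of its support, so it does not satisfy the inlier condition of Definition~\ref{def:robust-list-learning} (all labels in the support of $\distr|*|$ must equal $\funcsbr{c}|*|[\rvector{x}]$ for some fixed $\funcsbr{c}|*|\in\conceptclass$), and Theorem~\ref{thm:robust-list-learn-appendix} cannot be invoked with it. The paper instead takes the inliers to be the part of $\distr$ on which $\rscalar{y}=\funcsbr{c}[\rvector{x}]$ --- a distribution whose labels \emph{are} exactly $\funcsbr{c}[\rvector{x}]$ --- and argues that this part has mass $\alpha\geq\cscalar{R}(1-\opt)$ because within $\hypothesis[\cvector{v}]$ (mass $\geq\cscalar{R}$) at most an $\opt$-fraction of labels disagree with $c$. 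With that decomposition, the list-learning guarantee $\prob<\distr|*|>{\funcsbr{c'}[\rvector{x}]=\funcsbr{c}[\rvector{x}]}\geq 1-\epsilon^4$ translates (after dividing by $\prob{\rvector{x}\in\hypothesis[\cvector{v}]}\geq\cscalar{R}$ and accounting for the $\opt$-noisy part) into your target bound $\prob{\funcsbr{c'}[\rvector{x}]\neq\rscalar{y}\cond\rvector{x}\in\hypothesis[\cvector{v}]}\leq\bigO{\opt+\epsilon^4}$; note that a second $\opt$ term appears here that your intermediate claim $\prob{\funcsbr{c'}[\rvector{x}]=\funcsbr{c}[\rvector{x}]\cond\rvector{x}\in\hypothesis[\cvector{v}]}\geq 1-\bigO{\epsilon^4}$ omits, though it is harmless since it is absorbed into the constant in $\bigO{\opt^{1/4}}$. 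With the decomposition corrected, the rest of your argument goes through exactly as the paper's does.
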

    
            We defer the proof to Appendix \ref{sec:analysis-of-personalized-prediction-algorithm}. As the proof sketch, note that the sample distribution $\distr$ can be viewed as a convex combination of a noiseless distribution $\distr|*|$, whose labels are determined by some sparse linear classifier, and a noisy distribution $\tilde{\distr}$, whose labels are produced arbitrarily. Observe that this decomposition of $\distr$ matches exactly with the definitions inlier and outlier distributions in the robust list learning problem (cf. Definition \ref{def:robust-list-learning}). As {\scshape{SparseList}} (cf. Algorithm \ref{algo:robust-list-learn}) is guaranteed to solve the robust list learning task with arbitrary precision (cf. Theorem \ref{thm:robust-list-learn-appendix}), at least one of the sparse classifiers in $L$ must be (approximately) optimal in the form of inequality \eqref{eq:loss-lower-bound-for-optimal-classifiers}.

        \subsection{Experiment Overview}
        \label{sec:experiment-overview}
            \begin{table}[ht]
                \caption{
                    Test error rates. {\scshape{Total}} and {\scshape{List}} denote the number of examples used in the entire training process (Algorithm \ref{algo:personalized-prediction} and every baseline model) and the list learning (Algorithm \ref{algo:robust-list-learn}) only. 
                    The models from left ({\scshape{LogReg}}) to right ({\scshape{Pers}}) are logistic regression, SVM with Linear kernel, SVM with RBF kernel, XGBoost tree, random forest, ERM sparse classifier ({\scshape{Sparse}}), and personalized prediction ({\scshape{Pers}}) respectively. $\,^*$ indicates statistically significant improvement with 95\% confidence (over {\scshape{Sparse}} for {\scshape{Pers}}, and over {\scshape{Pers}} for the other baselines). For Pima and Hepa, {\scshape{Pers}} obtains improvement over {\scshape{Sparse}} with 85\% confidence, and the difference from the other baselines is not significant at this level.
                }
                \label{tab:experiment-results-appendix}
                \begin{center}
                    \begin{small}
                        \begin{sc}
                            \begin{tabular}{l|ccc|ccccc|c|c}
                                \toprule
                                D/S& Total & List & Dim & LogReg & Lin & Rbf & XGB & Forest & Sparse & Pers
                                \\
                                \midrule
                                Habe& $204$ & $204$ & $3$ & $.2647$ & $.2647$ & $.2941$ & $.3529$ & $.3039$ & $.2745$ & $.2745$
                                \\
                                Pima& $512$ & $192$ & $8$ & $.2461$ & $.25$ & $.2344$ & $.2344$ & $.2304$ & $.2852$ & $.2461$
                                \\
                                Hepa& $103$ & $103$ & $20$ & $.1538$ & $.1538$ & $.1346$ & $.2115$ & $.1538$ & $.2308$ & $.1538$
                                \\
                                Hypo & $2109$ & $64$ & $20$ & $.0199^*$ & $.019^*$ & $.0285$ & $.0133^*$ & $.0142^*$ & $.0579$ & $.0379^*$
                                \\
                                Wdbc & $379$ & $48$ & $30$ & $.0368$ & $.0474$ & $.0421$ & $.0421$ & $.0579$ & $.0789$ & $.0474^*$
                                \\
                                \bottomrule
                            \end{tabular}
                        \end{sc}
                    \end{small}
                \end{center}   
            \end{table}

            We evaluated our algorithms on several UCI medical datasets that are commonly used as benchmarks \citep{grandvalet2008support,wiener2011agnostic,wiener2015agnostic}. We compare our result to a few standard machine learning models as shown in Table \ref{tab:experiment-results-appendix}. We stress that our method differs from these standard models in the key respect that we obtain a $2$-sparse linear classifier whose decision making is inherently interpretable, whereas the other models are typically not humanly understandable. Notice that our method becomes less competitive as the data dimension increases (see Appendix \ref{sec:experimental-details} for analysis). Specifically, for Haberman, our classifier is not actually ``sparse'' as the sparsity equals the data dimension. Indeed, our approach performs the best for this dataset. 

            Additionally, to demonstrate that our approach indeed improves the performance of stand alone sparse linear classifiers by learning a corresponding homogeneous halfspace subset for each of them, we also show the performance of the robust list learning algorithm (Algorithm \ref{algo:robust-list-learn}) alone. In particular, we simply run the robust list learner and select the classifier in its returned list obtaining the highest accuracy using the same training dataset (i.e., an \emph{Empirical Risk Minimizer (ERM)}).

    \section{Limitations And Future Directions} 
    \label{sec:limitations-and-future-directions}
        Several questions naturally present themselves for future work. 
        
        The first question is whether our $\bigO{\cscalar{\opt}|1/4|}$ error bound can be improved for a similarly broad family of distributions, perhaps by assuming some additional (natural) properties. Note that \citet{huang2025distributionspecific} obtained a $\bigO*{\sqrt{\opt}}$ upper bound for conditional classification under Gaussian data, so one might ask if a similar bound can be derived for our problem, given the similarity between the two problems, even under more general distributions. Note that both \citet{huang2025distributionspecific} and this work rely on a lower bound on the gradient projection of sub-optimal halfspaces (cf.\ Lemma \ref{lma:gradient-projection-lower-bound}). The proofs decompose the projection into two terms, one corresponding to the optimal loss (contributes negatively) and the other corresponding to the sub-optimal loss (contributes positively). To get a lower bound on the projection, the former term need to be upper bounded, which, in turn, requires a good \emph{tail lower bound}. When generalizing to well-behaved distributions, it becomes challenging to bound the tail from below, which leads to $\bigO{\cscalar{\opt}|1/4|}$ bound instead of $\bigO*{\sqrt{\opt}}$. 
        
        The second is to target different coverage levels. Although \citet{huang2025distributionspecific} obtained a $1/\sqrt{\log d}$ additive lower bound, getting a multiplicative upper/lower bound for general halfspaces is still an open question, even for Gaussian marginals. Alternatively, we could consider families of non-homogeneous halfspaces that are still not completely general, such as halfspaces with bounded coefficients. 
        
        Finally, we were restricted to the use of sparse linear classifiers because this was the only family of classifiers for which we had a strong robust learning guarantee. It would be interesting to learn other classes, perhaps using similar kinds of distributional assumptions.
            
    \bibliography{refs} 

    \newpage
    \appendix
    \section{Review of Robust List Learning of Sparse Linear Classifiers}\label{sec:robust-learning}
    \begin{definition}[Definition \ref{def:robust-list-learning}]
    \label{def:robust-list-learning-appendix}
        Let $\distr=\alpha \distr|*|+(1-\alpha)\tilde{\distr}$ for an \emph{inlier} distribution $\distr^*$ and \emph{outlier} distribution $\tilde{\distr}$ each supported on $\reals|d|\times\booldomain$, with $\alpha\in (0,1)$. A \emph{robust list learning} algorithm for a class of Boolean classifiers $\conceptclass$, given $\alpha$ and parameters $\epsilon,\delta\in (0,1)$, and sample access to $\distr$ such that for $(\rvector{x},\rscalar{b})$ in the support of $\distr|*|$, $\rscalar{b}=\funcsbr{c}|*|[\rvector{x}]$ for some $\funcsbr{c}|*|\in\conceptclass$, runs in time $\poly[d,\frac{1}{\alpha},\frac{1}{\epsilon},\log\frac{1}{\delta}]$, and with probability $1-\delta$ returns a list of $\ell=\poly[d,\frac{1}{\alpha},\frac{1}{\epsilon},\log\frac{1}{\delta}]$ classifiers $\lbr{\funcsbr{h}<1>,\ldots,\funcsbr{h}<\ell>}$ such that for some $\funcsbr{h}<i>$ in the list, $\prob<\distr|*|>{\funcsbr{h}<i>[\rvector{x}]=\funcsbr{c}|*|[\rvector{x}]}\geq 1-\epsilon$.
    \end{definition}
    
    \begin{algorithm}[ht]
        \caption{Robust list learning of sparse linear classifiers}
        \label{algo:robust-list-learn}
        \begin{algorithmic}[1]
            \PROCEDURE{SparseList}{$\distr, m, s$}
                \STATE $L\gets\varnothing$
                \STATE $\nu\gets \cscalar{2}|-(bs+s\log s)|$
                \STATE Sample $(\rvector{x}(1),\rscalar{y}(1)),\ldots,(\rvector{x}(m),\rscalar{y}(m))\sim \distr$
                \STATE Re-map $\rscalar{y}(i)$ from $\booldomain$ to $\binarydomain$ for all $i\in[m]$
                \FOR{$(\cscalarseq{i}<s>)\in [d]^s$ and $(\cscalarseq{j}<s>)\in [m]^{s}$}
                    \STATE $\cvector{w}\gets \left[
                        \begin{array}{ccc}
                            \rscalar{y}(j_1)\rvector{x}<i_1>(j_1)& \cdots &\rscalar{y}(j_1)\rvector{x}<i_s>(j_1)
                            \\
                            &\vdots&
                            \\
                            \rscalar{y}(j_s)\rvector{x}<i_1>(j_{s})&\cdots&\rscalar{y}(j_s)\rvector{x}<i_s>(j_{s})
                        \end{array}
                        \right]^{-1}
                        \left[
                        \begin{array}{c}
                            \rscalar{y}(j_1)-\nu
                            \\
                            \vdots
                            \\
                            \rscalar{y}(j_s)-\nu
                        \end{array}
                        \right]$
                    \STATE $L\gets L\union\lbr{\cvector{w}}$
                \ENDFOR
                \STATE \textbf{return} $L$
            \ENDPROCEDURE
        \end{algorithmic}
    \end{algorithm}

    For completeness, we now describe an algorithm to solve the robust list learning problem for sparse linear classifiers. It is based on the approach used in the algorithm for conditional sparse linear regression \cite{juba2016conditional}, using an observation by \citet{mossel-sudan2016}. We will prove the following:

    \begin{theorem}[\citet{mossel-sudan2016,juba2016conditional,huang2025distributionspecific}]\label{thm:robust-list-learn-appendix}
        Suppose the numbers are $b$-bit rational values, Algorithm \ref{algo:robust-list-learn} solves robust list-learning of linear classifiers with $s=O(1)$ nonzero coefficients, margin $\nu\geq \cscalar{2}|-(bs+s\log s)|$, and probability at least $1 - \delta$ from $m=\bigO{\frac{1}{\alpha\epsilon}(s\log d+\log\frac{1}{\delta})}$ examples in polynomial time with list size $O((md)^s)$.
    \end{theorem}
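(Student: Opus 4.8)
The plan is to separate a statistical reduction from a geometric core; the latter is the ``vertex'' observation of \citet{mossel-sudan2016}, used here exactly as in the conditional sparse linear regression algorithm of \citet{juba2016conditional}.

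\textbf{Statistical reduction.} Write $\distr=\alpha\distr|*|+(1-\alpha)\tilde{\distr}$. A Chernoff bound shows that among the $m$ i.i.d.\ draws at least $\alpha m/2$ lie in the support of the inlier distribution $\distr|*|$ with probability $\ge 1-\delta/2$, and, conditioned on which draws these are, they are i.i.d.\ from $\distr|*|$; call this inlier sub-sample $I$, on which $y^{(j)}=c^{\ast}(\mathbf{x}^{(j)})$ for the target $s$-sparse linear classifier $c^{\ast}$. Since the family of $s$-sparse linear classifiers over $\mathbb{R}^{d}$ has VC dimension $O(s\log d)$, standard realizable uniform-convergence bounds give that any $s$-sparse linear classifier consistent with \emph{all} of $I$ has error $\le\epsilon$ under $\distr|*|$ with probability $\ge 1-\delta/2$, as long as $|I|=\Omega(\tfrac1\epsilon(s\log d+\log\tfrac1\delta))$; this is exactly why drawing $m=\bigO{\tfrac1{\alpha\epsilon}(s\log d+\log\tfrac1\delta)}$ examples suffices. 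Hence it remains to show that the list $L$ produced by Algorithm~\ref{algo:robust-list-learn} always contains a weight vector whose induced $s$-sparse linear classifier agrees with every point of $I$; a union bound over the two displayed events then yields the $1-\delta$ guarantee of the definition.

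\textbf{Geometric core.} After the standard reduction normalizing the target classifier to the form used in Line~7 (absorbing the bias, and flipping the label polarity if needed), we may assume $c^{\ast}$ is the affine halfspace $\{x:\langle x,\mathbf{w}^{\ast}\rangle\le 1\}$ with $\mathbf{w}^{\ast}$ supported on a coordinate set $S^{\ast}$ of size at most, hence after padding exactly, $s$. For this $S^{\ast}$ consider the $\nu$-margin feasibility polytope $P$ of all $\mathbf{w}$ supported on $S^{\ast}$ with $\langle\mathbf{x}^{(j)},\mathbf{w}\rangle\le 1-\nu$ on positive $j\in I$ and $\langle\mathbf{x}^{(j)},\mathbf{w}\rangle\ge 1+\nu$ on negative $j\in I$. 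If $P\neq\varnothing$ --- the point needing the arithmetic input below --- and $P$ is pointed (otherwise restrict to the span inside $\mathbb{R}^{S^{\ast}}$ of $\{\mathbf{x}^{(j)}:j\in I\}$, or to the affine hull of $P$, a lower-dimensional instance using a smaller support, and recurse), then $P$ has a vertex $\mathbf{w}$ at which $s$ linearly independent constraints are tight, i.e.\ $\langle\mathbf{x}^{(j_{k})},\mathbf{w}\rangle=1-\nu\,y^{(j_{k})}$ for some $j_{1},\dots,j_{s}\in I$. Restricted to the coordinates of $S^{\ast}$, this is precisely the $s\times s$ rational linear system solved in Line~7 of Algorithm~\ref{algo:robust-list-learn} when the outer loop reaches the tuple $(i_{1},\dots,i_{s})$ listing $S^{\ast}$ and the tuple $(j_{1},\dots,j_{s})$ listing those tight points, so this $\mathbf{w}\in L$; and since $\mathbf{w}\in P$ it classifies every point of $I$ correctly with margin $\ge\nu$, so no inlier lies on its boundary. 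Finally, the list size is at most $|[d]^{s}|\cdot|[m]^{s}|=(md)^{s}$, each iteration solves an $s\times s$ rational system in $\poly[s,b]$ arithmetic, and since $s=O(1)$ the total running time is $\poly[d,\tfrac1\alpha,\tfrac1\epsilon,\log\tfrac1\delta]$.

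\textbf{The obstacle.} The crux is certifying $P\neq\varnothing$, i.e.\ that passing from the exact feasibility region $F$ (the same constraints with $\nu=0$, which is nonempty because $\mathbf{w}^{\ast}\in F$) to its $\nu$-shrinkage does not lose feasibility, for the stated $\nu\ge 2^{-(bs+s\log s)}$. Here I would use that the data coordinates are $b$-bit rationals: clearing denominators turns any $s\times s$ subsystem on $S^{\ast}$ into an integer system, so by Cramer's rule every value $\langle\mathbf{x}^{(j)},v\rangle$ evaluated at a vertex $v$ of $F$ is a rational with denominator at most $s!\,2^{bs}\le 2^{bs+s\log s}$; hence such a value is either exactly $1$ or differs from $1$ by at least $2^{-(bs+s\log s)}\ge\nu$. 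Combining this ``gap'' bound with a Cramer's-rule estimate on how far the Line~7 solution moves when the $s$ formerly-tight constraints are perturbed from $1$ to $1-\nu\,y^{(j_{k})}$ shows that the perturbation reseats those constraints strictly on the correct side while keeping every other constraint strictly satisfied, which simultaneously certifies $P\neq\varnothing$ and matches a vertex of $P$ with an output of Line~7. The remaining cases are routine: a degenerate $I$ for which $F$ is not full-dimensional is handled by the span/affine-hull restriction above (alternatively by a genericity perturbation of the data whose magnitude is dominated by the $\nu$ budget); inlier points lying on every consistent hyperplane form a $\distr|*|$-null set, or are $O(1)$ in number and can be charged to the $\epsilon$ slack; and an optimal hyperplane through the origin is absorbed by the bias reduction.
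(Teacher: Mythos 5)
Your proposal is essentially the paper's own proof: both decompose $\distr=\alpha\distr|*|+(1-\alpha)\tilde{\distr}$, use a Chernoff bound to guarantee enough inliers, invoke the VC dimension $O(s\log d)$ of $s$-sparse linear classifiers (via \citet{haussler1988quantifying} plus \citet{blumer1989learnability}/\citet{hanneke2016optimal}) for the realizable generalization bound, and use the \citet{mossel-sudan2016} observation that a consistent classifier can be found at a vertex of an $s$-dimensional LP, with Cramer's rule bounding the margin $\nu$ by $2^{-(bs+s\log s)}$ so that enumerating all $s$-tuples of coordinates and examples hits a correct solution. The one place you put more weight than the paper is the feasibility argument for the $\nu$-shrunk polytope $P$: the paper simply asserts Cramer's rule gives a gap so that the target classifier itself lies in the margin region and hence $P\neq\varnothing$, whereas you propose a vertex-of-$F$/perturbation argument and flag it as the "obstacle" — this extra care is reasonable, but it is the same gap-via-bit-complexity idea, not a different route.
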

    \begin{proof}
        We observe that the running time and list size of Algorithm \ref{algo:robust-list-learn} is clearly as promised. To see that it solves the problem, we first recall that results by \citet{blumer1989learnability} and \citet{hanneke2016optimal} showed that given $O(\frac{1}{\epsilon}(D+\log\frac{1}{\delta}))$ examples labeled by a class of VC-dimension $D$, any consistent hypotheses achieves error $\epsilon$ with probability $1-\delta$. In particular, halfspaces in $\reals|d|$ have VC-dimension $d$; \citet{haussler1988quantifying} observed that $s$-sparse linear classifiers in $\reals|d|$ have VC-dimension $s\log d$. Hence, if the data includes a set $S$ of at least $\Omega(\frac{1}{\epsilon}(s\log d+\log\frac{1}{\delta}))$ inliers and we find a $s$-sparse classifier that agrees with the labels on $S$, it achieves error $1-\epsilon$ on $S$ with probability $1-\delta/2$. Observe that in a sample of size $O(\frac{1}{\alpha\epsilon}(s\log d+\log\frac{1}{\delta}))$, with an $\alpha$ fraction of inliers, we indeed obtain $\Omega(\frac{1}{\epsilon}(s\log d+\log\frac{1}{\delta}))$ inliers with probability $1-\delta/2$.

        Now, suppose we write our linear threshold function with a standard threshold of $1$, and suppose are examples are drawn from $\reals|d|\times \{-1,1\}$. Then a classifier with weight vector $\cvector{w}$ labels $\rvector{x}$ with $1$ if $\langle\cvector{w},\rvector{x}\rangle\geq 1$, and labels $\rvector{x}$ with $-1$ if $\langle\cvector{w},\rvector{x}\rangle < 1$. We observe that by Cramer's rule, we can find a value $\nu^*>0$ (of size at least $\cscalar{2}|-(bs+s\log s)|$ if the numbers are $b$-bit rational values) such that if $\langle\cvector{w},\rvector{x}\rangle < 1$, $\langle\cvector{w},\rvector{x}\rangle \leq 1-\nu^*$. So, it is sufficient for $\langle\cvector{w},\rscalar{y}\rvector{x}\rangle \geq \rscalar{y}-\nu$  for a given $(\rvector{x},\rscalar{y})$, for some margin $\nu\geq \cscalar{2}|-(bs+s\log s)|$. Thus, to find a consistent $\cvector{w}$, it suffices to solve the linear program $\langle\cvector{w},\rscalar{y}(j)\rvector{x}(j)\rangle \geq \rscalar{y}(j)-\nu$ for each $j$th example in $S$. Observe that if we parameterize $\cvector{w}$ by only the nonzero coefficients, we obtain a linear program in $s$ dimensions, for which we can obtain a feasible solution at a vertex, given by $s$ tight constraints. Now, Algorithm~\ref{algo:robust-list-learn} enumerates \emph{all} $s$-tuples of indices and examples, which in particular therefore must include any $s$-tuple of examples in the inlier set $S$ and the $s$ nonzero coordinates of $\cvector{w}$. Hence, with probability at least $1-\delta$, $L$ indeed contains some $\cvector{w}$ that attains error $\epsilon$ on $S$, as needed.
    \end{proof}

\section{Well-Behaved Distributions}
\label{sec:well-behaved-distributions}
    We recall the formal definition of the family of \textbf{well-behaved} distributions as follows:
    \begin{definition}[Well-Behaved Distributions]
    \label{def:well-behaved-distributions}
        A distribution $\distr<\rvector{x}>$ on $\reals|d|$ is said to be $(\cscalar{K}, \cscalar{U}, \cscalar{L}, \cscalar{R})$-well-behaved if the following properties hold:
        \begin{enumerate}
            \item \textbf{$\cscalar{K}$-bounded}: there exists a constant $\cscalar{K}$ such that $\pnorm{\innerprod{\rvector{x}}{\cvector{u}}}<\cscalar{p}>\leq \cscalar{K}\cscalar{p}$ for all unit vectors $\cvector{u}\in\reals|d|$ and $\cscalar{p}\geq 1$.
            \item \textbf{$\cscalar{U}$-concentration and anti-concentration}: let $\cscalar{V}$ be any subspace with dimensionality at most $2$ and $\funcsbr{\gamma}<\cscalar{V}>$ be the corresponding probability density function of $\distr<\rvector{x}>$ on $\reals|2|$ when projected onto $\cscalar{V}$. Then, for all $\rvector{x}\in \cscalar{V}$, there exists a non-negative function $\funcsbr{p}:\reals<+>\rightarrow \reals<+>$ such that $\funcsbr{\gamma}<\cscalar{V}>[\rvector{x}]\leq\funcsbr{p}[\norm{\rvector{x}}<2>]\leq \cscalar{U}$ and $\int_{\cscalar{V}}\norm{\rvector{x}}<2>\funcsbr{p}[\norm{\rvector{x}}<2>]d\rvector{x}\leq \cscalar{U}$.
            \item \textbf{$\cscalar{L}$-anti-anti-concentration}: let $\funcsbr{\gamma}<\cvector{u}>$ be the marginal density function of $\innerprod{\rvector{x}}{\cvector{u}}$ for any unit vector $\cvector{u}\in\reals|d|$. Then $\funcsbr{\gamma}<\cvector{u}>[\innerprod{\rvector{x}}{\cvector{u}}] \geq \cscalar{L}$ for all $\abs{\innerprod{\rvector{x}}{\cvector{u}}}\leq 1$.
            \item \textbf{$\cscalar{R}$-rounded}: $\prob<\rvector{x}\in\distr<\rvector{x}>>{\rvector{x}\in\hypothesis<t>[\cvector{u}]}\geq \cscalar{R}$ for all halfspaces $\hypothesis<t>[\cvector{u}]\subseteq\reals|d|$ such that $\expect<\rvector{x}\sim\distr<\rvector{x}>>{\rvector{x}]\in\hypothesis<t>[\cvector{u}}$.
        \end{enumerate}
    \end{definition}

    In comparison to the class of distributions considered by \cite{diakonikolas2020non} for agnostic classification, we require two additional properties, boundedness and roundedness. Notice that the $\cscalar{K}$-bounded property is equivalent to a sub-exponential tail bound \cite{vershynin2018high}. Roundedness can be ensured in polynomial time by centering the data \cite{har-peled2021}, though this of course changes the sets corresponding to homogeneous halfspaces. One can verify that the distributions satisfying our definition include a wide variety of classes such as log-concave distributions \cite{lovasz2007geometry}.

    Let's see a few specific examples of well-behaved distributions.
    \begin{example}[Gaussian Distribution]
        Any Gaussian distribution $\gaussian|d|[0][\cscalar{\sigma}|2|]$ is a well-behaved distribution with $K = \sigma$, $U = \funcsbr{\max}[\sbr{\sigma\sqrt{2\pi}}|-3/2|, \sqrt{3}+\bigO{\cscalar{\sigma}|2|}]$, $L = \cscalar{e}| {\cscalar{\sigma}|-2|/2} |/\sigma\sqrt{2\pi}$, and $R = 1/2$.
    \end{example}
    \begin{proof}
        Let's first notice that the projection of a random vector $\rvector{x}\sim\gaussian|d|[0][\cscalar{\sigma}|2|]$ onto a $k\leq d$ dimension subspace will result to $\rvector{z}\sim \gaussian|k|[0][\cscalar{\sigma}|2|]$.
        
        To show $K = \sigma$, by the \emph{integral identity}, we have that
        \begin{align*}
            \pnorm{\innerprod{\rvector{x}}{\cvector{u}}}<p>|p| =& \int_0^{\infty}\prob<\rscalar{z}\sim\gaussian[0][\sigma^2]>{\rscalar{z}|p|\geq\cscalar{u}}d\cscalar{u}
            \\
            \ceq[i]& \int_0^{\infty}\prob<\rscalar{z}\sim\gaussian[0][\sigma^2]>{\abs{\rscalar{z}}\geq\cscalar{t}}p\cscalar{t}|p-1|d\cscalar{t}
            \\
            \cleq[ii]& \int_0^{\infty} 2\cscalar{e}|-t^2/2\sigma^2|p\cscalar{t}|p-1|d\cscalar{t}
            \\
            \ceq[iii]& \sbr{\sigma\sqrt{2}}|p|p\funcsbr{\Gamma}[p/2]
            \\
            \leq& \sbr{\sigma\sqrt{2}}|p|p\sbr{p/2}|p/2|
        \end{align*}
        where inequality (i) is obtained by change of variables $u = \cscalar{t}|p|$. Inequality (ii) holds due to Fact \ref{fac:subgaussian-norm-and-tail-upper-bound-of-gaussian-rv}. Then, setting $\cscalar{t}|2|=2\cscalar{\sigma}|2|s$ and using definition of Gamma function give inequality (iii). And the last inequality holds since $\funcsbr{\Gamma}[x]\leq \cscalar{x}|x|$ by Stirling's approximation. Taking the $p$th root over the above inequality gives the first property.

        For the second property $U = \funcsbr{\max}[\sbr{\sigma\sqrt{2\pi}}|-3/2|, \sqrt{3}+\bigO{\cscalar{\sigma}|2|}]$, notice that the density of any $k$-dimensional $0$-mean Gaussian distribution is upper bounded by $\sbr{\sigma\sqrt{2\pi}}|-k/2|$ by definition. Meanwhile, taking $p $ to be the density of such Gaussian distribution, it holds that
        \begin{align*}
            \int_{\reals|k|}\norm{\rvector{z}}<2>\funcsbr{p}[\norm{\rvector{z}}<2>]d\rvector{z} =& \int_{\reals|k|}\norm{\rvector{z}}<2>\funcsbr{\phi}[\norm{\rvector{z}}<2>]d\rvector{z}
            \\
            =& \expect<\rvector{z}\sim\gaussian|k|[0][\sigma^2]>{\norm{\rvector{z}}<2>}
            \\
            \leq& \sqrt{k} + \bigO{\cscalar{\sigma}|2|}
        \end{align*}
        where the last inequality can be acquired by referring to Exercise 3.1.4. of \citet{vershynin2018high}. This implies the claimed property.

        The third property $L = \cscalar{e}| {\cscalar{\sigma}|-2|/2} |/\sigma\sqrt{2\pi}$ holds because the density function of a one dimension Gaussian distribution is monotonically decrease from $0$ to $1$.

        The last property is obvious.
    \end{proof}

    To see another example, we first define the $d$-dimensional hyper-ball as follows.
    \begin{definition}[$d$-Dimensional Hyper-Ball]
    \label{def:d-dimensional-hyper-ball}
        For any $r > 0$ and $\cvector{\mu}\in\reals|d|$, we define
        \begin{equation*}
            \funcsbr{\parameterset{B}}|d|[\cvector{\mu}, r] = \lbr{\rvector{x}\in\reals|d|\cond \norm{\rvector{x} - \cvector{\mu}}<2>\leq r}
        \end{equation*}
        to be the $d$-dimensional hyper-ball of radius $r$ centered at $\cvector{\mu}$.
    \end{definition}
    \begin{fact}[Volume Of Hyper-Ball]
    \label{fac:volume-of-d-dimensional-hyper-ball}
        There is $\volume[\funcsbr{\parameterset{B}}|d|[0, r]] = {\cscalar{\pi}|d/2| \cscalar{r}|d|}/{\funcsbr{\Gamma}[d/2 - 1]}$.
    \end{fact}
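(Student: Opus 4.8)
The plan is to obtain the formula from the classical Gaussian-integral computation, which yields the surface area of the unit sphere as a byproduct and, after one more integration, the volume of the ball. First I would reduce to unit radius: the dilation $\rvector{x}\mapsto r\rvector{x}$ maps $\funcsbr{\parameterset{B}}|d|[0,1]$ onto $\funcsbr{\parameterset{B}}|d|[0,r]$ with Jacobian $r^{d}$, so $\volume[\funcsbr{\parameterset{B}}|d|[0,r]] = r^{d}\,\volume[\funcsbr{\parameterset{B}}|d|[0,1]]$ and it suffices to treat $r=1$.

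The core step is to evaluate $J_{d} := \int_{\reals|d|} e^{-\lVert \rvector{x}\rVert_{2}^{2}}\,d\rvector{x}$ in two ways. By Fubini's theorem, $J_{d}$ factors into $d$ copies of the one-dimensional Gaussian integral $\int_{\reals} e^{-t^{2}}\,dt = \sqrt{\pi}$, so $J_{d} = \pi^{d/2}$. On the other hand, passing to polar coordinates and writing $\omega_{d-1}$ for the $(d-1)$-dimensional surface measure of the unit sphere $\sphere|d-1|$, the same integral equals $\omega_{d-1}\int_{0}^{\infty}\rho^{d-1}e^{-\rho^{2}}\,d\rho$; the substitution $s=\rho^{2}$ turns the radial integral into $\tfrac12\int_{0}^{\infty}s^{d/2-1}e^{-s}\,ds = \tfrac12\,\Gamma(d/2)$ by the definition of the Gamma function. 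Equating the two evaluations gives $\omega_{d-1} = 2\pi^{d/2}/\Gamma(d/2)$.

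Finally I would integrate spherical shells of radius $\rho\in[0,1]$: $\volume[\funcsbr{\parameterset{B}}|d|[0,1]] = \int_{0}^{1}\omega_{d-1}\rho^{d-1}\,d\rho = \omega_{d-1}/d$, hence $\volume[\funcsbr{\parameterset{B}}|d|[0,r]] = 2\pi^{d/2}r^{d}/(d\,\Gamma(d/2))$. The functional equation $d\,\Gamma(d/2) = 2\cdot(d/2)\,\Gamma(d/2) = 2\,\Gamma(d/2+1)$ collapses this to $\pi^{d/2}r^{d}/\Gamma(d/2+1)$, which is the asserted expression (the argument of $\Gamma$ in the denominator of the statement should be read as $d/2+1$). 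A polar-coordinate-free alternative is induction on $d$ through the slicing identity $\volume[\funcsbr{\parameterset{B}}|d|[0,r]] = \int_{-r}^{r}\volume[\funcsbr{\parameterset{B}}|d-1|[0,\sqrt{r^{2}-t^{2}}]]\,dt$, where the integral on the right is a Beta integral and the identity $B(a,b)=\Gamma(a)\Gamma(b)/\Gamma(a+b)$ carries the closed form from dimension $d-1$ to dimension $d$. The only step in either route that needs real care is the change-of-variables justification — the $d$-dimensional polar substitution, or the Fubini-type slicing in the inductive argument — and the remaining steps are routine manipulations of the Gamma function.
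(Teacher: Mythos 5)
Your derivation is correct and is the standard Gaussian-integral argument: after reducing to $r=1$ by the scaling $x\mapsto rx$, you evaluate $\int_{\mathbb{R}^d} e^{-\lVert x\rVert_2^2}\,dx$ once by Fubini to get $\pi^{d/2}$ and once in polar coordinates to get $\omega_{d-1}\cdot\tfrac12\Gamma(d/2)$, then integrate shells of radius $\rho\in[0,1]$ to obtain $\mathrm{Vol} = \pi^{d/2}r^d/\Gamma(d/2+1)$. The paper states this Fact without proof, so there is no argument to compare against. You are also right that the formula as printed contains a typo: the denominator should be $\Gamma(d/2+1)$, not $\Gamma(d/2-1)$. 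This is corroborated by the paper's own subsequent use of Stirling's approximation, which replaces $\pi^{d'/2}/\Gamma(\cdot)$ by $\frac{1}{\sqrt{d'\pi}}\left(\frac{2\pi e}{d'}\right)^{d'/2}$; that asymptotic is only consistent with $\Gamma(d'/2+1)$ in the denominator, so your reading is the intended one.
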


    Now, we show that the uniform distribution over a large variety of compact sets are also well-behaved. 
    \begin{example}[Uniform Distribution Over Compact Sets]
        Let $\uniform[\subsets]$ denote the uniform distribution over any $\subsets\subseteq\reals|d|$, and $T\subset\reals|d|$ be a compact set such that $\volume[T] = \nu$, $\cscalar{\max}<\rvector{x}\in T>\norm{\rvector{x}}<2>\leq \cscalar{\tau}$ for some $\tau \geq 1$, and $\sup\lbr{r\cond \funcsbr*{\parameterset{B}}|d|[\cvector{\mu}<T>,r]\subseteq T} \geq 1$ where $\cvector{\mu}<T> = \expect<\rvector{x}\sim\uniform[T]>{\rvector{x}}$. Then, $\uniform[T - \cvector{\mu}<T>]$ is a well-behaved distribution such that
        \begin{equation*}
            K = \cscalar{\tau},\  U \approx \funcsbr*{\max}[\frac{\cscalar{\tau}|d'|}{\nu\sqrt{d'\pi}}\sbr{\frac{2\pi e}{d'}}|d'/2|, \cscalar{\tau}],\  L \approx \frac{1}{\nu\sqrt{d'\pi}}\sbr{\frac{2\pi e}{d'}}|d'/2|,\  R \approx \frac{1}{2\nu\sqrt{d'\pi}}\sbr{\frac{2\pi e}{d'}}|d'/2|
        \end{equation*}
        where $d' = d - \funcsbr{\mathrm{dim}}[V]$ for any subspace $V$ of dimension at most $3$.
    \end{example}
    \begin{proof}
        To show the $K$-boundedness, let's first notice that $\innerprod{\rvector{x}- \cvector{\mu}<T>}{\cvector{u}}\leq \norm{\rvector{x}- \cvector{\mu}<T>}<2>$ by the Cauchy-Schwartz inequality. Then, similar to the Gaussian example, we have that 
        \begin{align*}
            \pnorm{\innerprod{\rvector{x}- \cvector{\mu}<T>}{\cvector{u}}}<p>|p| =& \int_0^{\infty}\prob<\rscalar{x}\sim\uniform[T]>{\norm{\rvector{x}- \cvector{\mu}<T>}<2>|p|\geq\cscalar{u}}d\cscalar{u}
            \\
            \ceq[i]& \int_0^{\cscalar{\tau}|p|}\prob<\rscalar{x}\sim\uniform[T]>{\norm{\rvector{x}- \cvector{\mu}<T>}<2>|p|\geq\cscalar{u}}d\cscalar{u}
            \\
            \cleq[ii]& \int_0^{\cscalar{\tau}|p|} 1 du
            \\
            \leq& \cscalar{\tau}|p|
        \end{align*}
        where inequality (i) holds because $\cscalar{\max}<\rvector{x}\in T>\norm{\rvector{x}}<2>\leq \cscalar{\tau}$ and inequality (ii) holds because any probability is less than or equal to $1$. Again, take the $p$th root over the above inequality gives the first property.

        For the second property, denote $\rvector{z} = \rvector{x} - \cvector{\mu}<T>$, $d' = d-\funcsbr{\mathrm{dim}}[V]$, and $\proj<\cscalar{V}>[S] = \lbr*{\rvector{x}<\cscalar{V}>\cond \rvector{x}\in S}$, we have that
        \begin{align*}
            \funcsbr{\gamma}<V>[\rvector{z}] =& \int_{\proj<\cscalar{V}|\bot|>[T]}\frac{1}{\nu}d\rvector{z}
            \\
            \cleq[i]& \frac{1}{\nu}\int_{\proj<\cscalar{V}|\bot|>[\funcsbr{\parameterset{B}}|d|[0, \cscalar{\tau}]]}d\rvector{z}
            \\
            \ceq[ii]& \frac{1}{\nu}\int_{\funcsbr{\parameterset{B}}|d'|[0, \cscalar{\tau}]}d\rvector{z}
            \\
            =& \volume[\funcsbr{\parameterset{B}}|d'|[0, \cscalar{\tau}]]/\nu
            \\
            \ceq[iii]& \frac{\cscalar{\pi}|d'/2| \cscalar{\tau}|d'|}{\nu\funcsbr{\Gamma}[d'/2 - 1]}
            \\
            \approx& \frac{\cscalar{\tau}|d'|}{\nu\sqrt{d'\pi}}\sbr{\frac{2\pi e}{d'}}|d'/2|
        \end{align*}
        where inequality (i) holds because $T - \cvector{\mu}<T>\subseteq\funcsbr{\parameterset{B}}|d|[0, \cscalar{\tau}]$. Equation (ii) holds because $\cscalar{V}|\bot|$ has dimension $d - \funcsbr{\mathrm{dim}}[V]$. Equation (iii) is obtained by invoking Fact \ref{fac:volume-of-d-dimensional-hyper-ball}. The last equation results from Stirling's approximation.
        Meanwhile, we have that
        \begin{align*}
            \int_{\proj<\cscalar{V}|\bot|>[T]}\norm{\rvector{z}}<2>\funcsbr{\gamma}<V>[\rvector{z}]d\rvector{z}\leq& \cscalar{\tau}\int_{\proj<\cscalar{V}|\bot|>[T]}\funcsbr{\gamma}<V>[\rvector{z}]d\rvector{z}
            \\
            =& \cscalar{\tau}
        \end{align*}
        which completes the proof for the second property.

        For the third property, notice that it suffices to show this property holds for all $\norm{\rvector{z}}<2>\leq 1$. Therefore, for $\norm{\rvector{z}}<2>\leq 1$, we have that
        \begin{align*}
            \funcsbr{\gamma}<V>[\rvector{z}] =& \int_{\proj<\cscalar{V}|\bot|>[T]}\frac{1}{\nu}d\rvector{z}
            \\
            \cgeq[i]& \frac{1}{\nu}\int_{\proj<\cscalar{V}|\bot|>[\funcsbr{\parameterset{B}}|d|[0, \cscalar{1}]]}d\rvector{z}
            \\
            =& \frac{1}{\nu}\int_{\funcsbr{\parameterset{B}}|d'|[0, \cscalar{\tau}]}d\rvector{z}
            \\
            \ceq[ii]& \frac{\cscalar{\pi}|d'/2|}{\nu\funcsbr{\Gamma}[d'/2 - 1]}
            \\
            \approx& \frac{1}{\nu\sqrt{d'\pi}}\sbr{\frac{2\pi e}{d'}}|d'/2|
        \end{align*}
        where inequality (i) holds because we assumed $\funcsbr*{\parameterset{B}}|d|[\cvector{\mu}<T>,1]\subseteq T$. Inequality (ii) and the last equation hold due to, again, Fact \ref{fac:volume-of-d-dimensional-hyper-ball} and Stirling's approximation.

        The last property holds because any halfspace containing $\cvector{\mu}<T>$ must also contain at least a half of the hyper-ball $\funcsbr*{\parameterset{B}}|d|[\cvector{\mu}<T>,1]$, which has volume at least
        \begin{equation*}
            \frac{1}{2\nu\sqrt{d'\pi}}\sbr{\frac{2\pi e}{d'}}|d'/2|
        \end{equation*}
        by Fact \ref{fac:volume-of-d-dimensional-hyper-ball} and Stirling's approximation.
    \end{proof}
\section{Analysis of Algorithm \ref{algo:learning-reference-class}}
\label{sec:analysis-of-learning-reference-class}
    \begin{lemma}[Lemma \ref{lma:gradient-projection-lower-bound}]
    \label{lma:gradient-projection-lower-bound-appendix}
        Let $\distr$ be any distribution on $\reals|d|\times\booldomain$ with centered and $(\cscalar{K},\cscalar{U},\cscalar{L},\cscalar{R})$-well-behaved $\rvector{x}$-marginal, and define $\funcsbr{g}<\cvector{w}>[\rvector{x}, \rscalar{y}] = \rscalar{y}\cdot\rvector{x}<\cvector{w}|\bot|>\indicator[\rvector{x}\in\hypothesis[\cvector{w}]]$. Suppose there exists a unit vector $\cvector{v}\in\reals|d|$ that satisfies $\prob<\sbr{\rvector{x}, \rscalar{y}}\sim\distr>{\rscalar{y} = 1 \cond \rvector{x}\in\hypothesis[\cvector{v}]} \leq \epsilon$ for some sufficiently small $\epsilon\in(0,1/2)$, then, for every unit vector $\cvector{w}\in\reals|d|$ such that $\theta(\cvector{v}, \cvector{w})\in[0,\pi/2)$ and
        \begin{equation*}
            \prob<\sbr{\rvector{x}, \rscalar{y}}\sim\distr>{\rscalar{y} = 1 \cond \rvector{x}\in\hypothesis[\cvector{w}]}\geq (\cscalar{U}\sqrt{2(2\cscalar{K}+1)/\cscalar{R}|2|\cscalar{L}} + 1/\cscalar{R})\epsilon^{1/4},
        \end{equation*}
        there is
        \begin{equation*}
            \innerprod{\expect<(\rvector{x}, \rscalar{y})\sim\distr>{-\funcsbr{g}<\cvector{w}>[\rvector{x},\rscalar{y}]}}{\cvector*{v}<\cvector{w}|\bot|>} \geq \sqrt{\epsilon}.
        \end{equation*}
    \end{lemma}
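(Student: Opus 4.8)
The plan is to expand the inner product, split the integration region along the optimal halfspace $\hypothesis[\cvector{v}]$, and lower bound a ``good'' positive contribution while absorbing a small ``bad'' one.

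First I would rewrite the left-hand side. Since $\rscalar{y}\in\booldomain$ and $\cvector*{v}<\cvector{w}|\bot|>\in\cvector{w}|\bot|$, writing $\cvector{v} = \cos\theta\,\cvector{w} + \sin\theta\,\cvector*{v}<\cvector{w}|\bot|>$ with $\theta = \theta(\cvector{v},\cvector{w})\in[0,\pi/2)$,
\begin{equation*}
\innerprod{\expect{-\funcsbr{g}<\cvector{w}>[\rvector{x},\rscalar{y}]}}{\cvector*{v}<\cvector{w}|\bot|>} = \expect{-\innerprod{\rvector{x}}{\cvector*{v}<\cvector{w}|\bot|>}\indicator[\rscalar{y}=1,\,\rvector{x}\in\hypothesis[\cvector{w}]]}.
\end{equation*}
Partitioning $\hypothesis[\cvector{w}] = \sbr{\hypothesis[\cvector{w}]\isect\hypothesis[\cvector{v}]}\union\sbr{\hypothesis[\cvector{w}]\backslash\hypothesis[\cvector{v}]}$ into the \emph{agreement region} and the \emph{disagreement wedge} splits this into two terms $T_{\mathrm{a}}$ and $T_{\mathrm{w}}$. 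On the disagreement wedge $\innerprod{\rvector{x}}{\cvector{w}}\geq 0$ and $\innerprod{\rvector{x}}{\cvector{v}}<0$, so $\sin\theta\innerprod{\rvector{x}}{\cvector*{v}<\cvector{w}|\bot|>} = \innerprod{\rvector{x}}{\cvector{v}} - \cos\theta\innerprod{\rvector{x}}{\cvector{w}} < 0$ forces $\innerprod{\rvector{x}}{\cvector*{v}<\cvector{w}|\bot|>}\leq 0$ (and the wedge is empty when $\theta = 0$). Hence $T_{\mathrm{w}} = \expect{\abs{\innerprod{\rvector{x}}{\cvector*{v}<\cvector{w}|\bot|>}}\indicator[\rscalar{y}=1,\,\rvector{x}\in\hypothesis[\cvector{w}]\backslash\hypothesis[\cvector{v}]]}\geq 0$ is the good term, while the agreement region lies inside $\hypothesis[\cvector{v}]$, where the optimal hypothesis errs rarely.

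For the bad term, since $\hypothesis[\cvector{w}]\isect\hypothesis[\cvector{v}]\subseteq\hypothesis[\cvector{v}]$ and $\prob{\rscalar{y}=1,\,\rvector{x}\in\hypothesis[\cvector{v}]}\leq\prob{\rscalar{y}=1\cond\rvector{x}\in\hypothesis[\cvector{v}]}\leq\epsilon$, Cauchy--Schwarz together with the $\cscalar{K}$-boundedness bound $\pnorm{\innerprod{\rvector{x}}{\cvector{u}}}<2>\leq 2\cscalar{K}$ gives $\abs{T_{\mathrm{a}}}\leq\expect{\abs{\innerprod{\rvector{x}}{\cvector*{v}<\cvector{w}|\bot|>}}\indicator[\rscalar{y}=1,\,\rvector{x}\in\hypothesis[\cvector{v}]]}\leq 2\cscalar{K}\sqrt{\epsilon}$. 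For the good term I would first convert the lemma's conditional hypothesis into joint mass: by $\cscalar{R}$-roundedness every homogeneous halfspace contains the centered mean, so $\prob{\rvector{x}\in\hypothesis[\cvector{w}]}\geq\cscalar{R}$, whence $\prob{\rscalar{y}=1,\,\rvector{x}\in\hypothesis[\cvector{w}]\backslash\hypothesis[\cvector{v}]}\geq\prob{\rscalar{y}=1\cond\rvector{x}\in\hypothesis[\cvector{w}]}\cscalar{R}-\epsilon\geq\cscalar{U}\sqrt{2(2\cscalar{K}+1)/\cscalar{L}}\,\epsilon^{1/4}$, the $-\epsilon$ being dominated (as $\epsilon\leq\epsilon^{1/4}$) by the contribution of the $1/\cscalar{R}$ summand of the threshold constant. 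Then I would run a level-set argument on $\abs{\innerprod{\rvector{x}}{\cvector*{v}<\cvector{w}|\bot|>}}$: the wedge mass lying within distance $\beta$ of $\cvector{w}|\bot|$ is bounded above via anti-concentration (the $\cscalar{U}$-bound on the two-dimensional density of the projection onto $\mathrm{span}(\cvector{w},\cvector*{v}<\cvector{w}|\bot|>)$), while anti-anti-concentration (the $\cscalar{L}$-bound) controls the part of that estimate coming from how sharply the wedge is tilted toward $\cvector{w}|\bot|$ as a function of $\theta$; optimizing $\beta\approx\epsilon^{1/4}$ then yields $T_{\mathrm{w}}\geq(2\cscalar{K}+1)\sqrt{\epsilon}$, which is exactly the content of the threshold constant being $\cscalar{U}\sqrt{2(2\cscalar{K}+1)/(\cscalar{R}|2|\cscalar{L})} + 1/\cscalar{R}$. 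Combining, $T_{\mathrm{w}}+T_{\mathrm{a}}\geq(2\cscalar{K}+1)\sqrt{\epsilon}-2\cscalar{K}\sqrt{\epsilon}=\sqrt{\epsilon}$.

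The step I expect to be the main obstacle is the level-set lower bound on $T_{\mathrm{w}}$. The Gaussian analysis of \citet{huang2025distributionspecific} exploits a matching \emph{tail lower bound} to control the optimal-loss contribution tightly and obtain a $\sqrt{\epsilon}$-scale threshold; a general well-behaved marginal need not have a comparable lower tail, so one must fall back on the density lower bound $\cscalar{L}$ and can only survive with the weaker $\epsilon^{1/4}$-scale threshold — and squaring that through the level-set step is precisely what regenerates the target $\sqrt{\epsilon}$. The remaining bookkeeping is routine but must be carried out: the degenerate angles $\theta(\cvector{v},\cvector{w})=0$ (empty wedge, but then $\prob{\rscalar{y}=1\cond\rvector{x}\in\hypothesis[\cvector{w}]}\leq\epsilon$ already contradicts the hypothesis) and $\theta(\cvector{v},\cvector{w})$ close to $\pi/2$ (the wedge hugs $\cvector{w}|\bot|$), and checking that every additive $\epsilon$-order correction is dominated once $\epsilon$ is small relative to $\cscalar{K},\cscalar{U},\cscalar{L},\cscalar{R}$.
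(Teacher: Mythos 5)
Your proposal mirrors the paper's proof exactly: the same split of $\hypothesis[\cvector{w}]$ across $\hypothesis[\cvector{v}]$, the same Cauchy--Schwarz plus $\cscalar{K}$-boundedness bound of $2\cscalar{K}\sqrt{\epsilon}$ on the agreement/optimal-halfspace term, and the same level-set (subset-expectation) argument on the wedge, invoking $\cscalar{R}$-roundedness to pass from conditional to joint mass, the density upper bound $\cscalar{U}$ to control $\prob{0\leq -\innerprod{\rvector{x}}{\cvector*{v}<\cvector{w}|\bot|>}\leq\beta}$, and the density lower bound $\cscalar{L}$ to compute the expectation over the sub-$\beta$ slice, with $\beta = \sqrt{2(2\cscalar{K}+1)/\cscalar{L}}\,\epsilon^{1/4}$ giving $(2\cscalar{K}+1)\sqrt{\epsilon}$. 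Your diagnosis of why the threshold degrades to $\epsilon^{1/4}$ (lack of a tail lower bound for general well-behaved marginals) is precisely the point the paper makes in its limitations discussion.
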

    \begin{proof}
        For conciseness, let $\theta = \theta(\cvector{v},\cvector{w})$ and define two orthonormal basis $\cvector{e}<1>, \cvector{e}<2>$ such that $\cvector{w} = \cvector{e}<2>$ and $\cvector{v} = -\cvector{e}<1>\sin\theta + \cvector{e}<2>\cos\theta$, which implies $\cvector{e}<1> = -\cvector*{v}<\cvector{w}|\bot|>$. Denote $\rscalar{x}<i> = \innerprod{\rvector{x}}{\cvector{e}<i>}$ so that $\innerprod{\rvector{x}}{\cvector{w}} = \rscalar{x}<2>$ and $\innerprod{\rvector{x}}{\cvector{v}} = -\rscalar{x}<1>\sin\theta + \rscalar{x}<2>\cos\theta$. Because $\innerprod{\rvector{x}}{\cvector{e}<1>} = \innerprod*{\rscalar{x}<2>\cvector{e}<2> + \rvector{x}<\cvector{e}<2>|\bot|>}{\cvector{e}<1>} = -\innerprod{\rvector{x}<\cvector{w}|\bot|>}{\cvector*{v}<\cvector{w}|\bot|>}$, we have 
        \begin{align}
            \innerprod{\expect{-\funcsbr{g}<\cvector{w}>[\rvector{x},\rscalar{y}]}}{\cvector*{v}<\cvector{w}|\bot|>} =&\expect{-\rscalar{y}\cdot \innerprod{\rvector{x}<\cvector{w}|\bot|>}{\cvector*{v}<\cvector{w}|\bot|>}\indicator[\rvector{x}\in\hypothesis[\cvector{w}]]}\notag
            \\
            \ceq[i]&\expect{\rscalar{y}\cdot \innerprod{\rvector{x}<\cvector{w}|\bot|>}{\cvector{e}<1>}\indicator[\rscalar{x}<2>\geq 0]}\notag
            \\
            =&\expect{\rscalar{y}\cdot \rscalar{x}<1>\indicator[\rscalar{x}<2>\geq 0, \rvector{x}\in\hypothesis*[\cvector{v}]]} -\expect{\rscalar{y}\cdot \rscalar{x}<1>\indicator[\rscalar{x}<2>\geq 0, \rvector{x}\in\hypothesis[\cvector{v}]]}\notag
            \\
            \geq& \expect{\rscalar{y}\cdot \rscalar{x}<1>\indicator[\rscalar{x}<2>\geq 0, \rvector{x}\in\hypothesis*[\cvector{v}]]} -\expect{\abs{\rscalar{x}<1>}\indicator[\rscalar{x}<2>\geq 0, \rvector{x}\in\hypothesis[\cvector{v}], \rscalar{y} = 1]}\notag
            \\
            \cgeq[ii]& \expect{\rscalar{y}\cdot \rscalar{x}<1>\indicator[\rscalar{x}<2>\geq 0, \rvector{x}\in\hypothesis*[\cvector{v}]]} -\sqrt{\expect{\rscalar{x}<1>|2|}\prob{\rscalar{x}<2>\geq t\cap \rvector{x}\in\hypothesis[\cvector{v}]\cap \rscalar{y} = 1}}\notag
            \\
            \cgeq[iii]& \expect{\rscalar{y}\cdot \rscalar{x}<1>\indicator[\rscalar{x}<2>\geq 0, \rvector{x}\in\hypothesis*[\cvector{v}]]} -2\cscalar{K}\sqrt{\prob{\rscalar{y} = 1\cond \rvector{x}\in\hypothesis[\cvector{v}]}\prob{\rvector{x}\in\hypothesis[\cvector{v}]}}\notag
            \\
            \geq& \underbrace{\expect{\abs{\rscalar{x}<1>}\cdot\indicator[\rscalar{x}<1>\tan\theta > \rscalar{x}<2>\geq 0, \rscalar{y} = 1]}}_{I} -2\cscalar{K}\sqrt{\epsilon}\label{eq:gradient-projection-lower-bound-decomposition-of-gradient-projection}.
        \end{align}

        where equation (i) holds because $\rvector{x}\in\hypothesis[\cvector{w}]$ is equivalent to $\innerprod{\rvector{x}}{\cvector{w}}\geq 0$, which is equivalent to $\rscalar{x}<2>\geq 0$ by definition, inequality (ii) holds by applying Cauchy's inequality to the second expectation, inequality (iii) is obtained since $\prob{\rscalar{x}<2>\geq t\cap \rvector{x}\in\hypothesis*[\cvector{v}]\cap \rscalar{y} = 1}\leq \prob{\rvector{x}\in\hypothesis*[\cvector{v}]\cap \rscalar{y} = 1}$ as well as $\distr<\rvector{x}>$ is $\cscalar{K}$-bounded, and the last inequality holds due to optimality assumption and the fact that $\prob{\rvector{x}\in\hypothesis[\cvector{v}]}\leq 1$. 

        Then, we will apply lemma \ref{lma:subset-expection-bounds} to lower bound $I$. Observe that the event $\rscalar{x}<1>\tan\theta > \rscalar{x}<2>\geq 0$ in $I$ is a subset of event $\rscalar{x}<1>\geq 0$ because $\theta(\cvector{v},\cvector{w})\in[0, \pi/2)$. Therefore, we can view the event $\rscalar{x}<1>\geq 0$ as $T$ in lemma \ref{lma:subset-expection-bounds} and show that, by the anti-concentration property of $\distr<\rvector{x}>$, there exists a $\beta > 0$ such that $ \prob{0\leq\rscalar{x}<1>\leq \beta}\leq  \prob{\rscalar{x}<1>\tan\theta > \rscalar{x}<2>\geq 0\cap \rscalar{y} = 1}$ to apply lemma \ref{lma:subset-expection-bounds}. 
        
        Observe that, due to the anti-concentration property of $\distr<\rvector{x}>$, the density of $\rscalar{x}<1>$ is upper bounded by $\cscalar{U}$. Therefore, taking $\beta = \sqrt{2(2\cscalar{K}+1)/\cscalar{L}}\epsilon^{1/4}$ yields
        \begin{align*}
            \prob{0\leq\rscalar{x}<1>\leq \beta}\leq& \cscalar{U}\sqrt{2(2\cscalar{K}+1)/\cscalar{L}}\epsilon^{1/4}
            \\
            =& (\cscalar{U}\sqrt{2(2\cscalar{K}+1)/\cscalar{R}|2|\cscalar{L}} + \prob{\rvector{x}\in\hypothesis[\cvector{v}]}/\cscalar{R})\cscalar{R}\epsilon^{1/4} - \prob{\rvector{x}\in\hypothesis[\cvector{v}]}\cscalar{\epsilon}|1/4|
            \\
            \cleq[i]& (\cscalar{U}\sqrt{2(2\cscalar{K}+1)/\cscalar{R}|2|\cscalar{L}} + 1/\cscalar{R})\cscalar{R}\epsilon^{1/4} - \prob{\rvector{x}\in\hypothesis[\cvector{w}]\isect \rvector{x}\in\hypothesis[\cvector{v}]\isect \rscalar{y} = 1}
            \\
            \cleq[ii]& \cscalar{R}\cdot\prob{\rscalar{y} = 1\cond \rvector{x}\in\hypothesis[\cvector{w}]} - \prob{\rvector{x}\in\hypothesis[\cvector{w}]\isect \rvector{x}\in\hypothesis[\cvector{v}]\isect \rscalar{y} = 1}
            \\
            \cleq[iii]&\prob{\rvector{x}\in\hypothesis*[\cvector{v}]\isect\rvector{x}\in\hypothesis[\cvector{w}]\isect \rscalar{y} = 1}
            \\
            =& \prob{\rscalar{x}<1>\tan\theta > \rscalar{x}<2>\geq 0\isect \rscalar{y} = 1}
        \end{align*}
        where inequality (i) holds due to our assumption that $\prob{\rscalar{y} = 1\cond\rvector{x}\in\hypothesis[\cvector{v}]}\leq \epsilon\leq \cscalar{\epsilon}|1/4|$ as well as the fact that $\prob{\rvector{x}\in\hypothesis[\cvector{v}]}\leq 1$, and inequality (ii) holds because we assumed $\prob{\rscalar{y} = 1\cond \rvector{x}\in\hypothesis[\cvector{w}]} \geq (\cscalar{U}\sqrt{2(2\cscalar{K}+1)/\cscalar{R}|2|\cscalar{L}} + 1/\cscalar{R})\epsilon^{1/4}$, inequality (iii) is obtained since $\distr<\rvector{x}>$ is $\cscalar{R}$-rounded and centered so that $\prob{\rvector{x}\in\hypothesis[\cvector{w}]}\geq \cscalar{R}$.
        
        Now, applying lemma \ref{lma:subset-expection-bounds} gives
        \begin{align}
            I\geq& \expect{\rscalar{x}<1>\cdot\indicator[0\leq\rscalar{x}<1>\leq \sqrt{2(2\cscalar{K}+1)/\cscalar{L}}\epsilon^{1/4}]}\notag
            \\
            \cgeq[i]& \cscalar{L}\int_0^{\sqrt{2(2\cscalar{K}+1)/\cscalar{L}}\epsilon^{1/4}}\rscalar{x}<1>d\rscalar{x}<1>\notag
            \\
            =&(2\cscalar{K} + 1)\sqrt{\epsilon}\label{eq:gradient-projection-lower-bound-lower-bound-on-I}
        \end{align}
        where inequality (i) is due to $\cscalar{L}$-anti-anti-concentration property of $\distr<\rvector{x}>$. At last, taking inequality \eqref{eq:gradient-projection-lower-bound-lower-bound-on-I} back to equation \eqref{eq:gradient-projection-lower-bound-decomposition-of-gradient-projection} leads to the claimed result.
    \end{proof}

    The following lemma plays a key role in proving the above proposition.
    \begin{lemma}[Lemma C.3 in \citet{huang2025distributionspecific}]\label{lma:subset-expection-bounds}
        Let $\distr$ be an arbitrary distribution on $\reals|d|$, and $S,T$ be any events such that $\prob<\distr>{S\cap T} =  p$ for some $ p\in (0,1)$. Then, for any unit vector $\cvector{u}\in\reals|d|$, and parameters $\alpha,\beta$ that satisfies $\prob{T\cap \abs{\innerprod{\rvector{x}}{\cvector{u}}}\leq \beta}\leq p\leq \prob{T\cap\abs{\innerprod{\rvector{x}}{\cvector{u}}}\geq \alpha}$, there are
        \begin{equation*}
            \expect<\distr>{\abs{\innerprod{\rvector{x}}{\cvector{u}}}\cdot\indicator[T, \abs{\innerprod{\rvector{x}}{\cvector{u}}}\leq \beta]} \leq\expect<\distr>{\abs{\innerprod{\rvector{x}}{\cvector{u}}}\cdot\indicator[ S,T]}\leq \expect<\distr>{\abs{\innerprod{\rvector{x}}{\cvector{u}}}\cdot\indicator[T, \abs{\innerprod{\rvector{x}}{\cvector{u}}}\geq \alpha]}.
        \end{equation*}
    \end{lemma}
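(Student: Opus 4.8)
\emph{Proof idea.} The plan is a ``bathtub''-style rearrangement argument that uses only the nonnegativity of $Z := \abs{\innerprod{\rvector{x}}{\cvector{u}}}$ together with the fact that the two comparison events sit inside $T$. Write $A := S\isect T$, $B := T\isect\lbr{Z\leq\beta}$, and $C := T\isect\lbr{Z\geq\alpha}$; the hypotheses say exactly that $\prob<\distr>{B}\leq p = \prob<\distr>{A}\leq\prob<\distr>{C}$. Two elementary observations drive everything. First, since $A\subseteq T$, any point of $A$ lying outside $B$ is still in $T$ and hence has $Z>\beta$, while any point of $A$ lying outside $C$ is in $T$ and hence has $Z<\alpha$. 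Second, whenever two events $E,F$ satisfy $\prob<\distr>{E}\geq\prob<\distr>{F}$, subtracting $\prob<\distr>{E\isect F}$ from both sides gives $\prob<\distr>{E\setminus F}\geq\prob<\distr>{F\setminus E}$.

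For the left (lower) inequality I would cancel the common mass on $A\isect B$ inside the expectation to obtain
\begin{equation*}
\expect<\distr>{Z\cdot\indicator[A]} - \expect<\distr>{Z\cdot\indicator[B]} = \expect<\distr>{Z\cdot\indicator[A\setminus B]} - \expect<\distr>{Z\cdot\indicator[B\setminus A]}.
\end{equation*}
On $A\setminus B$ we have $Z>\beta$, so the first term on the right is at least $\beta\,\prob<\distr>{A\setminus B}$; on $B\setminus A$ we have $Z\leq\beta$, so the second term is at most $\beta\,\prob<\distr>{B\setminus A}$. Since $\prob<\distr>{A}\geq\prob<\distr>{B}$, the second observation gives $\prob<\distr>{A\setminus B}\geq\prob<\distr>{B\setminus A}$, so the right-hand side is $\geq\beta\bigl(\prob<\distr>{A\setminus B}-\prob<\distr>{B\setminus A}\bigr)\geq 0$, which is precisely $\expect<\distr>{Z\cdot\indicator[A]}\geq\expect<\distr>{Z\cdot\indicator[B]}$.

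The right (upper) inequality is the mirror image, run with $C$ in place of $B$:
\begin{equation*}
\expect<\distr>{Z\cdot\indicator[C]} - \expect<\distr>{Z\cdot\indicator[A]} = \expect<\distr>{Z\cdot\indicator[C\setminus A]} - \expect<\distr>{Z\cdot\indicator[A\setminus C]},
\end{equation*}
where now $Z\geq\alpha$ on $C\setminus A$ and $Z<\alpha$ on $A\setminus C$ (the latter exactly because $A\subseteq T$), while $\prob<\distr>{C}\geq\prob<\distr>{A}$ gives $\prob<\distr>{C\setminus A}\geq\prob<\distr>{A\setminus C}$ by the second observation; hence the difference is $\geq\alpha\bigl(\prob<\distr>{C\setminus A}-\prob<\distr>{A\setminus C}\bigr)\geq 0$.

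I do not expect a deep obstacle here: the whole argument is a handful of routine set and measure manipulations. The only place that genuinely needs care is the first observation — it is essential that $B$ and $C$ are the level sets \emph{intersected with} $T$ rather than the bare level sets $\lbr{Z\leq\beta}$ and $\lbr{Z\geq\alpha}$, since that is exactly what forces a point of $A\setminus B$ to have $Z>\beta$ and a point of $A\setminus C$ to have $Z<\alpha$; without it the threshold comparisons against $\beta$ and $\alpha$ collapse. A secondary, purely technical point is that the expectations involved are finite, so the cancellation on $A\isect B$ (resp.\ $A\isect C$) is legitimate; in our applications $Z$ has sub-exponential tails under the ambient $\cscalar{K}$-bounded assumption, and in general one notes that if either side is $+\infty$ the corresponding inequality is vacuous.
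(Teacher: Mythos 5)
Your proof is correct. The paper itself does not prove this lemma — it is imported as ``Lemma C.3 in \citet{huang2025distributionspecific}'' — so there is no internal proof to compare against, but your bathtub-style rearrangement is sound and complete. Writing $A = S\cap T$, $B = T\cap\{Z\leq\beta\}$, $C = T\cap\{Z\geq\alpha\}$ with $Z = |\langle\mathbf{x},\bm{u}\rangle|$, the identity
\begin{equation*}
\mathbb{E}[Z\mathds{1}_A] - \mathbb{E}[Z\mathds{1}_B] \;=\; \mathbb{E}[Z\mathds{1}_{A\setminus B}] - \mathbb{E}[Z\mathds{1}_{B\setminus A}]
\end{equation*}
combined with $Z>\beta$ on $A\setminus B$ (which, as you correctly emphasize, relies on $A\subseteq T$ and on $B$ being the level set \emph{intersected with} $T$), $Z\leq\beta$ on $B\setminus A$, and $\Pr[A\setminus B]\geq\Pr[B\setminus A]$ (from $\Pr[A]\geq\Pr[B]$) gives the lower bound; the mirror argument with $C$ gives the upper bound. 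The only nit is stylistic: the remark on finiteness of the expectations is prudent but, as you note, vacuous in the applications at hand, so it could be folded into a single parenthetical.
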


    \begin{lemma}[Lemma \ref{lma:angle-contraction}]
    \label{lma:angle-contraction-appendix}
        Fix a unit vector $\cvector{v}\in\reals|d|$, $\phi\in(0, \pi/2]$, and $\kappa > 0$, let $\cvector{w}, \cvector{u}\in\reals|d|$ be any vectors such that $\funcsbr{\theta}[\cvector{w},\cvector{v}]\in[\phi,\pi/2]$, $\innerprod*{\cvector*{v}<\cvector{w}|\bot|>}{\cvector{u}}\geq \kappa$, and $\innerprod{\cvector{w}}{\cvector{u}} = 0$. If
        \begin{equation*}
            \cvector{w} ' = \frac{\cvector{w} + \lambda\cvector{u}}{\norm{\cvector{w} + \lambda\cvector{u}}<2>}
        \end{equation*}
        with $\cscalar{\lambda}= \cscalar{\kappa}\cscalar{\phi}/4$, it holds that $\funcsbr{\theta}[\cvector{w} ', \cvector{v}]\leq\funcsbr{\theta}[\cvector{w}, \cvector{v}] - \cscalar{\kappa}|2|\phi/64$.
    \end{lemma}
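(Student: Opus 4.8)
The plan is to reduce the claim to a one–variable trigonometric estimate. Since only angles are involved, I would first normalize so that $\cvector{w}$ is a unit vector, and I will work in the regime $\norm{\cvector{u}}<2>\leq 1$ relevant to Algorithm \ref{algo:pgd-with-contractive-projection} (there $\cvector{u}$ is the negative of an expected gradient, whose length is bounded through the $\cscalar{K}$‑bounded property; a bound of this kind is genuinely needed, since the conclusion fails for a $\cvector{u}$ that is long and nearly parallel to $\cvector*{v}<\cvector{w}|\bot|>$). Write $\theta := \funcsbr{\theta}[\cvector{w},\cvector{v}]\in[\phi,\pi/2]$ and $\cvector{n} := \cvector*{v}<\cvector{w}|\bot|>$, so $\cvector{n}$ is a unit vector in $\cvector{w}|\bot|$ and $\cvector{v} = \cos\theta\,\cvector{w} + \sin\theta\,\cvector{n}$ with $\sin\theta\geq 0$. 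Since $\innerprod{\cvector{u}}{\cvector{w}} = 0$, decompose $\cvector{u} = \cscalar{c}\,\cvector{n} + \cvector{z}$ inside $\cvector{w}|\bot|$, where $\cscalar{c} := \innerprod{\cvector{u}}{\cvector{n}}\geq\kappa$ and $\cvector{z}$ is orthogonal to both $\cvector{n}$ and $\cvector{w}$.

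Now $\cvector{w},\cvector{n},\cvector{z}$ are pairwise orthogonal, so by the Pythagorean theorem $\norm{\cvector{w}+\lambda\cvector{u}}<2>|2| = 1 + \lambda^2\cscalar{c}|2| + \lambda^2\norm{\cvector{z}}<2>|2| = 1 + \lambda^2\norm{\cvector{u}}<2>|2|$, whereas $\innerprod{\cvector{w}+\lambda\cvector{u}}{\cvector{v}} = \cos\theta + \lambda\cscalar{c}\sin\theta$ (the $\cvector{z}$‑part drops out). Hence
\begin{equation*}
    \cos\funcsbr{\theta}[\cvector{w} ',\cvector{v}] \;=\; \frac{\cos\theta + \lambda\cscalar{c}\sin\theta}{\sqrt{1+\lambda^2\norm{\cvector{u}}<2>|2|}} \;\geq\; \frac{\cos\theta + \lambda\kappa\sin\theta}{\sqrt{1+\lambda^2}},
\end{equation*}
using $\cscalar{c}\geq\kappa$ in the numerator (which is positive since $\theta\leq\pi/2$) and $\norm{\cvector{u}}<2>\leq 1$ in the denominator. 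It then remains to show, with the prescribed $\lambda = \kappa\phi/4$, that the right–hand side is at least $\cos(\theta - \kappa^2\phi/64)$. Writing $\delta = \kappa^2\phi/64$, I would bound $\cos(\theta-\delta)\leq\cos\theta + \delta\sin\theta$ (from $\cos\delta\leq 1$, $\sin\delta\leq\delta$, and $\cos\theta,\sin\theta\geq 0$) and $\sqrt{1+\lambda^2}\leq 1 + \lambda^2/2$; after clearing denominators this reduces to $(\lambda\kappa - \delta - \tfrac{\lambda^2\delta}{2})\sin\theta \geq \tfrac{\lambda^2}{2}\cos\theta$. Substituting $\lambda = \kappa\phi/4$ makes the bracket at least $\tfrac{14}{64}\kappa^2\phi$ (the cubic term being lower order) and the right side at most $\tfrac{1}{32}\kappa^2\phi^2$, so it suffices that $\sin\theta\geq\phi/7$, which holds since $\theta\geq\phi$ and $\sin\phi\geq(2/\pi)\phi$ on $[0,\pi/2]$.

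To finish, note that both $\funcsbr{\theta}[\cvector{w} ',\cvector{v}]$ and $\theta-\delta$ lie in $[0,\pi/2)$ — the first because its cosine was just shown to be positive, the second because $\theta\geq\phi\geq\delta$ in the relevant range $\kappa\leq 1$ — and $\cos$ is strictly decreasing there, so the cosine inequality upgrades to $\funcsbr{\theta}[\cvector{w} ',\cvector{v}]\leq\funcsbr{\theta}[\cvector{w},\cvector{v}] - \kappa^2\phi/64$, as claimed. The one genuine obstacle is the trade–off hidden in the denominator: lengthening $\cvector{u}$ pulls $\cvector{w}$ toward $\cvector{v}$ faster but also inflates $\norm{\cvector{w}+\lambda\cvector{u}}<2>$ and can overshoot past $\cvector{v}$ (badly so when $\theta$ is as small as $\phi$), which is exactly why the step size must be of order $\kappa\phi$ and why $\norm{\cvector{u}}<2>$ has to be controlled; once $\lambda$ is fixed this way, the out‑of‑plane component $\cvector{z}$ enters only through the dominated term $\lambda^2\norm{\cvector{z}}<2>|2|$ in the denominator, and the whole argument collapses to the scalar inequality above.
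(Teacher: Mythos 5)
Your proof is correct and takes a genuinely different route from the paper's. The paper first shows $\innerprod{\cvector{v}}{\cvector{u}}\geq\kappa\sin\theta\geq\kappa\theta/2$, then invokes the Correlation Improvement lemma of \citet{diakonikolas2020polynomial} (Lemma \ref{lma:diakonikolas-correlation-improvement}) as a black box to obtain a lower bound on $\innerprod{\cvector{w}'}{\cvector{v}}-\innerprod{\cvector{w}}{\cvector{v}}$, and finally converts that cosine gain into an angle drop via the product-to-sum identity $\cos\funcsbr{\theta}[\cvector{w}',\cvector{v}]-\cos\funcsbr{\theta}[\cvector{w},\cvector{v}]\leq\tfrac{1}{2}\big(\funcsbr{\theta}|2|[\cvector{w},\cvector{v}]-\funcsbr{\theta}|2|[\cvector{w}',\cvector{v}]\big)$ followed by a $\sqrt{1-x}\leq 1-x/2$ bound. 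You instead compute $\cos\funcsbr{\theta}[\cvector{w}',\cvector{v}]$ in closed form by decomposing $\cvector{u}=\cscalar{c}\cvector{n}+\cvector{z}$ inside $\cvector{w}|\bot|$ and applying the Pythagorean theorem, then compare directly with $\cos(\theta-\delta)$ via elementary Taylor-type trig bounds, reducing the whole claim to one scalar inequality. This is fully self-contained, and the numerics check out: with $\lambda=\kappa\phi/4$ and $\delta=\kappa^2\phi/64$ the bracket is at least $\tfrac{14}{64}\kappa^2\phi$ (using $\lambda<1$), the right side is at most $\tfrac{1}{32}\kappa^2\phi^2$, and $\sin\theta\geq\phi/7$ holds by Jordan's inequality. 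You also flag explicitly that $\norm{\cvector{u}}<2>\leq 1$ (and hence $\kappa\leq 1$) is required; this is a fair and useful observation, since the lemma as stated omits this hypothesis even though the paper's own proof needs it — it is one of the premises of the cited Lemma \ref{lma:diakonikolas-correlation-improvement}. Both proofs likewise treat $\cvector{w}$ as a unit vector, which is also not in the statement but is harmless since only angles matter.
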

    \begin{proof}
        By the assumptions that $\innerprod{\cvector{w}}{\cvector{u}} = 0$ and $\innerprod*{\cvector*{v}<\cvector{w}|\bot|>}{\cvector{u}}\geq \kappa$, we must have that
        \begin{align*}
            \innerprod{\cvector{v}}{\cvector{u}} =& \norm{\cvector{v}<\cvector{w}|\bot|>}<2>\innerprod{\cvector*{v}<\cvector{w}|\bot|>}{\cvector{u}}
            \\
            \geq& \kappa\funcsbr{\sin}[\funcsbr{\theta}[\cvector{w}, \cvector{v}]]
            \\
            \geq& \frac{\kappa\funcsbr{\theta}[\cvector{w}, \cvector{v}]}{2}
        \end{align*}
        where the last inequality holds because $\funcsbr{\sin}[\cscalar{x}]\geq \cscalar{x}/2$ for $\cscalar{x}\in\mbr{0, \pi/2}$.
        Then, with $\cscalar{\lambda}\leq \kappa\funcsbr{\theta}[\cvector{w}, \cvector{v}]/4$, Lemma \ref{lma:diakonikolas-correlation-improvement} indicates
        \begin{equation*}
            \innerprod{\cvector{w} '}{\cvector{v}}\geq \innerprod{\cvector{w}}{\cvector{v}} + \frac{\cscalar{\lambda}\kappa\funcsbr{\theta}[\cvector{w}, \cvector{v}]}{16}
        \end{equation*}
        which implies
        \begin{equation}
            \funcsbr{\cos}[\funcsbr{\theta}[\cvector{w} ', \cvector{v}]] \geq \funcsbr{\cos}[\funcsbr{\theta}[\cvector{w}, \cvector{v}]] + \frac{\cscalar{\lambda}\kappa\funcsbr{\theta}[\cvector{w}, \cvector{v}]}{16}.\label{eq:angle-contraction-cosine-improvement}
        \end{equation}

        Because $\funcsbr{\cos}[t]$ is a decreasing function in $\mbr{0, \pi}$, we have that $\funcsbr{\theta}[\cvector{w}, \cvector{v}]\geq \funcsbr{\theta}[\cvector{w} ', \cvector{v}]$. Now, using the trigonometric identity $\funcsbr{\cos}[\cscalar{x}] - \funcsbr{\cos}[\cscalar{y}] = 2\funcsbr{\sin}[\sbr{\cscalar{y} + \cscalar{x}}/2]\funcsbr{\sin}[\sbr{\cscalar{y} - \cscalar{x}}/2]$ gives
        \begin{align}
            \funcsbr{\cos}[\funcsbr{\theta}[\cvector{w} ', \cvector{v}]] - \funcsbr{\cos}[\funcsbr{\theta}[\cvector{w}, \cvector{v}]] =& 2\funcsbr*{\sin}[\frac{\funcsbr{\theta}[\cvector{w}, \cvector{v}] + \funcsbr{\theta}[\cvector{w} ', \cvector{v}]}{2}]\funcsbr*{\sin}[\frac{\funcsbr{\theta}[\cvector{w}, \cvector{v}] - \funcsbr{\theta}[\cvector{w} ', \cvector{v}]}{2}]
            \notag
            \\
            \leq& \frac{\funcsbr{\theta}|2|[\cvector{w}, \cvector{v}] - \funcsbr{\theta}|2|[\cvector{w} ', \cvector{v}]}{2}
            \label{eq:angle-contraction-angle-decreasing-lower-bound}
        \end{align}
        where the last inequality holds because $\funcsbr{\sin}[\cscalar{x}]\leq \cscalar{x}$ for $\cscalar{x} \in[0, \pi]$. Combining inequality \eqref{eq:angle-contraction-cosine-improvement} and inequality \eqref{eq:angle-contraction-angle-decreasing-lower-bound} gives
        \begin{align*}
             \funcsbr{\theta}[\cvector{w} ', \cvector{v}]\leq& \funcsbr{\theta}[\cvector{w}, \cvector{v}]\sqrt{1 - \frac{\cscalar{\lambda}\kappa}{8\funcsbr{\theta}[\cvector{w}, \cvector{v}]}}
             \\
             \cleq[i]& \funcsbr{\theta}[\cvector{w}, \cvector{v}]\sbr{1 - \frac{\cscalar{\lambda}\kappa}{16\funcsbr{\theta}[\cvector{w}, \cvector{v}]}}
             \\
             \leq& \funcsbr{\theta}[\cvector{w}, \cvector{v}] - \frac{\cscalar{\kappa}|2|\phi}{64}
        \end{align*}
        where inequality (i) holds because $\sqrt{1 - \cscalar{x}}\leq 1 - \cscalar{x}/2$ for $\cscalar{x}\in[0, 1)$, and the final result is obtained by taking $\cscalar{\lambda}= \cscalar{\kappa}\cscalar{\phi}/4$.
    \end{proof}
    
    \begin{lemma}[Correlation Improvement \citep{diakonikolas2020polynomial}]
    \label{lma:diakonikolas-correlation-improvement}
        For unit vectors $\cvector{v}, \cvector{w}\in\reals|d|$, let $\cvector{u}\in \reals|d|$ be such that $\innerprod{\cvector{u}}{\cvector{v}}\geq c$, $\innerprod{\cvector{u}}{\cvector{w}} = 0$, and $\norm{\cvector{u}}_2\leq 1$, with $c > 0$. Then, for $\cvector{w}' = \cvector{w} + \lambda \cvector{u}$, with $\cscalar{\lambda}\leq\cscalar{c}/2$, we have that $\innerprod{\cvector*{w}'}{\cvector{v}} \geq \innerprod{\cvector{w}}{\cvector{v}} + \lambda c/8$.
    \end{lemma}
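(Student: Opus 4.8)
The final statement, Lemma~\ref{lma:diakonikolas-correlation-improvement}, is a short self-contained computation; for completeness, the plan is to write out the normalization $\innerprod{\cvector*{w}'}{\cvector{v}} = \innerprod{\cvector{w}'}{\cvector{v}}/\norm{\cvector{w}'}<2>$ explicitly and control the denominator with a single linearization of $\sqrt{1+t}$. We may assume $\lambda > 0$, the case $\lambda = 0$ being trivial, and write $a = \innerprod{\cvector{w}}{\cvector{v}}$ for brevity. First I would record the elementary consequences of the hypotheses: by Cauchy--Schwarz, since $\cvector{v},\cvector{w}$ are unit and $\norm{\cvector{u}}<2>\leq 1$, we get $\abs{a}\leq 1$ and $c\leq\innerprod{\cvector{u}}{\cvector{v}}\leq 1$, hence $\lambda\leq c/2\leq 1/2$. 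Since $\cvector{w}$ is a unit vector orthogonal to $\cvector{u}$,
\[
    \norm{\cvector{w}'}<2>|2| = \norm{\cvector{w}}<2>|2| + \lambda^2\norm{\cvector{u}}<2>|2| = 1 + \lambda^2\norm{\cvector{u}}<2>|2| \in [1,\,1+\lambda^2],
\]
so $1\leq\norm{\cvector{w}'}<2>\leq\sqrt{1+\lambda^2}<2$; and from $\sqrt{1+t}\leq 1+t/2$ (with $t=\lambda^2\norm{\cvector{u}}<2>|2|$), together with $\norm{\cvector{u}}<2>\leq 1$ and $\lambda\leq c/2$,
\[
    \norm{\cvector{w}'}<2> - 1 \leq \tfrac12\lambda^2\norm{\cvector{u}}<2>|2| \leq \tfrac12\lambda^2 \leq \tfrac14\lambda c.
\]

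Next I would expand the quantity to be bounded from below. Using $\innerprod{\cvector{w}'}{\cvector{v}} = a + \lambda\innerprod{\cvector{u}}{\cvector{v}}$,
\[
    \innerprod{\cvector*{w}'}{\cvector{v}} - a = \frac{\innerprod{\cvector{w}'}{\cvector{v}} - a\norm{\cvector{w}'}<2>}{\norm{\cvector{w}'}<2>} = \frac{a\bigl(1 - \norm{\cvector{w}'}<2>\bigr) + \lambda\innerprod{\cvector{u}}{\cvector{v}}}{\norm{\cvector{w}'}<2>}.
\]
For the numerator, the first term satisfies $a\bigl(1-\norm{\cvector{w}'}<2>\bigr) \geq -\abs{a}\bigl(\norm{\cvector{w}'}<2>-1\bigr) \geq -\tfrac14\lambda c$ by $\abs{a}\leq 1$ and the estimate above, while the second satisfies $\lambda\innerprod{\cvector{u}}{\cvector{v}}\geq\lambda c$; hence the numerator is at least $\tfrac34\lambda c$. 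Dividing by $\norm{\cvector{w}'}<2>\leq 2$ then gives $\innerprod{\cvector*{w}'}{\cvector{v}} - a \geq \tfrac38\lambda c \geq \tfrac18\lambda c$, which is exactly the claim.

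There is no genuine obstacle here; the only point worth handling with a little care is that $a=\innerprod{\cvector{w}}{\cvector{v}}$ need not be nonnegative, so I bound $\abs{a\,(1-\norm{\cvector{w}'}<2>)}$ rather than $a\,(1-\norm{\cvector{w}'}<2>)$ directly, which removes any need to split into cases on the sign of $a$ (equivalently, on whether $\theta(\cvector{w},\cvector{v})$ is acute). An alternative derivation applies $1/\sqrt{1+t}\geq 1-t/2$ directly to $\innerprod{\cvector{w}'}{\cvector{v}}/\norm{\cvector{w}'}<2>$, but that version does require treating the case $\innerprod{\cvector{w}'}{\cvector{v}} < 0$ separately, so the norm-splitting argument above is the cleaner route. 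The only external facts used are Cauchy--Schwarz and $\sqrt{1+t}\leq 1+t/2$ for $t\geq 0$.
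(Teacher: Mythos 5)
Your proof is correct, and it actually establishes the sharper constant $\tfrac{3}{8}\lambda c$ before relaxing to the stated $\tfrac{1}{8}\lambda c$; every step checks out (the orthogonality gives $\norm{\cvector{w}'}<2>|2| = 1 + \lambda^2\norm{\cvector{u}}<2>|2|$ exactly, the linearization $\sqrt{1+t}\le 1+t/2$ is valid, and the numerator/denominator split cleanly avoids any case analysis on the sign of $\innerprod{\cvector{w}}{\cvector{v}}$). Note, however, that the paper does not prove this lemma at all --- it is imported verbatim, with citation, from \citet{diakonikolas2020polynomial} --- so there is no in-paper argument to compare against; your self-contained derivation is a reasonable reconstruction of the standard argument used there, and it is good that you observed that the absolute-value bound on $a(1-\norm{\cvector{w}'}<2>)$ removes the need to assume $\theta(\cvector{w},\cvector{v})\le\pi/2$, which the surrounding analysis (Lemma~\ref{lma:angle-contraction}) does assume but the lemma statement itself does not.
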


    \begin{lemma}[Lemma \ref{lma:contractive-projection}]
    \label{lma:contractive-projection-appendix}
        Fix two vectors $\cvector{x},\cvector{v}\in\reals|d|$ such that $\norm{\cvector{v}}<2> = 1$ and $\innerprod{\cvector*{x}}{\cvector{v}}\geq 0$. Then, for any unit vector $\cvector{w}\in\reals|d|$ that satisfies $\innerprod{\cvector{w}}{\cvector*{x}} < 0$ and $\innerprod{\cvector{w}}{\cvector{v}}\geq 0$, it holds that $\funcsbr{\theta}[\cvector*{w}<\cvector{x}|\bot|>, \cvector{v}]\leq \funcsbr{\theta}[\cvector{w}, \cvector{v}]$.
    \end{lemma}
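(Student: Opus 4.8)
The plan is to collapse the statement to a one-line scalar inequality by working in coordinates adapted to $\mathbf{x}$. First I would set $\bar{\mathbf{x}}=\mathbf{e}_1$ without loss of generality and write the orthogonal decompositions $\mathbf{w}=w_1\mathbf{e}_1+\mathbf{w}_{\mathbf{x}^\perp}$ and $\mathbf{v}=v_1\mathbf{e}_1+\mathbf{v}_{\mathbf{x}^\perp}$, where $w_1=\langle\mathbf{w},\bar{\mathbf{x}}\rangle<0$ and $v_1=\langle\bar{\mathbf{x}},\mathbf{v}\rangle\ge 0$ by hypothesis. Since $\mathbf{w}$ and $\mathbf{v}$ are unit vectors, $\cos\theta(\mathbf{w},\mathbf{v})=\langle\mathbf{w},\mathbf{v}\rangle=w_1v_1+\langle\mathbf{w}_{\mathbf{x}^\perp},\mathbf{v}_{\mathbf{x}^\perp}\rangle$, whereas, using $\mathbf{w}_{\mathbf{x}^\perp}\perp\mathbf{e}_1$ and $\|\mathbf{v}\|_2=1$, one gets $\cos\theta(\mathbf{w}_{\mathbf{x}^\perp},\mathbf{v})=\langle\mathbf{w}_{\mathbf{x}^\perp},\mathbf{v}_{\mathbf{x}^\perp}\rangle/\|\mathbf{w}_{\mathbf{x}^\perp}\|_2$ (and this is unchanged if $\mathbf{w}_{\mathbf{x}^\perp}$ is normalized, so the $\bar{\cdot}$ in the statement is immaterial). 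Thus the whole lemma reduces to showing $\langle\mathbf{w}_{\mathbf{x}^\perp},\mathbf{v}_{\mathbf{x}^\perp}\rangle/\|\mathbf{w}_{\mathbf{x}^\perp}\|_2\ \ge\ w_1v_1+\langle\mathbf{w}_{\mathbf{x}^\perp},\mathbf{v}_{\mathbf{x}^\perp}\rangle$, after which monotonicity of $\cos$ on $[0,\pi]$ translates the cosine inequality into the angle inequality.

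The crucial observation making this inequality work is that the sign hypotheses force $\langle\mathbf{w}_{\mathbf{x}^\perp},\mathbf{v}_{\mathbf{x}^\perp}\rangle\ge 0$: indeed $\langle\mathbf{w},\mathbf{v}\rangle\ge 0$ rewrites as $\langle\mathbf{w}_{\mathbf{x}^\perp},\mathbf{v}_{\mathbf{x}^\perp}\rangle\ge -w_1v_1$, and $-w_1v_1=|w_1|v_1\ge 0$ because $w_1<0$ and $v_1\ge 0$. Given this, the rest is two elementary moves: (a) since $\|\mathbf{w}_{\mathbf{x}^\perp}\|_2=\sqrt{1-w_1^2}\le 1$ and the inner product is nonnegative, dividing by $\|\mathbf{w}_{\mathbf{x}^\perp}\|_2$ only increases it, so $\langle\mathbf{w}_{\mathbf{x}^\perp},\mathbf{v}_{\mathbf{x}^\perp}\rangle/\|\mathbf{w}_{\mathbf{x}^\perp}\|_2\ge\langle\mathbf{w}_{\mathbf{x}^\perp},\mathbf{v}_{\mathbf{x}^\perp}\rangle$; and (b) $w_1v_1\le 0$, so $\langle\mathbf{w}_{\mathbf{x}^\perp},\mathbf{v}_{\mathbf{x}^\perp}\rangle\ge\langle\mathbf{w}_{\mathbf{x}^\perp},\mathbf{v}_{\mathbf{x}^\perp}\rangle+w_1v_1=\cos\theta(\mathbf{w},\mathbf{v})$. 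Chaining (a) and (b) gives $\cos\theta(\mathbf{w}_{\mathbf{x}^\perp},\mathbf{v})\ge\cos\theta(\mathbf{w},\mathbf{v})$; both cosines are nonnegative, so both angles lie in $[0,\pi/2]$, where $\cos$ is decreasing, which yields $\theta(\mathbf{w}_{\mathbf{x}^\perp},\mathbf{v})\le\theta(\mathbf{w},\mathbf{v})$.

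There is no substantive obstacle here — once the coordinate decomposition is written down it is a short sign-chasing argument, and the only real insight is recognizing that the hypothesis $\langle\mathbf{w},\mathbf{v}\rangle\ge 0$ is precisely what rules out the configuration in which the projection could push $\mathbf{w}$ away from $\mathbf{v}$. The one point requiring a word of care is the degenerate case $\mathbf{w}_{\mathbf{x}^\perp}=\mathbf{0}$, in which $\theta(\mathbf{w}_{\mathbf{x}^\perp},\mathbf{v})$ is undefined; I would dispatch it at the start by noting that $\mathbf{w}_{\mathbf{x}^\perp}=\mathbf{0}$ together with $\|\mathbf{w}\|_2=1$ and $\langle\mathbf{w},\bar{\mathbf{x}}\rangle<0$ forces $\mathbf{w}=-\bar{\mathbf{x}}$, whence $\langle\mathbf{w},\mathbf{v}\rangle=-v_1\le 0$, which combined with $\langle\mathbf{w},\mathbf{v}\rangle\ge 0$ forces $v_1=0$; this is a measure-zero boundary case that may be excluded (or handled by convention), and outside it $\|\mathbf{w}_{\mathbf{x}^\perp}\|_2>0$ so all of the above divisions are legitimate.
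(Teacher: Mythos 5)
Your proof is correct and follows essentially the same strategy as the paper's: decompose $\cvector{w}$ and $\cvector{v}$ into their components along $\cvector{x}$ and along $\cvector{x}|\bot|$, observe that the sign hypotheses force the along-$\cvector{x}$ contribution $w_1 v_1$ to be non-positive and the orthogonal contribution $\innerprod{\cvector{w}<\cvector{x}|\bot|>}{\cvector{v}<\cvector{x}|\bot|>}$ to be non-negative, and note that renormalizing $\cvector{w}<\cvector{x}|\bot|>$ (dividing by $\norm{\cvector{w}<\cvector{x}|\bot|>}<2>\leq 1$) can only increase a non-negative inner product. The paper expresses this using the scalar $\alpha\geq 0$ with $\cvector*{w}<\cvector{x}|\bot|> - \cvector{w}<\cvector{x}|\bot|> = \alpha\cvector{w}<\cvector{x}|\bot|>$ rather than your explicit coordinate form, but this is a cosmetic difference; you also flag the degenerate case $\cvector{w}<\cvector{x}|\bot|>=\cvector{0}$, which the paper leaves implicit.
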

    \begin{proof}
        First and foremost, since $\cvector*{w}<\cvector{x}|\bot|>, \cvector{w},\cvector{v}$ are all unit vectors, it suffices to show that $\innerprod{\cvector*{w}<\cvector{x}|\bot|>}{\cvector{v}}\geq \innerprod{\cvector{w}}{\cvector{v}}$. Observe that we can decompose any vector $\cvector{u}\in\reals|d|$ into $\cvector{u}<\cvector{x}>$ on the direction of $\cvector{x}$ and $\cvector{u}<\cvector{x}|\bot|>$ on the orthogonal space of $\cvector{x}$ as illustrated in Figure \ref{fig:contractive-projection-appendix}. Therefore, we must have
        \begin{align*}
            \innerprod{\cvector*{w}<\cvector{x}|\bot|>}{\cvector{v}} =& \innerprod{\cvector*{w}<\cvector{x}|\bot|> - \cvector{w}<\cvector{x}|\bot|> - \cvector{w}<\cvector{x}> + \cvector{w}}{\cvector{v}}
            \\
            =& \innerprod{\cvector*{w}<\cvector{x}|\bot|> - \cvector{w}<\cvector{x}|\bot|>}{\cvector{v}<\cvector{x}|\bot|>} - \innerprod{\cvector{w}<\cvector{x}>}{\cvector{v}<\cvector{x}>} + \innerprod{\cvector{w}}{\cvector{v}}.
        \end{align*}
        
        \begin{figure}[ht]
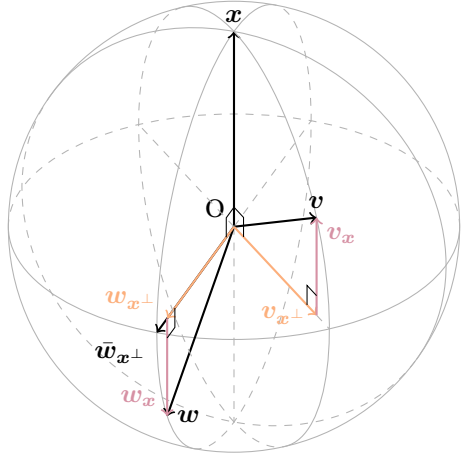

            \begin{center}
                \drawcontractiveprojection
            \end{center}
            \caption{A $3$-dimensional visualization of Contractive Projection.}\label{fig:contractive-projection-appendix}
        \end{figure}
        
        Then, we only need to show $\innerprod{\cvector{w}<\cvector{x}>}{\cvector{v}<\cvector{x}>}\leq 0$ and $\innerprod{\cvector*{w}<\cvector{x}|\bot|> - \cvector{w}<\cvector{x}|\bot|>}{\cvector{v}<\cvector{x}|\bot|>}\geq 0$. The former inequality holds because we have $\cvector{u}<\cvector{x}> = \innerprod{\cvector{u}}{\cvector*{x}}\cvector*{x}$ for any $\cvector{u}\in\reals|d|$ by definition, while $\innerprod{\cvector{w}}{\cvector*{x}} < 0$ and $\innerprod{\cvector{v}}{\cvector*{x}}\geq 0$ due to our assumption. To prove the latter inequality, note that, because $\innerprod{\cvector{w}<\cvector{x}>}{\cvector{v}<\cvector{x}>}\leq 0$, it holds that 
        \begin{align}
            \innerprod{\cvector{w}<\cvector{x}|\bot|>}{\cvector{v}<\cvector{x}|\bot|>}\geq& \innerprod{\cvector{w}<\cvector{x}|\bot|>}{\cvector{v}<\cvector{x}|\bot|>} + \innerprod{\cvector{w}<\cvector{x}>}{\cvector{v}<\cvector{x}>}
            \notag
            \\
            =& \innerprod{\cvector{w}}{\cvector{v}}
            \notag
            \\
            \geq& 0
            \label{eq:contractive-projection-wv-orthogonal-inner-product-lower-bound}
        \end{align}
        Furthermore, since $\norm{\cvector{w}<\cvector{x}|\bot|>}<2>\leq \norm{\cvector{w}}<2> = \norm{\cvector*{w}<\cvector{x}|\bot|>}<2>$ by the triangle inequality and the unit vector assumption, there must exists an $\alpha \geq 0$ such that $\cvector*{w}<\cvector{x}|\bot|> - \cvector{w}<\cvector{x}|\bot|> = \alpha\cvector{w}<\cvector{x}|\bot|>$, which, along with inequality \eqref{eq:contractive-projection-wv-orthogonal-inner-product-lower-bound}, implies $\innerprod{\cvector*{w}<\cvector{x}|\bot|> - \cvector{w}<\cvector{x}|\bot|>}{\cvector{v}<\cvector{x}|\bot|>}=\alpha\innerprod{\cvector{w}<\cvector{x}|\bot|>}{\cvector{v}<\cvector{x}|\bot|>}\geq 0$.
    \end{proof}

    \begin{lemma}[Wedge Upper Bound]
    \label{lma:wedge-upper-bound-appendix}
        Let $\distr$ be any distribution on $\reals|d|\times\booldomain$ with  $\cscalar{U}$-concentrated and anti-concentrated $\rvector{x}$-marginal, then, for any unit vectors $\cvector{w},\cvector{v}\in\reals|d|$, it holds that $\prob<(\rvector{x}, \rscalar{y})\sim\distr>{\rvector{x}\in\hypothesis[\cvector{w}]\backslash\hypothesis[\cvector{v}]}\leq \cscalar{U}\funcsbr{\theta}[\cvector{w},\cvector{v}]$.
    \end{lemma}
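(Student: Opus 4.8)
The plan is to reduce the claim to a two–dimensional estimate and then bound the mass of the wedge $\hypothesis[\cvector{w}]\backslash\hypothesis[\cvector{v}]$ by separating a bounded–density core near the origin from a tail that is controlled by the first–moment clause of the $\cscalar{U}$-concentration/anti-concentration property (property~2 of Definition~\ref{def:well-behaved-distributions}).

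First I would observe that membership of $\rvector{x}$ in $\hypothesis[\cvector{w}]\backslash\hypothesis[\cvector{v}]$ depends only on the signs of $\innerprod{\rvector{x}}{\cvector{w}}$ and $\innerprod{\rvector{x}}{\cvector{v}}$, hence only on the projection $\rvector{x}<\cscalar{V}>$ of $\rvector{x}$ onto $\cscalar{V}=\mathrm{span}\lbr{\cvector{w},\cvector{v}}$. If $\cvector{w}$ and $\cvector{v}$ are linearly dependent the statement is immediate: when $\cvector{w}=\cvector{v}$ the region is empty, and when $\cvector{w}=-\cvector{v}$ we have $\funcsbr{\theta}[\cvector{w},\cvector{v}]=\pi$ while $\cscalar{U}\geq 1/3$ (any one–dimensional marginal has density bounded by $\cscalar{U}$ and integrates to $1$, so $1\leq 2\cscalar{U}+\cscalar{U}=3\cscalar{U}$), hence $\cscalar{U}\funcsbr{\theta}[\cvector{w},\cvector{v}]\geq 1$. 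Otherwise $\cscalar{V}$ is a genuine $2$-dimensional subspace, and with $\funcsbr{\gamma}<\cscalar{V}>$ the projected density from Definition~\ref{def:well-behaved-distributions} we have $\prob{\rvector{x}\in\hypothesis[\cvector{w}]\backslash\hypothesis[\cvector{v}]}=\int_{\cscalar{W}}\funcsbr{\gamma}<\cscalar{V}>[\rvector{z}]\,d\rvector{z}$, where $\cscalar{W}\subseteq\cscalar{V}\cong\reals|2|$ is the image of the wedge.

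Next I would analyze $\cscalar{W}$ geometrically: choosing coordinates on $\cscalar{V}$ with $\cvector{w}=(1,0)$ and $\cvector{v}=(\cos\funcsbr{\theta}[\cvector{w},\cvector{v}],\sin\funcsbr{\theta}[\cvector{w},\cvector{v}])$ and writing $\rvector{z}$ in polar form $(r,\phi)$, one checks directly that $\lbr{\rvector{z}:\innerprod{\rvector{z}}{\cvector{w}}\geq 0}$ and $\lbr{\rvector{z}:\innerprod{\rvector{z}}{\cvector{v}}<0}$ are half–turns whose angular overlap is an arc of length exactly $\funcsbr{\theta}[\cvector{w},\cvector{v}]$; thus $\cscalar{W}$ is an infinite sector of opening angle $\funcsbr{\theta}[\cvector{w},\cvector{v}]$ with apex at the origin. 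Using $\funcsbr{\gamma}<\cscalar{V}>[\rvector{z}]\leq\funcsbr{p}[\norm{\rvector{z}}<2>]$ and integrating in polar coordinates gives
\[
\prob{\rvector{x}\in\hypothesis[\cvector{w}]\backslash\hypothesis[\cvector{v}]}\;\leq\;\funcsbr{\theta}[\cvector{w},\cvector{v}]\int_0^{\infty}r\,\funcsbr{p}[r]\,dr .
\]
To finish, I would bound $\int_0^\infty r\,\funcsbr{p}[r]\,dr$ by splitting at $r=1$: on $[0,1]$ the bound $\funcsbr{p}[r]\leq\cscalar{U}$ contributes at most $\cscalar{U}/2$, and on $[1,\infty)$ we use $r\leq r^2$ together with $\int_0^\infty r^2\funcsbr{p}[r]\,dr=\frac{1}{2\pi}\int_{\reals|2|}\norm{\rvector{z}}<2>\funcsbr{p}[\norm{\rvector{z}}<2>]\,d\rvector{z}\leq\frac{\cscalar{U}}{2\pi}$ (polar coordinates again) to get a contribution at most $\cscalar{U}/(2\pi)$. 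Hence $\int_0^\infty r\,\funcsbr{p}[r]\,dr\leq\cscalar{U}(\pi+1)/(2\pi)<\cscalar{U}$, and the claimed bound $\prob{\rvector{x}\in\hypothesis[\cvector{w}]\backslash\hypothesis[\cvector{v}]}\leq\cscalar{U}\funcsbr{\theta}[\cvector{w},\cvector{v}]$ follows.

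I do not expect a real obstacle: this is essentially a geometric/measure–theoretic computation. The only two places that need care are (i) recognizing that after projecting onto $\mathrm{span}\lbr{\cvector{w},\cvector{v}}$ the wedge is a planar sector whose opening angle is \emph{exactly} $\funcsbr{\theta}[\cvector{w},\cvector{v}]$, independent of the ambient dimension and of the planar coordinates chosen, so that the angular part of the polar integral contributes precisely this factor; and (ii) invoking the first–moment clause of property~2, not merely the $L^\infty$ density bound $\cscalar{U}$, so that the heavy tail of the (infinite–area) sector is absorbed — a naive bound of $\cscalar{U}$ times the area of the sector would be vacuous.
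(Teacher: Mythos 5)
Your proof is correct and follows the same route as the paper's: project onto $\mathrm{span}\lbr{\cvector{w},\cvector{v}}$, observe the wedge becomes a planar sector of opening angle exactly $\funcsbr{\theta}[\cvector{w},\cvector{v}]$, and pass to polar coordinates to reduce the bound to $\funcsbr{\theta}[\cvector{w},\cvector{v}]\int_0^\infty r\funcsbr{p}[r]\,dr$. Your split at $r=1$ --- using the $L^\infty$ density clause $\funcsbr{p}\leq\cscalar{U}$ on the core and the first-moment clause $\int_{\cscalar{V}}\norm{\rvector{x}}<2>\funcsbr{p}[\norm{\rvector{x}}<2>]\,d\rvector{x}\leq\cscalar{U}$ on the tail --- spells out why $\int_0^\infty r\funcsbr{p}[r]\,dr\leq\cscalar{U}$, a step the paper asserts in a single line, and your degenerate-case check for $\cvector{w}=\pm\cvector{v}$ handles a corner the paper's choice of basis $\cvector{e}<1>=-\cvector*{v}<\cvector{w}|\bot|>$ silently assumes away.
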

    \begin{proof}
        Let $\cscalar{V}$ be the subspace spanned by $\lbr{\cvector{w}, \cvector{v}}$, where we choose $\cvector{e}<2> = \cvector{w}$ and $\cvector{e}<1> = -\cvector*{v}<\cvector{w}|\bot|>$ to be a basis when projecting $\rvector{x}\sim\distr$ onto $\cscalar{V}$. Suppose $\funcsbr{\varphi}:\reals|d|\rightarrow\reals$ is the density function of $\distr<\rscalar{x}>$ and $\funcsbr{\varphi}<\cscalar{V}>$ is its projection on $\cscalar{V}$, then we have
        \begin{align*}
            \prob<(\rvector{x}, \rscalar{y})\sim\distr>{\rvector{x}\in\hypothesis[\cvector{w}]\backslash\hypothesis[\cvector{v}]} =& \int_{\rvector{x}\in\hypothesis[\cvector{w}]\backslash\hypothesis[\cvector{v}]}\funcsbr{\varphi}[\rvector{x}]d\rvector{x}
            \\
            =& \int_{\rvector{x}<\cscalar{V}>\in\hypothesis[\cvector{w}]\backslash\hypothesis[\cvector{v}]}\funcsbr{\varphi}<\cscalar{V}>[\rvector{x}<\cscalar{V}>]d\rvector{x}<\cscalar{V}>
            \\
            \cleq[i]& \int_{\rvector{x}<\cscalar{V}>\in\hypothesis[\cvector{w}]\backslash\hypothesis[\cvector{v}]}\funcsbr{p}[\norm{\rvector{x}<\cscalar{V}>}<2>]d\rvector{x}<\cscalar{V}>
            \\
            \ceq[ii]& \int_{0}^{\funcsbr{\theta}[\cvector{w},\cvector{v}]}\int_{0}^{\infty}\rscalar{r}\funcsbr{p}[\rscalar{r}]d\rscalar{r}d\rscalar{\phi}
            \\
            \leq& \cscalar{U}\funcsbr{\theta}[\cvector{w},\cvector{v}]
        \end{align*}
        where inequality (i) holds because $\distr<\rvector{x}>$ is anti-concentrated. Equation (ii) is obtained by transforming the Cartesian coordinates into Polar coordinates with $\rscalar{r} = \norm{\rvector{x}<\cscalar{V}>}<2>$, $\rscalar{x}<1> = \rscalar{r}\funcsbr{\cos}[\funcsbr{\theta}[\rvector{x}<\cscalar{V}>, \cvector{e}<1>]]$, $\rscalar{x}<2> = \rscalar{r}\funcsbr{\sin}[\funcsbr{\theta}[\rvector{x}<\cscalar{V}>, \cvector{e}<1>]]$, and, hence, $d\rvector{x}<\cscalar{V}>=d\rscalar{x}<1>d\rscalar{x}<2> = \rscalar{r}d\rscalar{r}d\rscalar{\phi}$. And, the last inequality holds, again, due to the $\cscalar{U}$-concentration property.
    \end{proof}

    \begin{proposition}[Proposition \ref{prop:cpgsd-returns-at-least-one-good-w}]
    \label{prop:cpgsd-returns-at-least-one-good-w-appendix}
         Let $\distr$ be any distribution on $\reals|d|\times\booldomain$ with centered and $(\cscalar{K}, \cscalar{U}, \cscalar{L}, \cscalar{R})$-well-behaved $\rvector{x}$-marginal and $\cvector{x}\in\reals|d|$ be an observation example with non-zero support. If there exists a unit vector $\cvector{v}\in\reals|d|$ such that $\cvector{x}\in \hypothesis[\cvector{v}]$ and
         \begin{equation*}
             \prob<\sbr{\rvector{x}, \rscalar{y}}\sim\distr>{\rscalar{y} = 1 \cond \rvector{x}\in\hypothesis[\cvector{v}]} \leq \epsilon,
         \end{equation*}
         then, Algorithm \ref{algo:pgd-with-contractive-projection} takes
         \begin{align*}
             \cscalar{T}=&32\pi\cscalar{\epsilon}|-5/4|/\sqrt{2(2\cscalar{K}+1)/\cscalar{L}},
             \\
             \quad\cscalar{\lambda}=&\sqrt{2(2\cscalar{K}+1)/\cscalar{L}}\cscalar{\epsilon}|3/4|/4,
             \\
             \abs*{\distr*}=&\bigO{\cscalar{K}|2|\funcsbr{\ln}[2T/\delta]/\epsilon},
         \end{align*}
         $\cvector{x}\in\reals|d|$ as inputs, runs in time at most $\bigO*{d\cscalar{\epsilon}|-9/4|}$, and outputs a list $\parameterset{W} = \lbr*{\cvectorseq{w}[][0](T)}$, where there exists a $\cvector{w}(t)$ that satisfies both $\cvector{x}\in\hypothesis[\cvector{w}(t)]$ and
         \begin{equation*}
             \prob<\sbr{\rvector{x}, \rscalar{y}}\sim\distr>{\rscalar{y} = 1 \cond \rvector{x}\in\hypothesis[\cvector{w}(t)]} \leq (\cscalar{U}\sqrt{2(2\cscalar{K}+1)/\cscalar{R}|2|\cscalar{L}} + 1/\cscalar{R})\epsilon^{1/4}
         \end{equation*}
         with probability at least $1 - \delta$.
    \end{proposition}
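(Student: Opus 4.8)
The plan is to analyze the idealized run of \textsc{ProjectedGD} (Algorithm~\ref{algo:pgd-with-contractive-projection}) with the population gradient $\expect{\funcsbr{g}<\cvector{w}>[\rvector{x},\rscalar{y}]}$ in place of the empirical one, and then transfer the guarantee to the actual algorithm by concentration.

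\textbf{Step 1: reduce to an angle bound.} First I would record the invariant $\cvector{x}\in\hypothesis[\cvector{w}(i)]$ for every $i$: $\cvector{w}(0)=\cvector*{x}$ gives $\innerprod{\cvector{w}(0)}{\cvector{x}}=\norm{\cvector{x}}<2>>0$; the ``else'' branch only rescales $\cvector{u}(i)$, hence preserves $\innerprod{\cdot}{\cvector{x}}\ge 0$; and the ``if'' branch makes $\innerprod{\cvector{w}(i)}{\cvector{x}}=0$. Next I would pass to the angle $\theta_i:=\theta(\cvector{w}(i),\cvector{v})$: splitting $\hypothesis[\cvector{w}]$ over $\hypothesis[\cvector{v}]$ and its complement gives $\prob{\rscalar{y}=1,\rvector{x}\in\hypothesis[\cvector{w}]}\le\prob{\rscalar{y}=1,\rvector{x}\in\hypothesis[\cvector{v}]}+\prob{\rvector{x}\in\hypothesis[\cvector{w}]\setminus\hypothesis[\cvector{v}]}\le\epsilon+U\,\theta(\cvector{w},\cvector{v})$, using the optimality hypothesis and the Wedge Upper Bound (Lemma~\ref{lma:wedge-upper-bound-appendix}), while centeredness and $R$-roundedness give the denominator $\prob{\rvector{x}\in\hypothesis[\cvector{w}]}\ge R$ (the mean $\mathbf 0$ lies in every homogeneous halfspace). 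Since $\epsilon\le\epsilon^{1/4}$, this shows that $\theta(\cvector{w},\cvector{v})\le\theta^{\ast}:=\sqrt{2(2K+1)/L}\,\epsilon^{1/4}$ already forces $\prob{\rscalar{y}=1\cond\rvector{x}\in\hypothesis[\cvector{w}]}\le(U\sqrt{2(2K+1)/(R^2L)}+1/R)\,\epsilon^{1/4}$, so it suffices to exhibit an iterate with $\theta_t\le\theta^{\ast}$.

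\textbf{Step 2: a potential argument on the angle.} Suppose toward a contradiction that every iterate $\cvector{w}(t)$, $0\le t\le T$, has loss above the target; then by the contrapositive of Step~1 every $\theta_t>\theta^{\ast}$, and since $\theta_0\le\pi/2$ (because $\innerprod{\cvector*{x}}{\cvector{v}}\ge 0$) and the angles will be shown to decrease, all $\theta_t\le\pi/2$. At step $i$, the iterate $\cvector{w}(i-1)$ has $\theta_{i-1}\in(\theta^{\ast},\pi/2)$ and large loss, so Lemma~\ref{lma:gradient-projection-lower-bound-appendix} applies --- its loss threshold is exactly the target constant --- yielding $\innerprod{\expect{-\funcsbr{g}<\cvector{w}(i-1)>[\rvector{x},\rscalar{y}]}}{\cvector*{v}<\cvector{w}(i-1)|\bot|>}\ge\sqrt{\epsilon}$. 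Since $\funcsbr{g}<\cvector{w}>$ is pointwise orthogonal to $\cvector{w}$, and $\norm{\expect{\funcsbr{g}<\cvector{w}(i-1)>[\rvector{x},\rscalar{y}]}}<2>=O(K)$ by $K$-boundedness (which keeps the update direction usable in the lemma once $\epsilon$ is small), the gradient step $\cvector{u}(i)=\cvector{w}(i-1)+\lambda\,\expect{-\funcsbr{g}<\cvector{w}(i-1)>[\rvector{x},\rscalar{y}]}$ meets the hypotheses of the Angle Contraction Lemma~\ref{lma:angle-contraction-appendix} with $\kappa=\sqrt{\epsilon}$, $\phi=\theta^{\ast}$, and $\lambda=\kappa\phi/4=\sqrt{2(2K+1)/L}\,\epsilon^{3/4}/4$, so the normalized $\cvector{u}(i)$ has angle to $\cvector{v}$ at most $\theta_{i-1}-\kappa^2\phi/64=\theta_{i-1}-\sqrt{2(2K+1)/L}\,\epsilon^{5/4}/64$. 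If the ``if'' branch fires, the Contractive Projection Lemma~\ref{lma:contractive-projection-appendix} applies (one checks $\innerprod{\cvector*{x}}{\cvector{v}}\ge 0$, $\innerprod{\cvector{u}(i)}{\cvector*{x}}<0$, and $\theta(\cvector{u}(i),\cvector{v})<\pi/2$) and only decreases the angle, so in all cases $\theta_i\le\theta_{i-1}-\sqrt{2(2K+1)/L}\,\epsilon^{5/4}/64$. Telescoping gives $\theta_T\le\pi/2-T\sqrt{2(2K+1)/L}\,\epsilon^{5/4}/64$, which with $T=32\pi\epsilon^{-5/4}/\sqrt{2(2K+1)/L}$ is at most $0$; but $\theta_T=0$ means $\cvector{w}(T)=\cvector{v}$, whose loss is at most $\epsilon$ --- contradicting the assumption. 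Hence some iterate meets the target, and by the Step~1 invariant it contains $\cvector{x}$.

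\textbf{Step 3: from population to sample, and runtime.} The actual algorithm uses $\expect<\distr*>{\funcsbr{g}<\cvector{w}>[\rvector{x},\rscalar{y}]}$; I only need its inner product with the single direction $\cvector*{v}<\cvector{w}(i-1)|\bot|>$ to stay within $\sqrt{\epsilon}/2$ of the population value, so that $\kappa=\sqrt{\epsilon}/2$ still drives Step~2 (at the cost of constant factors in $T,\lambda$). For a fixed hypothesis this scalar is an average of i.i.d.\ $\tilde O(K)$-sub-exponential variables (by $K$-boundedness), so $\abs*{\distr*}=\bigO{K^2\ln(2T/\delta)/\epsilon}$ samples make it $\sqrt{\epsilon}/2$-accurate with failure probability $\delta/(2T)$, and a union bound over the $T$ iterates gives the entire run with probability $1-\delta$. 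Each iteration costs $O(\abs*{\distr*}\cdot d)$ to form the empirical gradient plus $O(d)$ for the projections, for a total of $\bigO*{d\epsilon^{-9/4}}$.

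\textbf{Main obstacle.} I expect the delicate part to be the concentration in Step~3: the iterates $\cvector{w}(i)$ are themselves functions of $\distr*$, so the union bound over ``fixed'' hypotheses is not literally legitimate. Making it rigorous requires either re-drawing an independent batch at each iteration, or observing that all iterates lie in $\mathrm{span}(\cvector{x},\rvector{x}(1),\dots,\rvector{x}(\abs*{\distr*}))$ and running a covering argument on unit vectors there, or re-deriving the conclusion of Lemma~\ref{lma:gradient-projection-lower-bound-appendix} directly for the empirical measure; this, and tracking the extra constants it introduces into $T$ and $\lambda$, is where the real work lies. A minor separate point is the degenerate case $\innerprod{\cvector{x}}{\cvector{v}}=0$ (i.e.\ $\theta_0=\pi/2$), where Lemma~\ref{lma:gradient-projection-lower-bound-appendix} needs a limiting argument --- but a single successful step puts all later angles strictly below $\pi/2$, so it affects only iteration~$1$.
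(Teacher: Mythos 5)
Your proof takes essentially the same route as the paper: establish membership via the contractive projection, reduce ``low loss'' to ``small angle'' using the wedge upper bound (Lemma~\ref{lma:wedge-upper-bound-appendix}) and $R$-roundedness, run a potential argument through Lemmas~\ref{lma:gradient-projection-lower-bound-appendix}, \ref{lma:angle-contraction-appendix}, and \ref{lma:contractive-projection-appendix} to derive a contradiction if no iterate ever reaches the angle threshold, then transfer from population to empirical gradients by concentration. The paper organizes the conclusion as three cases plus an induction, whereas you phrase it as a telescoping potential drop, but these are the same argument.

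The ``main obstacle'' you flag is real and in fact applies to the paper's own proof: it invokes Corollary~\ref{cor:tail-bound-for-projected-gradient}, which is stated for a \emph{fixed} $\cvector{w}$, on the data-dependent iterates $\cvector{w}(i)$, and then union-bounds over $i=1,\ldots,T$; but all $T$ iterations reuse the single batch $\distr*$ (this is why the stated sample size carries only a $\ln(T/\delta)$ factor rather than a $T$ factor), so the iterates are not fixed in advance of the sample. Your proposed repairs --- fresh batches per iteration, a covering/uniform-convergence argument over the relevant low-dimensional span, or a uniform version of Lemma~\ref{lma:gradient-projection-lower-bound-appendix} --- are the standard ways to close this gap; the paper does not address it. One further bookkeeping point to verify in a careful write-up: once the empirical lower bound is only $\sqrt{\epsilon}/2$ instead of $\sqrt{\epsilon}$, you must take $\kappa=\sqrt{\epsilon}/2$ in Lemma~\ref{lma:angle-contraction-appendix}, which changes both the stepsize constraint $\lambda\leq\kappa\theta/4$ and the per-step decrease $\kappa^2\phi/64$ by constant factors, and the proposition's stated $T$ and $\lambda$ do not visibly absorb that factor of two --- the paper's proof glides over this too.
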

    \begin{proof}
        Obviously, the first condition $\cvector{x}\in\hypothesis[\cvector{w}(t)]$ must hold because the Contractive Projection (line 5-9 of Algorithm \ref{algo:pgd-with-contractive-projection}) guarantees that $\innerprod{\cvector{x}}{\cvector{w}(i)}\geq 0$ for each $i\in\mbr{T}$.

        To prove the second condition, we shall consider three possible cases. If we directly have
        \begin{equation*}
            \prob{\rscalar{y} = 1 \cond \rvector{x}\in\hypothesis[\cvector{w}(i)]} \leq (\cscalar{U}\sqrt{2(2\cscalar{K}+1)/\cscalar{R}|2|\cscalar{L}} + 1/\cscalar{R})\epsilon^{1/4}
        \end{equation*}
        in some iteration $i\in\mbr{T}$, we are done. 
        
        Instead, if some $\cvector{w}(i)$ satisfies $\funcsbr{\theta}[\cvector{w}(i),\cvector{v}] \leq \sqrt{2(2\cscalar{K}+1)/\cscalar{L}}\epsilon^{1/4}$, we must have that
        \begin{align*}
            \prob{\rscalar{y} = 1 \cond \rvector{x}\in\hypothesis[\cvector{w}(i)]} =& \frac{\prob{\rscalar{y} = 1 \cap \rvector{x}\in\hypothesis[\cvector{w}(i)] \cap \rvector{x}\in\hypothesis[\cvector{v}]} + \prob{\rscalar{y} = 1 \cap \rvector{x}\in\hypothesis[\cvector{w}(i)] \cap \rvector{x}\notin\hypothesis[\cvector{v}]}}{\prob{\rvector{x}\in\hypothesis[\cvector{w}(i)]}}
            \\
            \leq& \frac{\prob{\rscalar{y} = 1 \cap \rvector{x}\in\hypothesis[\cvector{v}]} + \prob{\rvector{x}\in\hypothesis[\cvector{w}(i)] \cap \rvector{x}\notin\hypothesis[\cvector{v}]}}{\prob{\rvector{x}\in\hypothesis[\cvector{w}(i)]}}
            \\
            \cleq[i]& \frac{\epsilon + \cscalar{U}\sqrt{2(2\cscalar{K}+1)/\cscalar{L}}\epsilon^{1/4}}{\prob{\rvector{x}\in\hypothesis[\cvector{w}(i)]}}
            \\
            \leq& (\cscalar{U}\sqrt{2(2\cscalar{K}+1)/\cscalar{R}|2|\cscalar{L}} + 1/\cscalar{R})\epsilon^{1/4}
        \end{align*}
        where inequality (i) results from an application of Lemma \ref{lma:wedge-upper-bound-appendix} and the fact that $\prob{\rvector{x}\in\hypothesis[\cvector{v}]}\leq 1$, and the last inequality holds because $\epsilon\leq\cscalar{\epsilon}|1/4|$ and $\distr<\rvector{x}>$ is $R$-Rounded. So $\funcsbr{\theta}[\cvector{w}(i),\cvector{v}] \leq \sqrt{2(2\cscalar{K}+1)/\cscalar{L}}\epsilon^{1/4}$ also gives the desired result.
        
        However, we show that the third case, where we have both
        \begin{equation*}
            \prob{\rscalar{y} = 1 \cond \rvector{x}\in\hypothesis[\cvector{w}(i)]} > (\cscalar{U}\sqrt{2(2\cscalar{K}+1)/\cscalar{R}|2|\cscalar{L}} + 1/\cscalar{R})\epsilon^{1/4}
        \end{equation*}
        and $\funcsbr{\theta}[\cvector{w}(i),\cvector{v}] > \sqrt{2(2\cscalar{K}+1)/\cscalar{L}}\epsilon^{1/4}$ in all $T$ iterations, cannot exist by contradiction. Suppose, for the sake of contradiction, both of the inequalities hold for all $i\in\mbr{T}$. We argue that the angle between $\cvector{w}(i)$ and $\cvector{v}$ monotonically decreases over iterations by induction, i.e., $\funcsbr{\theta}[\cvector{w}(i),\cvector{v}]\leq \funcsbr{\theta}[\cvector{w}(i - 1),\cvector{v}] - \cscalar{C}\cscalar{\epsilon}|5/4|$ for $\cscalar{C} = \sqrt{2(2\cscalar{K}+1)/\cscalar{L}}/64$.

        Observe that, for $\cvector{w}(0) = \cvector*{x}$, the claim is trivially true. Suppose it holds that $\funcsbr{\theta}[\cvector{w}(i),\cvector{v}]\leq \funcsbr{\theta}[\cvector{w}(i - 1),\cvector{v}] - \cscalar{C}\cscalar{\epsilon}|5/4|$ for all $\cvectorseq{w}[][0](i)$ and some constant $\cscalar{C}>0$, we wish to show $\funcsbr{\theta}[\cvector{w}(i + 1),\cvector{v}]\leq \funcsbr{\theta}[\cvector{w}(s),\cvector{v}] - \cscalar{C}\cscalar{\epsilon}|5/4|$.
        
        Note that $\funcsbr{\theta}[\cvector{w}(0),\cvector{v}]\in\mbr{0,\pi/2}$ by our assumption and the initialization step, we must have $\funcsbr{\theta}[\cvector{w}(i),\cvector{v}]\in\mbr{0, \pi/2}$ because of the inductive hypothesis. Then, due to the assumed error lower bound on $\cvector{w}(i)$, we can invoke Lemma \ref{lma:gradient-projection-lower-bound-appendix} to obtain $\innerprod{\expect{-\funcsbr{g}<\cvector{w}(i)>[\rvector{x},\rscalar{y}]}}{\cvector*{v}<\cvector{w}(i)|\bot|>} \geq \sqrt{\epsilon}$. With $\abs*{\distr*} \geq \funcsbr{\max}[\cscalar{C}<0>|2|\cscalar{K}|2|\funcsbr{\ln}[T/\delta]/\epsilon, \cscalar{C}<0>\cscalar{K}\funcsbr{\ln}[T/\delta]/\sqrt{\epsilon}]$, where $\cscalar{C}<0> > 0$ is a constant, Lemma \ref{cor:tail-bound-for-projected-gradient} gives
        \begin{equation}
            \prob*{\innerprod{\expect<\distr*>{-\funcsbr{g}<\cvector{w}(i)>[\rvector{x}, \rscalar{y}]}}{\cvector*{v}<\cvector{w}(i)|\bot|>} < \frac{\sqrt{\epsilon}}{2}}\leq \frac{\delta}{\cscalar{T}}.\label{eq:cpgd-concentration-bound-on-gradient-projection-appendix}
        \end{equation}
        
        Conditioned on $\innerprod{\expect<\distr*>{-\funcsbr{g}<\cvector{w}(i)>[\rvector{x}, \rscalar{y}]}}{\cvector*{v}<\cvector{w}(i)|\bot|>} \geq \sqrt{\epsilon}/2$, Lemma \ref{lma:angle-contraction-appendix} indicates that $\funcsbr{\theta}[\cvector{u}(i + 1),\cvector{v}]\leq \funcsbr{\theta}[\cvector{w}(i),\cvector{v}] - \cscalar{C}\cscalar{\epsilon}|5/4|$. Notice that, if $\funcsbr{\theta}[\cvector{u}(i + 1),\cvector{x}] > \pi/2$, Lemma \ref{lma:contractive-projection-appendix} will guarantee that the contractive projection (line 9) doesn't increase $\funcsbr{\theta}[\cvector{w}(i + 1),\cvector{v}]\leq \funcsbr{\theta}[\cvector{u}(i + 1),\cvector{v}]$, which completes the inductive proof.

        With $\cscalar{T} = 32\pi\cscalar{\epsilon}|-5/4|/\sqrt{2(2\cscalar{K}+1)/\cscalar{L}}$ and $\funcsbr{\theta}[\cvector{w}(0), \cvector{v}]\leq \pi/2$, taking a Union Bound on inequality \eqref{eq:cpgd-concentration-bound-on-gradient-projection-appendix} over all $T$ iterations, we must have $\funcsbr{\theta}[\cvector{w}(T),\cvector{v}]\leq \sqrt{2(2\cscalar{K}+1)/\cscalar{L}}\epsilon^{1/4}$ with probability at least $1 - \delta$, which leads to a contradiction.

        At last, the sample complexity of Algorithm \ref{algo:pgd-with-contractive-projection} is obviously $\abs*{\distr*}=\bigO{\cscalar{K}|2|\funcsbr{\ln}[2T/\delta]/\epsilon}$ as no new examples are sampled during the run. For time complexity, note that it will take $d\abs*{\distr*} = \bigO{\cscalar{K}|2|d\funcsbr{\ln}[T/\delta]/\epsilon}$ time to compute the projected gradient in each iteration, and there are $T = 32\pi\cscalar{\epsilon}|-5/4|/\sqrt{2(2\cscalar{K}+1)/\cscalar{L}}$ iterations in total. Therefore, the total running time should be $dT\abs*{\distr*} = \bigO*{d\cscalar{\epsilon}|-9/4|}$.
    \end{proof}

    With lemma \ref{prop:cpgsd-returns-at-least-one-good-w-appendix} in hand, we are now ready to prove Theorem \ref{thm:main-theorem}.
    \begin{theorem}[Theorem \ref{thm:main-theorem}]\label{thm:main-theorem-appendix}
        Let $\distr$ be any distribution on $\reals|d|\times\booldomain$ with centered and $(\cscalar{K}, \cscalar{U}, \cscalar{L}, \cscalar{R})$-well-behaved $\rvector{x}$-marginal and $\cvector{x}\in\reals|d|$ be an observation example with non-zero support. If there exists a unit vector $\cvector{v}\in\reals|d|$ such that $\cvector{x}\in\hypothesis[\cvector{v}]$ and
        \begin{equation*}
            \prob<\sbr{\rvector{x}, \rscalar{y}}\sim\distr>{\rscalar{y} = 1\cond \rvector{x}\in\hypothesis[\cvector{v}]}\geq 1 - \epsilon
        \end{equation*}
        for some sufficiently small $\epsilon$, then, with at most $\bigO*{\cscalar{\epsilon}|-1|}$ examples, Algorithm \ref{algo:learning-reference-class} runs in time at most $\bigO*{d\cscalar{\epsilon}|-9/4|}$ and returns a $\cvector{w}|*|$ such that $\cvector{x}\in\hypothesis[\cvector{w}|*|]$ and
        \begin{equation*}
            \prob<\sbr{\rvector{x}, \rscalar{y}}\sim\distr>{\rscalar{y} = 1\cond \rvector{x}\in\hypothesis[\cvector{w}|*|]} = 1 - (\cscalar{U}\sqrt{2(2\cscalar{K}+1)/\cscalar{R}|2|\cscalar{L}} + 1/\cscalar{R} + 1)\cscalar{\epsilon}|1/4|
        \end{equation*}
        with probability at least $1 - \delta$. 
    \end{theorem}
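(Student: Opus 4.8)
The plan is to reduce to Proposition~\ref{prop:cpgsd-returns-at-least-one-good-w-appendix} by a label flip and then to show that the cheap validation pass on Lines~6--7 of Algorithm~\ref{algo:learning-reference-class} picks, out of the list that \textsc{ProjectedGD} returns, a halfspace whose \emph{true} conditional accuracy matches the list guarantee up to an additive $\epsilon^{1/4}$. Since \textsc{RefClass} is just a wrapper around \textsc{ProjectedGD} with two empirical estimates, essentially all of the mathematical content already sits in Proposition~\ref{prop:cpgsd-returns-at-least-one-good-w-appendix} (and underneath it Lemma~\ref{lma:gradient-projection-lower-bound-appendix} and the angle/contraction lemmas), which we take as given; write $c_0 = \cscalar{U}\sqrt{2(2\cscalar{K}+1)/\cscalar{R}|2|\cscalar{L}} + 1/\cscalar{R}$ for the coefficient appearing there.

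\textbf{Step 1: the flipped call to \textsc{ProjectedGD}.} The sample $\distr*<1>$ is $\distr$ with labels negated, so a point lies in the ``$\rscalar{y}=1$'' class of $\distr*<1>$ exactly when it lies in the ``$\rscalar{y}=0$'' class of $\distr$. The hypothesis $\prob{\rscalar{y}=1\cond\rvector{x}\in\hypothesis[\cvector{v}]}\geq 1-\epsilon$ says $\prob{\rscalar{y}=0\cond\rvector{x}\in\hypothesis[\cvector{v}]}\leq\epsilon$, so the unit vector $\cvector{v}$, with $\cvector{x}\in\hypothesis[\cvector{v}]$, is a valid witness for Proposition~\ref{prop:cpgsd-returns-at-least-one-good-w-appendix} applied to $\distr*<1>$ (whose $\rvector{x}$-marginal equals that of $\distr$, hence is centered and well-behaved; we also use $\cvector{x}\neq 0$ so the initialization $\cvector{w}(0)=\cvector*{x}$ is well-defined). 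The parameters $T=\bigO{\epsilon^{-5/4}}$, $\lambda=\bigO{\epsilon^{3/4}}$, $\abs{\distr*<1>}=\bigO*{\epsilon^{-1}}$ set in Lines~2--4 are exactly those that the proposition requires, the suppressed distributional constants and the $\ln(1/(\epsilon\delta))$-type factor being hidden inside $\bigO*{\cdot}$ and $\abs{\distr*<1>}$ chosen large enough to bring the proposition's failure probability below $\delta/2$. Thus, with probability at least $1-\delta/2$, the call runs in time $\bigO*{d\epsilon^{-9/4}}$ and returns $\parameterset{W}=\lbr*{\cvectorseq{w}[][0](T)}$ such that every $\cvector{w}\in\parameterset{W}$ satisfies $\cvector{x}\in\hypothesis[\cvector{w}]$ (the contractive projection on Lines~5--9 forces $\innerprod{\cvector{x}}{\cvector{w}(i)}\geq 0$ at every step), and some $t_0$ satisfies $\prob{\rscalar{y}=0\cond\rvector{x}\in\hypothesis[\cvector{w}(t_0)]}\leq c_0\epsilon^{1/4}$, equivalently $\prob{\rscalar{y}=1\cond\rvector{x}\in\hypothesis[\cvector{w}(t_0)]}\geq 1-c_0\epsilon^{1/4}$.

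\textbf{Step 2: validation and the union bound.} Since $\distr<\rvector{x}>$ is centered and $\cscalar{R}$-rounded, $\prob{\rvector{x}\in\hypothesis[\cvector{w}]}\geq\cscalar{R}$ for every homogeneous halfspace, so each $\prob{\rscalar{y}=1\cond\rvector{x}\in\hypothesis[\cvector{w}]}$ is a ratio whose denominator is bounded below by a constant. Estimating numerator and denominator separately by Hoeffding's inequality, a sample $\distr*<2>$ of size $\bigO*{\epsilon^{-1/2}}$ makes the empirical conditional accuracies $\prob<\distr*<2>>{\rscalar{y}=1\cond\rvector{x}\in\hypothesis[\cvector{w}]}$ lie within additive $\tfrac12\epsilon^{1/4}$ of the true ones, simultaneously over all $\cvector{w}\in\parameterset{W}$ (here $\abs{\parameterset{W}}=T+1=\bigO{\epsilon^{-5/4}}$), with probability at least $1-\delta/2$; the $\ln\abs{\parameterset{W}}$ and $\ln(1/\delta)$ terms are absorbed into $\bigO*{\cdot}$. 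Conditioning on the good events of Steps~1--2, which fail with probability at most $\delta$ by a union bound, the maximizer $\cvector{w}|*|$ chosen on Line~7 has empirical accuracy at least that of $\cvector{w}(t_0)$, hence true accuracy at least $(1-c_0\epsilon^{1/4}-\tfrac12\epsilon^{1/4})-\tfrac12\epsilon^{1/4}=1-(c_0+1)\epsilon^{1/4}=1-(\cscalar{U}\sqrt{2(2\cscalar{K}+1)/\cscalar{R}|2|\cscalar{L}}+1/\cscalar{R}+1)\epsilon^{1/4}$, and $\cvector{x}\in\hypothesis[\cvector{w}|*|]$ because that holds for every member of $\parameterset{W}$. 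The sample complexity is $\abs{\distr*<1>}+\abs{\distr*<2>}=\bigO*{\epsilon^{-1}}$, while the validation pass costs only $\bigO{d\abs{\parameterset{W}}\abs{\distr*<2>}}=\bigO*{d\epsilon^{-7/4}}$, so the running time is dominated by \textsc{ProjectedGD} at $\bigO*{d\epsilon^{-9/4}}$. (As in Lemma~\ref{lma:gradient-projection-lower-bound-appendix}, ``$\epsilon$ sufficiently small'' is needed so the sub-optimality thresholds line up.)

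\textbf{Where the difficulty is.} Given Proposition~\ref{prop:cpgsd-returns-at-least-one-good-w-appendix}, the only nontrivial point above is the concentration in Step~2: because we are estimating \emph{conditional} probabilities, the natural estimator has variance of order $1/\prob{\rvector{x}\in\hypothesis[\cvector{w}]}$, and it is exactly the $\cscalar{R}$-roundedness of the centered $\rvector{x}$-marginal that keeps this denominator, and hence the validation sample size $\bigO*{\epsilon^{-1/2}}$, under control. Everything else --- the label translation of Step~1 and combining the two failure probabilities --- is bookkeeping.
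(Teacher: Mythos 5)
Your proposal is correct and follows essentially the same route as the paper's proof: reduce to Proposition~\ref{prop:cpgsd-returns-at-least-one-good-w-appendix} via the label flip, then use $\cscalar{R}$-roundedness to control the denominator in the empirical conditional-probability estimate (the paper packages this as Corollary~\ref{cor:conditional-chernoff-bound-of-additive-form}), take a union bound over $\parameterset{W}$, and transfer the empirical argmax's guarantee back to the true distribution. The only cosmetic differences are that the paper states its concentration step via the dedicated conditional Chernoff corollary rather than ``Hoeffding on numerator and denominator separately,'' and it allocates the two additive $\epsilon^{1/4}/2$ losses to $\cvector{w}'$ and $\cvector{w}^*$ individually, but the bookkeeping and constants come out identically to yours.
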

        
    \begin{proof}
        First and foremost, let's notice that the labels of the examples in $\distr*<1>$ are negated in Algorithm \ref{algo:learning-reference-class}. Thus, with $\cscalar{T} \geq 32\pi\cscalar{\epsilon}|-5/4|/\sqrt{2(2\cscalar{K}+1)/\cscalar{L}}$ and $\abs*{\distr*<1>}\geq \cscalar{C}\cscalar{K}|2|\funcsbr{\ln}[2T/\cscalar{\delta}]/\epsilon$ for some sufficiently large constant $\cscalar{C}$, Proposition \ref{prop:cpgsd-returns-at-least-one-good-w-appendix} guarantees that there exists a $\cvector{w} '\in\parameterset{W}$ such that
        \begin{align}
            \prob*<\sbr{\rvector{x}, \rscalar{y}}\sim\distr>{\rscalar{y} = 1\cond\rvector{x}\in\hypothesis[\cvector{w} ']} =& 1 - \prob<\sbr{\rvector{x}, \rscalar{y}}\sim\distr>{\rscalar{y} = 0\cond \rvector{x}\in\hypothesis[\cvector{w} ']}
            \notag
            \\
            \geq& 1 - (\cscalar{U}\sqrt{2(2\cscalar{K}+1)/\cscalar{R}|2|\cscalar{L}} + 1/\cscalar{R})\epsilon^{1/4} 
            \label{eq:main-theorem-initial-error-bound-appendix}
        \end{align}
        with probability at least $1 - \cscalar{\delta}/2$. 

        Then, applying Corollary \ref{cor:conditional-chernoff-bound-of-additive-form} on both $\cvector{w} '$ and $\cvector{w}|*|$ with $\abs*{\distr*<2>} = 32\funcsbr{\ln}[4T/\cscalar{\delta}]/\cscalar{R}|2|\sqrt{\epsilon}$, we have
        \begin{equation*}
            \prob*<\distr*<2>\sim \distr>{\prob*<\distr*<2>>{\rscalar{y} = 1\cond\rvector{x}\in\hypothesis[\cvector{w} ']} < \prob*<\distr>{\rscalar{y} = 1\cond\rvector{x}\in\hypothesis[\cvector{w} ']} - \frac{\cscalar{\epsilon}|1/4|}{2}}\leq \frac{\cscalar{\delta}}{2\cscalar{T}}
        \end{equation*}
        or
        \begin{equation*}
            \prob*<\distr*<2>\sim \distr>{\prob*<\distr*<2>>{\rscalar{y} = 1\cond\rvector{x}\in\hypothesis[\cvector{w}|*|]}  > \prob*<\distr>{\rscalar{y} = 1\cond\rvector{x}\in\hypothesis[\cvector{w}|*|]} + \frac{\cscalar{\epsilon}|1/4|}{2}}\leq \frac{\cscalar{\delta}}{2\cscalar{T}}.
        \end{equation*}
        We take a Union Bound over $\parameterset{W}$ to make sure the above two inequality holds simultaneously. Also, because of empirical minimization step (Line 7) of Algorithm \ref{algo:learning-reference-class}, we must have
        \begin{equation*}
            \prob*<\distr*<2>>{\rscalar{y} = 1\cond\rvector{x}\in\hypothesis[\cvector{w}|*|]} \geq \prob*<\distr*<2>>{\rscalar{y} = 1\cond\rvector{x}\in\hypothesis[\cvector{w} ']},
        \end{equation*}
        which further implies that 
        \begin{equation}
            \prob*{\prob*<\distr>{\rscalar{y} = 1\cond\rvector{x}\in\hypothesis[\cvector{w}|*|]} \geq \prob*<\distr>{\rscalar{y} = 1\cond\rvector{x}\in\hypothesis[\cvector{w} ']} - \cscalar{\epsilon}|1/4|}\leq \frac{\cscalar{\delta}}{2}.
            \label{eq:main-theorem-estimation-error-bound-appendix}
        \end{equation}
        
        Finally, take another Union Bound over inequalities \eqref{eq:main-theorem-initial-error-bound-appendix} and \eqref{eq:main-theorem-estimation-error-bound-appendix}, we can conclude that
        \begin{equation*}
            \prob*<\sbr{\rvector{x}, \rscalar{y}}\sim\distr>{\rscalar{y} = 1\cond\rvector{x}\in\hypothesis[\cvector{w} ']} \geq 1 - (\cscalar{U}\sqrt{2(2\cscalar{K}+1)/\cscalar{R}|2|\cscalar{L}} + 1/\cscalar{R} + 1)\epsilon^{1/4}
        \end{equation*}
        with probability at least $1 - \cscalar{\delta}$.

        Obviously, the sample complexity is $\bigO{\distr*<1> + \distr*<2>} = \bigO*{\cscalar{\epsilon}|-1|}$. For the time complexity, note first that step 4 of Algorithm \ref{algo:pgd-with-contractive-projection} takes $\bigO{d\abs*{\distr*<1>}} = \bigO*{\cscalar{\epsilon}|-1|}$ time to run. Hence, the running time of Algorithm \ref{algo:pgd-with-contractive-projection} is then $\bigO*{d\cscalar{\epsilon}|-9/4|}$ as $\cscalar{T} = \bigO{\cscalar{\epsilon}|-5/4|}$. Similarly, estimating the conditional probability for each $\cvector{w}\in\parameterset{W}$ at step 7 in Algorithm \ref{algo:learning-reference-class} takes $\bigO{d\abs*{\distr*<2>}} = \bigO*{\cscalar{\epsilon}|-1/2|}$ time to run. Thus, it takes $\bigO*{d\cscalar{\epsilon}|-7/4|}$ time to finish step 7. Overall, the running time of Algorithm will be $\bigO*{d\cscalar{\epsilon}|-9/4|}$. 
    \end{proof}




\section{Analysis Of Algorithm \ref{algo:personalized-prediction}}
\label{sec:analysis-of-personalized-prediction-algorithm}

    \begin{theorem}[Theorem \ref{thm:main-theorem-personalized-prediction}]
    \label{thm:main-theorem-personalized-prediction-appendix}
        Let $\distr$ be a distribution on $\reals|d|\times\booldomain$ with $(K, U, L, R)$-well-behaved $\rvector{x}$-marginal, $\conceptclass$ be a class of sparse linear classifiers on $\reals|d|\times\booldomain$ with sparsity $s = \bigO{1}$, and $\cvector{x}\in\reals|d|$ be a query point. If there exists a unit vector $\cvector{v}\in\reals|d|$ and a $c\in\conceptclass$ such that $\cvector{x}\in\hypothesis[\cvector{v}]$ and
        \begin{equation*}
            \prob<\sbr{\rvector{x}, \rscalar{y}}\sim\distr>{\funcsbr{c}[\rvector{x}]\neq \rscalar{y}\cond \rvector{x}\in\hypothesis[\cvector{v}]}\leq\opt
        \end{equation*}
        for some sufficiently small $\opt$, then, with at most $\poly[d, 1/\epsilon, 1/\delta]$ examples, Algorithm \ref{algo:personalized-prediction} will run in time $\poly[d, 1/\epsilon, 1/\delta]$ and find a classifier $\funcsbr{c}|*|$ and a halfspace $\hypothesis[\cvector{w}|*|]$ such that $\cvector{x}\in\hypothesis[\cvector{w}|*|]$ and
        \begin{equation*}
            \prob<\sbr{\rvector{x}, \rscalar{y}}\sim\distr>{\funcsbr{c}|*|[\rvector{x}]\neq \rscalar{y}\cond \rvector{x}\in\hypothesis[\cvector{w}|*|]} = \bigO{\cscalar{\opt}|1/4| + \epsilon}
        \end{equation*}
        with probability at least $1 - \delta$.
    \end{theorem}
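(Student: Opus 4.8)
The plan is to run Algorithm~\ref{algo:personalized-prediction} as written, gluing the robust list learner (Theorem~\ref{thm:robust-list-learn-appendix}) to the reference-class learner (Theorem~\ref{thm:main-theorem-appendix}). Throughout I assume, as those results require, that the coefficients are $b$-bit rationals (so the margin $\nu\geq 2^{-(bs+s\log s)}$ is met) and that the $\rvector{x}$-marginal has been centered. The first step is to expose the inlier/outlier structure of Definition~\ref{def:robust-list-learning-appendix}: take $\distr|*|$ to be $\distr$ conditioned on $\lbr{c(\rvector{x})=\rscalar{y}}$ and $\alpha=\prob<\distr>{c(\rvector{x})=\rscalar{y}}$, so that $\distr=\alpha\distr|*|+(1-\alpha)\tilde{\distr}$ and every example of $\distr|*|$ is labelled by the fixed sparse classifier $c\in\conceptclass$. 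Since the marginal is centered, $0$ lies in every homogeneous halfspace, so $R$-roundedness gives $\prob<\distr>{\rvector{x}\in\hypothesis[\cvector{v}]}\geq R$, and therefore $\alpha\geq\prob<\distr>{\rvector{x}\in\hypothesis[\cvector{v}]}\cdot(1-\opt)\geq R/2$ for sufficiently small $\opt$. Hence the $m=\bigO{(s\log d+\log(2/\delta))/\cscalar{\epsilon}|4|}$ examples of Line~3 are enough for {\scshape{SparseList}}, run with this constant $\alpha$, to solve robust list learning to accuracy $\Theta(\cscalar{\epsilon}|4|)$ with confidence $1-\delta/4$, returning a list $L$ of size $\poly[d,1/\epsilon,1/\delta]$ containing some $c'$ with $\prob<\distr|*|>{c'(\rvector{x})=c(\rvector{x})}\geq 1-\bigO{\cscalar{\epsilon}|4|}$.

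The second step turns this into the conditional-error hypothesis that {\scshape{RefClass}} requires. Since $\prob<\distr>{c'(\rvector{x})\neq c(\rvector{x})\cap c(\rvector{x})=\rscalar{y}}=\alpha\cdot\prob<\distr|*|>{c'(\rvector{x})\neq c(\rvector{x})}\leq\bigO{\cscalar{\epsilon}|4|}$ and, by the hypothesis on $c$, $\prob<\distr>{c(\rvector{x})\neq\rscalar{y}\cap\rvector{x}\in\hypothesis[\cvector{v}]}\leq\opt$, and since the event $\lbr{c'(\rvector{x})\neq\rscalar{y}\cap\rvector{x}\in\hypothesis[\cvector{v}]}$ is contained in the union of these two, a union bound followed by division by $\prob<\distr>{\rvector{x}\in\hypothesis[\cvector{v}]}\geq R$ yields $\prob<\distr>{c'(\rvector{x})\neq\rscalar{y}\cond\rvector{x}\in\hypothesis[\cvector{v}]}=\bigO{\opt+\cscalar{\epsilon}|4|}$. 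Equivalently, the distribution $\distr(c')=\distr<\rvector{x}>\times\indicator[c'(\rvector{x})=\rscalar{y}]$ (which has the same centered well-behaved $\rvector{x}$-marginal as $\distr$) satisfies the optimality condition~\eqref{eq:loss-lower-bound-for-optimal-classifiers} with witness $\cvector{v}$, $\cvector{x}\in\hypothesis[\cvector{v}]$, and $\opt$ replaced by $\bigO{\opt+\cscalar{\epsilon}|4|}$. Feeding $\distr(c')$ into {\scshape{RefClass}} in Line~7 with error parameter $\Theta(\opt+\cscalar{\epsilon}|4|)$ — the $\opt+\cscalar{\epsilon}|4|$ of the pseudocode, up to the suppressed distributional constants — Theorem~\ref{thm:main-theorem-appendix} returns $\cvector{w}(c')$ with $\cvector{x}\in\hypothesis[\cvector{w}(c')]$ and $\prob<\distr>{c'(\rvector{x})\neq\rscalar{y}\cond\rvector{x}\in\hypothesis[\cvector{w}(c')]}=\bigO{(\opt+\cscalar{\epsilon}|4|)^{1/4}}=\bigO{\cscalar{\opt}|1/4|+\epsilon}$, using $(a+b)^{1/4}\leq a^{1/4}+b^{1/4}$. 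Hence the pair $(c',\cvector{w}(c'))\in\parameterset{W}$ already meets the target, and it contains $\cvector{x}$.

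The third step checks that the empirical minimization in Lines~10--11 costs only $\bigO{\epsilon}$. For each of the $\abs{L}$ pairs $(c,\cvector{w}(c))\in\parameterset{W}$ we have $\prob<\distr>{\rvector{x}\in\hypothesis[\cvector{w}(c)]}\geq R$ by roundedness (the $\cvector{w}(c)$ are homogeneous), so a conditional Chernoff bound (Corollary~\ref{cor:conditional-chernoff-bound-of-additive-form}) shows that the $\bigO{\ln(d/\epsilon\delta)/\cscalar{\epsilon}|2|}$ fresh examples $\distr*$ estimate every $\prob<\distr>{c(\rvector{x})\neq\rscalar{y}\cond\rvector{x}\in\hypothesis[\cvector{w}(c)]}$ to additive accuracy $\epsilon$, simultaneously over all pairs, with probability $1-\delta/4$; here $\ln\abs{L}=\bigO{\log(d/\epsilon\delta)}$, consistent with Line~10. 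On that event the empirical error of $(c',\cvector{w}(c'))$ is $\bigO{\cscalar{\opt}|1/4|+\epsilon}$, the empirical minimizer $(c^*,\cvector{w}|*|)$ has no larger empirical error, and hence has true conditional error $\bigO{\cscalar{\opt}|1/4|+\epsilon}$; moreover $\cvector{x}\in\hypothesis[\cvector{w}|*|]$ because {\scshape{RefClass}} always outputs a halfspace through $\cvector{x}$. A union bound over the failure events of {\scshape{SparseList}}, of the $\abs{L}$ {\scshape{RefClass}} calls (each run at confidence $\delta/2\abs{L}$), and of the estimation step keeps the total failure probability below $\delta$. Finally, with $s=\bigO{1}$ we have $\abs{L}=\bigO{(md)^s}=\poly[d,1/\epsilon,1/\delta]$, so {\scshape{SparseList}} runs in $\bigO{(md)^s}$ time, each {\scshape{RefClass}} call runs in $\bigO{d(\opt+\cscalar{\epsilon}|4|)^{-9/4}}$ time, and the estimation step is polynomial, giving total time and sample complexity $\poly[d,1/\epsilon,1/\delta]$.

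The step I expect to be the main obstacle is the transfer in the second paragraph. The list learner certifies $c'$ only on the reweighted inlier distribution $\distr|*|$ — whose $\rvector{x}$-marginal is $\distr<\rvector{x}>$ tilted by $\prob<\distr>{c(\rvector{x})=\rscalar{y}\cond\rvector{x}}$ and which places mass both inside and outside $\hypothesis[\cvector{v}]$ — whereas Theorem~\ref{thm:main-theorem-appendix} needs a bound on $\prob<\distr>{c'(\rvector{x})\neq\rscalar{y}\cond\rvector{x}\in\hypothesis[\cvector{v}]}$ under the original $\distr$. The population lower bound $\alpha\geq R/2$ (from roundedness) together with the simple event-splitting above is exactly what bridges these, and it is this bridge that forces the $\cscalar{\epsilon}|4|$ precision in the list-learning stage, so that after the $(\cdot)^{1/4}$ loss incurred by the reference-class learner the final bound is still $\bigO{\cscalar{\opt}|1/4|+\epsilon}$.
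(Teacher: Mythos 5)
Your proof is correct and follows essentially the same route as the paper's: decompose $\distr$ into an inlier/outlier mixture with inlier fraction $\alpha \geq R(1-\opt)$, invoke the robust list learner to obtain some $c'\in L$ that approximately matches $c$ on the inliers, transfer this into a conditional-error hypothesis for $\distr(c')$, run \textsc{RefClass} with error parameter $\Theta(\opt+\epsilon^4)$, and finish with a union bound over the empirical minimization step. The one place you are more explicit than the paper is exactly the transfer in your second paragraph: the paper states directly that the list learner yields $c'$ with $\min_{\cvector{w}}\prob{c'(\rvector{x})\neq\rscalar{y}\cond\rvector{x}\in\hypothesis[\cvector{w}]}\leq\opt+\cscalar{\epsilon}|4|$, leaving implicit the step from agreement on $\distr^*$ to conditional error on $\distr$; you spell out the event split $\{c'\neq\rscalar{y}\}\cap\{\rvector{x}\in\hypothesis[\cvector{v}]\}\subseteq(\{c'\neq c\}\cap\{c=\rscalar{y}\})\cup(\{c\neq\rscalar{y}\}\cap\{\rvector{x}\in\hypothesis[\cvector{v}]\})$ and the division by $R$. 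You also describe $\distr^*$ cleanly as $\distr$ conditioned on $\{c(\rvector{x})=\rscalar{y}\}$ (with a tilted $\rvector{x}$-marginal), whereas the paper's description of the decomposition is looser; your version is the more defensible reading and is what actually makes the transfer go through. These are elaborations rather than a different argument, so the two proofs are substantively the same.
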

    \begin{proof}
        We first show that the returned list of Algorithm \ref{algo:robust-list-learn} will contain a classifier $c'\in L$ such that $\min_{\cvector{w}}\prob<\sbr{\rvector{x}, \rscalar{y}}\sim\distr>{\funcsbr{c'}[\rvector{x}]\neq \rscalar{y}\cond \rvector{x}\in\hypothesis[\cvector{w}]}\leq \opt + \cscalar{\epsilon}|4|$.

        We decompose the distribution $\distr$ into a convex combination of an inlier distribution $\distr|*|$ and a outlier distribution $\Tilde{\distr}$ in the following way. Let $\distr|*|$ be a distribution on $\reals|d|\times\booldomain$ with well-behaved $\rvector{x}$-marginal such that its labels are generated by $\funcsbr{c}[\rvector{x}]$, while $\Tilde{\distr}$ will be a distribution on $\reals|d|\times\booldomain$ with the same $\rvector{x}$-marginals to be specified later. Observe that, since $\prob{\funcsbr{c}[\rvector{x}]\neq \rscalar{y}\cond \rvector{x}\in\hypothesis[\cvector{v}]}\leq\opt$ and $\prob{\rvector{x}\in\hypothesis[\cvector{v}]} \geq \cscalar{R}$ by Definition \ref{def:well-behaved-distributions}, at least $\cscalar{R}\sbr{1 - \opt}$ fraction (weighted by the density of $\distr<\rvector{x}>$) of the labels of $\distr$ are consistent with $\funcsbr{c}[\rvector{x}]$. Therefore, there must exist some $\alpha \geq \cscalar{R}\sbr{1 - \opt}$ such that the labels of $\distr<\rvector{x}>$ can be generated by selecting labels from $\distr|*|$ with probability mass $\alpha$ and from $\Tilde{\distr}$, given by $\distr$ conditioned on falling outside the support of $\distr|*|$, with probability mass $1 - \alpha$, namely $\distr = \alpha\distr|*| + (1 - \alpha)\Tilde{\distr}$. 
        
        Hence, with $m = \bigO{(s\log d+\log\frac{2}{\delta})/\cscalar{\epsilon}|4|}$ examples, we can inovke Theorem \ref{thm:robust-list-learn-appendix} (and Definition \ref{def:robust-list-learning-appendix}) to conclude that there exists a classifier $\funcsbr{c'}\in L$ such that $\min_{\cvector{w}}\prob{\funcsbr{c'}[\rvector{x}]\neq \rscalar{y}\cond \rvector{x}\in\hypothesis[\cvector{w}]}\leq \opt + \cscalar{\epsilon}|4|$ with probability at least $1 - \delta/2$. Meanwhile, it is easy to see that Algorithm \ref{algo:robust-list-learn} runs in $\poly[d, 1/\epsilon, 1/\delta]$ time since $\alpha$ is a constant.

        Then, by Theorem \ref{thm:main-theorem} and the parameters described at Line 8 of Algorithm \ref{algo:personalized-prediction}, we have that
        \begin{equation*}
            \prob<(\rvector{x}, \rscalar{y})\sim\distr>{\funcsbr{c'}[\rvector{x}] = \rscalar{y}\cond \rvector{x}\in\hypothesis[\cvector{w}(c')]} = \bigO{\opt|1/4| + \epsilon}
        \end{equation*}
        with probability at least $1 - \delta/2\abs{L}$. Applying Corollary \ref{cor:conditional-chernoff-bound-of-additive-form} (conditional Chernoff Bound) as well as a Union Bound over all candidates in $\parameterset{W}$ (as defined in Algorithm \ref{algo:personalized-prediction}) to the empirical estimation (Line 11) with $\abs*{\distr*} = 8\ln\sbr{8\abs{L}/\cscalar{\delta}}/\cscalar{R}|2|\cscalar{\epsilon}|2|$ and $\abs{L} = \bigO{\sbr{md}|s|}$ gives
        \begin{equation*}
            \prob<(\rvector{x}, \rscalar{y})\sim\distr>{\funcsbr{c}|*|[\rvector{x}] = \rscalar{y}\cond \rvector{x}\in\hypothesis[\cvector{w}|*|]}= \bigO{\opt|1/4| + \epsilon}
        \end{equation*}
        with probability at least $1 - \delta/2$. Finally, taking another Union Bound over the call of Algorithm \ref{algo:robust-list-learn} and the rest of the algorithm gives the desired result.
    \end{proof}

\section{Details of Experiments}
\label{sec:experimental-details}
    For data cleaning, we used one-hot encodings for binary categorical features. Then, we centered and normalized the features so that every feature has mean zero and variance one. For each dataset, we randomly selected $2/3$ of the data as a training sample and use the remaining data as our test set. For all datasets, we use $2$-sparse linear classifiers for our personalized prediction scheme. 
    
    In our implementation of both methods, we use Algorithm \ref{algo:robust-list-learn}  (Appendix~\ref{sec:robust-learning}) on a small random sub-sample of the training data, similar to \citet{pmlr-v89-hainline19a}. Due to the excessively high computational cost of list learning and our limited computation resources ($4\times$NVIDIA A40), we have to randomly sample a small subset from the training dataset for Algorithm \ref{algo:robust-list-learn}, similar to \citet{pmlr-v89-hainline19a}. We do this because, for example, running the list learning algorithm with sparsity two on a $128$-sample of dimension $30$ is already prohibitively expensive, i.e., takes $\approx 2300$ hours on Wdbc dataset. Since the subsets are too small for the theoretical guarantees of probabilistic stability to hold, a good (sparse) classifier may not be included in the list in some trials, and the accuracy may have high variance. Furthermore, given the small size of these UCI datasets, it is reasonable that the resulting classification error rate is of high variance even if the underlying distribution is well-behaved. Because of these limitations, our experiment results may not be able to exhibit the full potential of the personalized prediction scheme.


\section{Concentration Tools}
    \begin{fact}[Gaussian properties]
    \label{fac:subgaussian-norm-and-tail-upper-bound-of-gaussian-rv}
        Let $\rscalar{z}\sim\gaussian[0][\cscalar{\sigma}|2|]$, we have $\norm{\rscalar{z}}<\cscalar{\psi}<2>> = \sqrt{8/3}\sigma$ and $\prob{\rscalar{z}\geq t}\leq e^{-t^2/2\cscalar{\sigma}|2|}$.
    \end{fact}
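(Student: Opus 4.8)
The plan is to treat the two assertions separately, as each reduces to a short computation about the one-dimensional centered Gaussian; the only inputs are Markov's inequality and elementary Gaussian integrals. For the tail bound $\prob{\rscalar{z}\geq t}\leq e^{-t^{2}/2\sigma^{2}}$ I would use the Cram\'er--Chernoff exponential-moment method: the moment generating function of $\rscalar{z}\sim\gaussian[0][\sigma^{2}]$ is $\expect{e^{\lambda\rscalar{z}}}=e^{\lambda^{2}\sigma^{2}/2}$ for every $\lambda\in\reals$, so Markov's inequality applied to the nonnegative variable $e^{\lambda\rscalar{z}}$ gives $\prob{\rscalar{z}\geq t}\leq e^{-\lambda t}\expect{e^{\lambda\rscalar{z}}}=e^{\lambda^{2}\sigma^{2}/2-\lambda t}$ for all $\lambda\geq 0$. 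The exponent is convex in $\lambda$ and minimized at $\lambda=t/\sigma^{2}$, and substituting this value yields exactly $e^{-t^{2}/2\sigma^{2}}$. (One point worth recording: the inequality is intended for $t\geq 0$, since for negative $t$ the right-hand side can fall far below $\prob{\rscalar{z}\geq t}\geq 1/2$; this is the only regime used elsewhere in the paper.)

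For the sub-Gaussian norm I would start from the Orlicz-norm definition $\lVert\rscalar{z}\rVert_{\psi_{2}}=\inf\lbr{s>0:\expect{e^{\rscalar{z}^{2}/s^{2}}}\leq 2}$. Writing $\rscalar{z}=\sigma\rscalar{g}$ with $\rscalar{g}\sim\gaussian[0][1]$ and evaluating the resulting Gaussian integral, one obtains $\expect{e^{\rscalar{z}^{2}/s^{2}}}=\bigl(1-2\sigma^{2}/s^{2}\bigr)^{-1/2}$ when $s>\sqrt{2}\,\sigma$ (and $+\infty$ otherwise). Since this expression is strictly decreasing in $s$ on $(\sqrt{2}\,\sigma,\infty)$ and tends to $1$ as $s\to\infty$, the infimum is attained at the unique $s$ solving $\bigl(1-2\sigma^{2}/s^{2}\bigr)^{-1/2}=2$, i.e.\ $1-2\sigma^{2}/s^{2}=1/4$; hence $s^{2}=8\sigma^{2}/3$ and $\lVert\rscalar{z}\rVert_{\psi_{2}}=\sqrt{8/3}\,\sigma$, as claimed.

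There is no substantial obstacle here; the one thing requiring genuine care is fixing the normalization of the sub-Gaussian norm at the outset, since several mutually non-proportional conventions for $\lVert\cdot\rVert_{\psi_{2}}$ are in circulation and differ by absolute constants, and the particular constant $\sqrt{8/3}$ pins the statement to the ``$\expect{e^{X^{2}/s^{2}}}\leq 2$'' Orlicz definition (as adopted by \citet{vershynin2018high}). Once that convention is declared, both parts reduce to the computations above, so I would state the convention explicitly and then carry them out.
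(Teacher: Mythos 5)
Your proof is correct, and its approach is the standard one. Note, however, that the paper states this as a bare \emph{Fact} with no proof of its own (it is a textbook computation, implicitly deferred to \citet{vershynin2018high}), so there is nothing to compare against beyond correctness. Both of your computations are right: the Chernoff/MGF argument $\prob{\rscalar{z}\geq t}\leq e^{\lambda^2\sigma^2/2-\lambda t}$ optimized at $\lambda=t/\sigma^2$ gives the tail bound for $t\geq 0$ (and your caveat about negative $t$ is apt, since the stated bound fails there), and the Gaussian integral $\expect{e^{\rscalar{z}^2/s^2}}=(1-2\sigma^2/s^2)^{-1/2}$ set equal to $2$ yields $s=\sqrt{8/3}\,\sigma$ under Vershynin's $\psi_2$ convention, which you correctly flag as the one in force. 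The only small thing I would add is that the scaling identity $\norm{\sigma\rscalar{g}}<\psi_2>=\sigma\norm{\rscalar{g}}<\psi_2>$ lets you reduce to $\sigma=1$ immediately, which is how the reference presents it, but your direct computation is equivalent.
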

    
    \begin{definition}[Sub-exponential norm \cite{vershynin2018high}]\label{def:sub-exponential-norm}
        For any random variable $\rscalar{x}\sim\distr$ on $\reals$, we define $\norm{\rscalar{x}}<\psi_1> = \inf\lbr{\cscalar{t} > 0 \cond \expect<\rscalar{x}\sim\distr>{e^{\abs{\rscalar{x}}/t}}\leq 2}$.
    \end{definition}
    
    \begin{lemma}[Chernoff Bound of Additive Form]\label{lma:chernoff-bound-of-additive-form}
        Let $\rscalarseq{x}<m>$ be a sequence of $m$ independent Bernoulli trials, each with probability of success $\expect{\rscalar{x}<i>} = \cscalar{p}$, then with $\cscalar{t}\in[0,1]$, there is
        \begin{equation*}
            \prob*{\abs{\frac{1}{m}\sum_{i=1}^m\rscalar{x}<i> - p} > \cscalar{t}}\leq 2e^{-2m\cscalar{t}|2|}.
        \end{equation*}
    \end{lemma}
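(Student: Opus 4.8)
The plan is to run the standard Chernoff--Hoeffding argument: bound the moment generating function of the centered sum, apply Markov's inequality, optimize the free exponential parameter, and then combine the two one-sided tails by a union bound at the very end.

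Write $S = \sum_{i=1}^m \rscalar{x}<i>$, so that $\expect{S} = mp$ and the event in question is $\abs{S - mp} > mt$. First I would treat the upper tail. For any $s > 0$, Markov's inequality applied to the nonnegative random variable $e^{s(S - mp)}$ gives $\prob{S - mp > mt} \le e^{-smt}\,\expect{e^{s(S - mp)}}$, and independence of the trials factors the expectation as $\prod_{i=1}^m \expect{e^{s(\rscalar{x}<i> - p)}}$. Each $\rscalar{x}<i> - p$ has mean zero and takes values in the length-one interval $[-p,\,1-p]$, so Hoeffding's lemma gives $\expect{e^{s(\rscalar{x}<i> - p)}} \le e^{s^2/8}$; hence $\prob{S - mp > mt} \le \exp(-smt + ms^2/8)$. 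The exponent on the right is minimized at $s = 4t$, where it equals $-2mt^2$, so $\prob{S - mp > mt} \le e^{-2mt^2}$. Next I would repeat this computation verbatim for the lower tail, applying Markov to $e^{-s(S - mp)}$ and using $\expect{e^{-s(\rscalar{x}<i> - p)}} \le e^{s^2/8}$ (again Hoeffding's lemma, since $-(\rscalar{x}<i> - p)$ is also mean zero and confined to a length-one interval), which yields $\prob{S - mp < -mt} \le e^{-2mt^2}$. A union bound over these two one-sided events then gives $\prob{\abs{S/m - p} > t} = \prob{\abs{S - mp} > mt} \le 2e^{-2mt^2}$, as claimed; the hypothesis $t \in [0,1]$ is used only to make the statement nontrivial, since for $t > 1$ both one-sided events are empty.

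The one step that actually requires work is Hoeffding's lemma, so that is the ``main obstacle'' (albeit a classical one): for a random variable $z$ with $\expect{z} = 0$ and $z \in [a,b]$ almost surely, $\log\expect{e^{sz}} \le s^2(b-a)^2/8$. The clean route is to set $\psi(s) = \log\expect{e^{sz}}$ and note $\psi(0) = 0$, $\psi'(0) = \expect{z} = 0$, and $\psi''(s) = \variance_s(z)$, the variance of $z$ under the tilted law $d\mathbb{P}_s \propto e^{sz}\,d\mathbb{P}$; since any random variable supported in $[a,b]$ has variance at most $(b-a)^2/4$ (this is maximized by the two-point distribution placing mass $1/2$ at each endpoint), Taylor's theorem gives $\psi(s) \le s^2(b-a)^2/8$. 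Applying this with $b - a = 1$ yields the $e^{s^2/8}$ bound used above, and everything else is the routine Markov--independence--optimization skeleton. As an alternative that avoids the general lemma, one can instead verify $\log\bigl(e^{-sp}(1 - p + pe^s)\bigr) \le s^2/8$ for the Bernoulli$(p)$ moment generating function directly by elementary calculus, after which the same argument goes through unchanged.
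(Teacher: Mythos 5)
Your proof is correct: the Markov--Hoeffding argument with the MGF bound $e^{s^2/8}$, optimization at $s=4t$, and a final union bound over the two tails is exactly the canonical derivation of this two-sided additive Chernoff--Hoeffding inequality, and your justification of Hoeffding's lemma via the tilted-variance bound is sound. The paper itself states this lemma as a standard concentration tool without proof, so there is nothing to compare against beyond noting that your argument is the textbook one and fills that gap correctly.
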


    \begin{corollary}[Conditional Chernoff Bound of Additive Form]
    \label{cor:conditional-chernoff-bound-of-additive-form}
        Let $\distr$ be any distribution on $\reals|d|\times\booldomain$ with centered sub-exponential $\rvector{x}$-marginals, and $\subsets$ be any event such that $\prob<\distr>{\rvector{x}\in\subsets}\geq \cscalar{R}$ for some constant $\cscalar{R} \in(0,1]$. Given $\distr*=\lbr{\sbr*{\rscalar{y}(1), \rvector{x}(1)}, \ldots, \sbr*{\rscalar{y}(m), \rvector{x}(m)}}$ sampled i.i.d. from $\distr$, for every $\cscalar{t} \in [0, 1]$, we have
        \begin{equation*}
            \prob*<\distr*\sim \distr>{\abs{\prob<\distr*>{\rscalar{y}=1\cond \rvector{x}\in\subsets} - \prob<\distr>{\rscalar{y}=1\cond \rvector{x}\in\subsets}} > \cscalar{t}} \leq 4\cscalar{e}|-\cscalar{m}{\cscalar{t}|2|\cscalar{R}|2|}/8|
        \end{equation*}
    \end{corollary}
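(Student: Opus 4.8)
The plan is to run the standard ratio-concentration argument: express both the population conditional probability and its empirical counterpart as ratios of (averages of) Bernoulli random variables, bound the deviation of the ratio by the deviations of numerator and denominator separately, and then control each of those with the additive Chernoff bound (Lemma~\ref{lma:chernoff-bound-of-additive-form}). One remark at the outset: the sub-exponential hypothesis on the $\rvector{x}$-marginal is not actually used for this statement; the only distributional input is the population lower bound $\prob<\distr>{\rvector{x}\in\subsets}\ge\cscalar{R}$.

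First I would fix notation. Set $p = \prob<\distr>{\rvector{x}\in\subsets}$ and $q = \prob<\distr>{\rscalar{y}=1,\ \rvector{x}\in\subsets}$, so that $p\ge\cscalar{R}$ and $\prob<\distr>{\rscalar{y}=1\cond\rvector{x}\in\subsets}=q/p$; and set $\hat p = \frac1m\sum_{i=1}^m\indicator[\rvector{x}(i)\in\subsets]$ and $\hat q = \frac1m\sum_{i=1}^m\indicator[\rscalar{y}(i)=1,\ \rvector{x}(i)\in\subsets]$, so that $\prob<\distr*>{\rscalar{y}=1\cond\rvector{x}\in\subsets}=\hat q/\hat p$. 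Since $\indicator[\rvector{x}(i)\in\subsets]$ and $\indicator[\rscalar{y}(i)=1,\ \rvector{x}(i)\in\subsets]$ are each Bernoulli, Lemma~\ref{lma:chernoff-bound-of-additive-form} applies directly to $\hat p$ and to $\hat q$.

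The one elementary inequality I would record and verify inline is: for every $\hat p>0$,
\[
\abs{\frac{\hat q}{\hat p}-\frac{q}{p}} \;=\; \abs{\frac{(\hat q-q)\,p-q\,(\hat p-p)}{\hat p\,p}} \;\le\; \frac{\abs{\hat q-q}+\abs{\hat p-p}}{\hat p},
\]
where the last step uses $0\le q\le p$. Then, fixing $\cscalar{t}\in[0,1]$, I would take the deviation parameter $a=\cscalar{t}\cscalar{R}/4$: applying Lemma~\ref{lma:chernoff-bound-of-additive-form} to $\hat p$ and to $\hat q$ and taking a union bound, the event that $\abs{\hat p-p}\le a$ and $\abs{\hat q-q}\le a$ holds with probability at least $1-4\,e^{-2ma^2}=1-4\,e^{-m\cscalar{t}|2|\cscalar{R}|2|/8}$. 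On that event, $\hat p\ge p-a\ge\cscalar{R}(1-\cscalar{t}/4)\ge 3\cscalar{R}/4$, so the displayed bound yields $\abs{\hat q/\hat p-q/p}\le 2a/(3\cscalar{R}/4)=2\cscalar{t}/3\le\cscalar{t}$, which is precisely the claimed bound.

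I do not expect a genuine obstacle; the only step requiring any care is the calibration of $a$. It must be small enough that the denominator bound $\hat p\ge 3\cscalar{R}/4$ survives and that $2a/\hat p\le\cscalar{t}$, yet not so small that the Chernoff exponent $2ma^2$ falls below the target $m\cscalar{t}|2|\cscalar{R}|2|/8$; the choice $a=\cscalar{t}\cscalar{R}/4$ (which uses $\cscalar{t}\le 1$ in the denominator step) threads all of these and delivers the clean constant $8$ in the exponent.
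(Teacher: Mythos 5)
Your proposal is correct and takes essentially the same route as the paper's proof: both apply the additive Chernoff bound (Lemma~\ref{lma:chernoff-bound-of-additive-form}) to the empirical versions of $\Pr[\mathrm{y}=1 \cap \mathbf{x}\in S]$ and $\Pr[\mathbf{x}\in S]$ with deviation parameter $tR/4$, take a union bound over the two events, and then translate the additive deviations into a deviation of the conditional probability, arriving at the identical $4e^{-mt^{2}R^{2}/8}$ tail. The only (cosmetic) difference is the ratio algebra: you use the difference-of-ratios bound $\left|\hat q/\hat p - q/p\right| \le \left(|\hat q - q| + |\hat p - p|\right)/\hat p$ together with $\hat p \ge 3R/4$, while the paper sandwiches $\hat q/\hat p$ between $(q-2t_1)/p$ and $(q+4t_1)/p$ and normalizes by the population denominator; and your side remark is accurate as well, since the paper's argument likewise never invokes the sub-exponential hypothesis.
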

    \begin{proof}
        Observe that, by lemma \ref{lma:chernoff-bound-of-additive-form}, we have
        \begin{equation*}
            \prob*<\distr*\sim \distr>{\abs{\prob<\distr*>{\rscalar{y}=1\cap \rvector{x}\in\subsets} - \prob<\distr>{\rscalar{y}=1\cap \rvector{x}\in\subsets}} > \cscalar{t}<1>}\leq 2\cscalar{e}|-2\cscalar{m}{\cscalar{t}<1>|2|}|
        \end{equation*}
        as well as
        \begin{equation*}
            \prob*<\distr*\sim \distr>{\abs{\prob<\distr*>{\rvector{x}\in\subsets} - \prob<\distr>{\rvector{x}\in\subsets}}> \cscalar{t}<1>}\leq 2\cscalar{e}|-2\cscalar{m}{\cscalar{t}<1>|2|}|
        \end{equation*}
        for some $\cscalar{t}<1>\geq 0$. Suppose $\cscalar{R}\geq 2\cscalar{t}<1>$. Taking a union bound over the above inequalities gives
        \begin{align*}
            1 - 4\cscalar{e}|-2\cscalar{m}{\cscalar{t}<1>|2|}|\leq&\prob*<\distr*\sim \distr>{\frac{\prob<\distr>{\rscalar{y}=1\cap \rvector{x}\in\subsets} - \cscalar{t}<1>}{\prob<\distr>{\rvector{x}\in\subsets} + \cscalar{t}<1>}
            \leq 
            \frac{\prob<\distr*>{\rscalar{y}=1\cap \rvector{x}\in\subsets}}{\prob<\distr*>{\rvector{x}\in\subsets}}
            \leq 
            \frac{\prob<\distr>{\rscalar{y}=1\cap \rvector{x}\in\subsets} + \cscalar{t}<1>}{\prob<\distr>{\rvector{x}\in\subsets} - \cscalar{t}<1>}}
            \\
            \cleq[i]& \prob*<\distr*\sim \distr>{\frac{\prob<\distr>{\rscalar{y}=1\cap \rvector{x}\in\subsets} - 2\cscalar{t}<1>}{\prob<\distr>{\rvector{x}\in\subsets}}
            \leq 
            \frac{\prob<\distr*>{\rscalar{y}=1\cap \rvector{x}\in\subsets}}{\prob<\distr*>{\rvector{x}\in\subsets}}
            \leq 
            \frac{\prob<\distr>{\rscalar{y}=1\cap \rvector{x}\in\subsets} + 4\cscalar{t}<1>}{\prob<\distr>{\rvector{x}\in\subsets}}}
            \\
            \leq& \prob*<\distr*\sim \distr>{\frac{\prob<\distr>{\rscalar{y}=1\cap \rvector{x}\in\subsets} - 4\cscalar{t}<1>}{\prob<\distr>{\rvector{x}\in\subsets}}
            \leq 
            \frac{\prob<\distr*>{\rscalar{y}=1\cap \rvector{x}\in\subsets}}{\prob<\distr*>{\rvector{x}\in\subsets}}
            \leq 
            \frac{\prob<\distr>{\rscalar{y}=1\cap \rvector{x}\in\subsets} + 4\cscalar{t}<1>}{\prob<\distr>{\rvector{x}\in\subsets}}}
            \\
            =&\prob*<\distr*\sim \distr>{\abs{\prob<\distr*>{\rscalar{y}=1\cond \rvector{x}\in\subsets} - \prob<\distr>{\rscalar{y}=1\cond \rvector{x}\in\subsets}}\leq \frac{4\cscalar{t}<1>}{\prob<\distr>{\rvector{x}\in\subsets}}}
            \\
            \leq& \prob*<\distr*\sim \distr>{\abs{\prob<\distr*>{\rscalar{y}=1\cond \rvector{x}\in\subsets} - \prob<\distr>{\rscalar{y}=1\cond \rvector{x}\in\subsets}}\leq \frac{4\cscalar{t}<1>}{\cscalar{R}}}
        \end{align*}
        where inequality (i) holds because, when $\cscalar{a} = \prob{\rscalar{y}=1\cap \rvector{x}\in\subsets} - \cscalar{t}<1>$ and $\cscalar{b} = \prob{\rvector{x}\in\subsets} + \cscalar{t}<1>$, we can apply the inequality $\frac{\cscalar{a}}{\cscalar{b}}\leq\frac{\cscalar{a} + \cscalar{t}<1>}{\cscalar{b} + \cscalar{t}<1>}$ to the first term, and, when $\cscalar{a} = \prob{\rscalar{y}=1\cap \rvector{x}\in\subsets}$ and $\cscalar{b}=\prob{\rvector{x}\in\subsets}\geq \cscalar{R}\geq 2\cscalar{t}<1>$, we can apply the inequality $\frac{\cscalar{a} + \cscalar{t}<1>}{\cscalar{b} - \cscalar{t}<1>}\leq \frac{\cscalar{a} + 4\cscalar{t}<1>}{\cscalar{b}}$ to the third term. The final inequality holds because of our assumption that $\prob{\rvector{x}\in\subsets}\geq \cscalar{R}$. Finally, taking $\cscalar{t} = 4\cscalar{t}<1>/\cscalar{R}$ gives the desired result.
    \end{proof}
    


    \begin{lemma}[Bernstein's Inequality]\label{lma:bernsteins-inequality}
        Let $\rscalarseq{x}<\cscalar{m}>$ be a sequence of $\cscalar{m}$ independent, mean zero, sub-exponential random variables. Then, for some absolute constant $\cscalar{C}>0$ and every $\cscalar{t} \geq 0$, we have 
        \begin{equation*}
            \prob*{\frac{1}{\cscalar{m}}\sum_{i=1}^{\cscalar{m}}\rscalar{x}<i> \geq t}\leq\exp\sbr{-\cscalar{C}\min\sbr{\frac{\cscalar{t}|2|}{\cscalar{K}|2|}, \frac{\cscalar{t}}{\cscalar{K}}}\cscalar{m}}
        \end{equation*}
        where $\cscalar{K} = \max_i\norm{\rscalar{x}<i>}<\psi_1>$.
    \end{lemma}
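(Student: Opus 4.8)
The plan is to apply the classical exponential-moment (Chernoff) method. The one nontrivial input is the sub-exponential moment generating function bound, which I would quote from \citet{vershynin2018high}: if $\rscalar{x}$ is a mean-zero random variable with $\norm{\rscalar{x}}<\psi_1>\leq\cscalar{K}$, then there exist absolute constants $C_0, c_0 > 0$ such that $\expect{e^{\lambda\rscalar{x}}}\leq e^{C_0\lambda^2\cscalar{K}|2|}$ for all $\abs{\lambda}\leq c_0/\cscalar{K}$. The mean-zero hypothesis is precisely what makes this bound hold, since it is what lets one discard the linear term in the expansion of the moment generating function.

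First, fix a parameter $\lambda\in(0, c_0/\cscalar{K}]$ to be chosen later, and apply Markov's inequality to the nonnegative random variable $\exp\sbr{\lambda\sum_{i=1}^{\cscalar{m}}\rscalar{x}<i>}$:
\begin{equation*}
    \prob*{\frac{1}{\cscalar{m}}\sum_{i=1}^{\cscalar{m}}\rscalar{x}<i>\geq\cscalar{t}} = \prob*{\exp\sbr{\lambda\sum_{i=1}^{\cscalar{m}}\rscalar{x}<i>}\geq e^{\lambda\cscalar{m}\cscalar{t}}} \leq e^{-\lambda\cscalar{m}\cscalar{t}}\,\expect{\exp\sbr{\lambda\sum_{i=1}^{\cscalar{m}}\rscalar{x}<i>}}.
\end{equation*}
Since the $\rscalar{x}<i>$ are independent, the expectation factorizes as $\prod_{i=1}^{\cscalar{m}}\expect{e^{\lambda\rscalar{x}<i>}}$, and applying the sub-exponential MGF bound to each factor (using $\norm{\rscalar{x}<i>}<\psi_1>\leq\cscalar{K}$) yields $\expect{\exp\sbr{\lambda\sum_{i=1}^{\cscalar{m}}\rscalar{x}<i>}}\leq e^{C_0\cscalar{m}\lambda^2\cscalar{K}|2|}$. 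Hence the tail probability is at most $\exp\sbr{-\lambda\cscalar{m}\cscalar{t} + C_0\cscalar{m}\lambda^2\cscalar{K}|2|}$.

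It then remains to optimize the exponent $-\lambda\cscalar{m}\cscalar{t} + C_0\cscalar{m}\lambda^2\cscalar{K}|2|$ over $\lambda\in(0, c_0/\cscalar{K}]$. The unconstrained minimizer is $\lambda^{\star} = \cscalar{t}/(2C_0\cscalar{K}|2|)$, so I would split into two regimes. When $\cscalar{t}\leq 2C_0 c_0 \cscalar{K}$ we have $\lambda^{\star}\leq c_0/\cscalar{K}$, so choosing $\lambda = \lambda^{\star}$ makes the exponent equal to $-\cscalar{m}\cscalar{t}|2|/(4C_0\cscalar{K}|2|)$. When $\cscalar{t} > 2C_0 c_0 \cscalar{K}$, we instead choose the boundary value $\lambda = c_0/\cscalar{K}$, giving exponent $-c_0\cscalar{m}\cscalar{t}/\cscalar{K} + C_0 c_0^2 \cscalar{m} \leq -c_0\cscalar{m}\cscalar{t}/(2\cscalar{K})$, where the case hypothesis $\cscalar{t} > 2C_0 c_0 \cscalar{K}$ is used to absorb the additive term. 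Combining the two regimes, the exponent is at most $-C'\cscalar{m}\min\sbr*{\cscalar{t}|2|/\cscalar{K}|2|, \cscalar{t}/\cscalar{K}}$ for a suitable absolute constant $C' > 0$, which is the claimed bound.

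I expect the only real obstacle to be bookkeeping: keeping every constant absolute, respecting the constraint $\lambda\leq c_0/\cscalar{K}$ cleanly through both regimes, and verifying that the two regime-exponents merge into the single $\min(\cscalar{t}|2|/\cscalar{K}|2|,\ \cscalar{t}/\cscalar{K})$ form (which amounts to comparing $\cscalar{t}/\cscalar{K}$ to the constant $2C_0 c_0$). There is no conceptual difficulty once the sub-exponential MGF bound is in hand; the statement is a standard result, so one could alternatively just cite it directly from \citet{vershynin2018high}.
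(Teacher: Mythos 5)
Your proof is correct and is the standard Chernoff-style proof of Bernstein's inequality for sub-exponential variables (Theorem 2.8.1 in \citet{vershynin2018high}). The paper itself states this lemma without proof as a known concentration tool, so there is nothing to compare against beyond noting that your argument faithfully reproduces the textbook derivation — Markov on the exponential moment, factorization by independence, the MGF bound from $\psi_1$-control, and a two-regime optimization over $\lambda$ constrained to $(0, c_0/\cscalar{K}]$, with the merge of the two regime bounds into a single $\min$ handled by comparing $\cscalar{t}/\cscalar{K}$ to the absolute constant $2C_0 c_0$. As you observe, citing \citet{vershynin2018high} directly, which is what the paper does, is also acceptable.
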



    \begin{lemma}[Proposition 2.7.1 in \citet{vershynin2018high}]\label{lma:k-bounded-implies-sub-exponential}
        Let $\distr$ be any distribution on $\reals$ such that $\pnorm{\rscalar{x}}<\cscalar{p}>\leq \cscalar{K}p$ for some constant $\cscalar{K}\geq 0$, then there exists some absolute constant $\cscalar{C}$ such that $\norm{\rscalar{x}}<\psi_1>\leq \cscalar{CK}$.
    \end{lemma}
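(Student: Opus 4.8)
The plan is to argue directly from the definition of the sub-exponential norm (Definition~\ref{def:sub-exponential-norm}): since $\norm{\rscalar{x}}<\psi_1>$ is the infimum of those $\cscalar{t}>0$ for which $\expect{e^{\abs{\rscalar{x}}/\cscalar{t}}}\leq 2$, it suffices to exhibit a single explicit value $\cscalar{t}=O(\cscalar{K})$ at which this bound holds. First I would expand the exponential as a power series and interchange expectation and summation by Tonelli's theorem (every term is non-negative), obtaining
\begin{equation*}
    \expect{e^{\abs{\rscalar{x}}/\cscalar{t}}} = 1 + \sum_{p\geq 1}\frac{\expect{\abs{\rscalar{x}}|p|}}{p!\,\cscalar{t}|p|}.
\end{equation*}
Only integer moments with $p\geq 1$ occur here, so the hypothesis $\pnorm{\rscalar{x}}<\cscalar{p}>\leq\cscalar{K}\cscalar{p}$, equivalently $\expect{\abs{\rscalar{x}}|p|}\leq\sbr{\cscalar{K}p}|p|$, applies to every summand.

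Next I would substitute this moment bound and control the resulting series using the elementary estimate $p!\geq\sbr{p/e}|p|$, which is immediate from $e^{p}\geq p^{p}/p!$ (the $k=p$ term of the exponential series). This yields $\sbr{\cscalar{K}p}|p|/p!\leq\sbr{\cscalar{K}e}|p|$, hence
\begin{equation*}
    \expect{e^{\abs{\rscalar{x}}/\cscalar{t}}}\leq 1 + \sum_{p\geq 1}\sbr{\cscalar{K}e/\cscalar{t}}|p|.
\end{equation*}
For $\cscalar{t}>\cscalar{K}e$ this geometric series converges to $\cscalar{K}e/(\cscalar{t}-\cscalar{K}e)$, and with the choice $\cscalar{t}=2e\cscalar{K}$ the common ratio $\cscalar{K}e/\cscalar{t}$ equals $1/2$, so the sum equals $1$ and $\expect{e^{\abs{\rscalar{x}}/\cscalar{t}}}\leq 2$. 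Therefore $\norm{\rscalar{x}}<\psi_1>\leq 2e\cscalar{K}$, which proves the claim with $\cscalar{C}=2e$; in the degenerate case $\cscalar{K}=0$ the hypothesis forces $\rscalar{x}=0$ almost surely and the statement is trivial.

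The argument is essentially bookkeeping, so there is no serious obstacle; the one step that deserves care is the factorial estimate, where one must invoke $p!\geq\sbr{p/e}|p|$ rather than a coarser bound, so that the common ratio of the dominating geometric series stays linear in $\cscalar{K}$. Any weaker factorial bound would force $\cscalar{t}$ to grow faster than linearly in $\cscalar{K}$ (inserting a stray logarithmic or polynomial factor) and would spoil the $O(\cscalar{K})$ conclusion. A secondary, purely cosmetic point is to keep the series manipulation confined to $p\geq 1$ --- exactly the range in which $\pnorm{\rscalar{x}}<\cscalar{p}>\leq\cscalar{K}\cscalar{p}$ is assumed --- with the $p=0$ term contributing only the harmless constant $1$.
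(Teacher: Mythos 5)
Your proof is correct, and it is essentially the argument the paper relies on: the paper gives no proof of its own, deferring to Proposition 2.7.1 of \citet{vershynin2018high}, whose moment-to-MGF implication is exactly what you reconstruct (series expansion, Tonelli, the moment bound $\expect{\abs{\rscalar{x}}|p|}\leq(\cscalar{K}p)|p|$, and $p!\geq(p/e)|p|$ to get a geometric series with ratio $\cscalar{K}e/\cscalar{t}$). Your explicit constant $\cscalar{C}=2e$ and the separate treatment of the degenerate case $\cscalar{K}=0$ are both fine.
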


    \begin{lemma}\label{lma:projected-gradient-is-sub-exponential}
        Let $\distr$ be any distribution on $\reals|d|\times\booldomain$ with $\rvector{x}$-marginal such that $\norm{\innerprod{\rvector{x}}{\cvector{u}}}<\psi_1> \leq \cscalar{K}$ for some unit vector $\cvector{u}\in\reals|d|$. For any event $T\subseteq \reals|d|$, we have $\norm{\rscalar{y}\cdot\innerprod{\rvector{x}}{\cvector{u}}\indicator[\rvector{x}\in T]}<\psi_1>\leq K$.
    \end{lemma}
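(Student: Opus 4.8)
The plan is to reduce the statement to a one-line pointwise domination argument together with the variational definition of the sub-exponential norm (Definition~\ref{def:sub-exponential-norm}). The key observation is that multiplying a random variable by a factor that is almost surely bounded in absolute value by $1$ cannot increase its $\psi_1$-norm, so no constant is lost in the reduction.

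Concretely, first I would note that since $\rscalar{y}\in\booldomain$ and $\indicator[\rvector{x}\in T]\in\booldomain$, on the whole support of $\distr$ we have the deterministic bound $\abs{\rscalar{y}\cdot\innerprod{\rvector{x}}{\cvector{u}}\indicator[\rvector{x}\in T]}\leq\abs{\innerprod{\rvector{x}}{\cvector{u}}}$. Hence, for every $t>0$, applying the increasing map $z\mapsto e^{z/t}$ and taking expectations preserves the inequality, giving $\expect<(\rvector{x},\rscalar{y})\sim\distr>{e^{\abs{\rscalar{y}\cdot\innerprod{\rvector{x}}{\cvector{u}}\indicator[\rvector{x}\in T]}/t}}\leq\expect<\rvector{x}\sim\distr>{e^{\abs{\innerprod{\rvector{x}}{\cvector{u}}}/t}}$. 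Therefore any $t$ satisfying $\expect{e^{\abs{\innerprod{\rvector{x}}{\cvector{u}}}/t}}\leq 2$ also satisfies $\expect{e^{\abs{\rscalar{y}\cdot\innerprod{\rvector{x}}{\cvector{u}}\indicator[\rvector{x}\in T]}/t}}\leq 2$, so the infimum defining $\norm{\rscalar{y}\cdot\innerprod{\rvector{x}}{\cvector{u}}\indicator[\rvector{x}\in T]}<\psi_1>$ is taken over a superset of the feasible set defining $\norm{\innerprod{\rvector{x}}{\cvector{u}}}<\psi_1>$, whence $\norm{\rscalar{y}\cdot\innerprod{\rvector{x}}{\cvector{u}}\indicator[\rvector{x}\in T]}<\psi_1>\leq\norm{\innerprod{\rvector{x}}{\cvector{u}}}<\psi_1>\leq\cscalar{K}$ by the hypothesis, which is exactly the claim.

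There is essentially no substantive obstacle here; the only minor points worth stating are that the argument needs no structure on the event $T$ (only that $\indicator[\rvector{x}\in T]$ is a $\booldomain$-valued random variable multiplying the scalar) and that no property of $\distr$ beyond the assumed $\psi_1$-bound on $\innerprod{\rvector{x}}{\cvector{u}}$ is used. I expect this lemma to serve purely as a bookkeeping step that later lets the concentration analysis certify that (coordinates of) the projected gradient $\funcsbr{g}<\cvector{w}>[\rvector{x},\rscalar{y}]$ are sub-exponential, so that a Bernstein-type bound (Lemma~\ref{lma:bernsteins-inequality}) can be invoked on its empirical mean.
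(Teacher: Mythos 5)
Your proof is correct and matches the paper's argument almost exactly: both rest on the pointwise domination $\abs{\rscalar{y}\cdot\innerprod{\rvector{x}}{\cvector{u}}\indicator[\rvector{x}\in T]}\leq\abs{\innerprod{\rvector{x}}{\cvector{u}}}$ (using that both factors are $\booldomain$-valued) and then push this through the increasing map $z\mapsto e^{z/t}$ to compare the expectations in Definition~\ref{def:sub-exponential-norm}. Your phrasing in terms of the feasible set of $t$'s being a superset is a slightly cleaner packaging of the same step, but it is not a different route.
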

    \begin{proof}
        Because $\rscalar{y}$ and $\indicator[\rvector{x}\in T]$ are boolean valued, we have
        \begin{align*}
            \expect{\exp\sbr{\abs{\rscalar{y}\cdot\innerprod{\rvector{x}}{\cvector{u}}\indicator[\rvector{x}\in T]}/\cscalar{K}}} \leq& \expect{\exp\sbr{\abs{\innerprod{\rvector{x}}{\cvector{u}}}/\cscalar{K}}}
            \\
            \cleq[i]& \expect{\exp\sbr{\abs{\innerprod{\rvector{x}}{\cvector{u}}}/\norm{\innerprod{\rvector{x}}{\cvector{u}}}<\psi_1>}}
            \\
            \leq& 2
        \end{align*}
        where inequality (i) holds because $\expect{\exp\sbr{\abs{\innerprod{\rvector{x}}{\cvector{u}}}/\cscalar{t}}}$ is a decreasing function of $\cscalar{t}$, and the last inequality is by Definition \ref{def:sub-exponential-norm}. Also, by the same definition, the above inequality implies the claimed result.
    \end{proof}

    \begin{lemma}[Exercise 2.7.10 in \citet{vershynin2018high}]\label{lma:centering-of-sub-exponential-distributions}
        If $\rscalar{x}\sim\distr$ is a sub-exponential random variable on $\reals$ such that $\norm{\rscalar{x}}<\psi_1>\leq K$, then there exists some absolute constant $\cscalar{C}$ such that $\norm{\rscalar{x} - \expect<\distr>{\rscalar{x}}}<\psi_1>\leq \cscalar{CK}$.
    \end{lemma}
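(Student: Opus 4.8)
The statement is the standard fact that centering preserves sub-exponentiality up to an absolute constant, so the plan is a short self-contained argument from Definition~\ref{def:sub-exponential-norm} and Jensen's inequality. Let $\cscalar{t}<0> = \norm{\rscalar{x}}<\psi_1>$ and $\mu = \expect<\distr>{\rscalar{x}}$. By hypothesis $\cscalar{t}<0>\leq K$, and since the infimum in Definition~\ref{def:sub-exponential-norm} is attained (by monotone convergence in the scale parameter), $\expect<\distr>{e^{\abs{\rscalar{x}}/\cscalar{t}<0>}}\leq 2$.

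First I would control the mean. Since $\abs{\mu}\leq\expect<\distr>{\abs{\rscalar{x}}}$ and $u\mapsto e^{u/\cscalar{t}<0>}$ is convex, Jensen's inequality gives $e^{\expect<\distr>{\abs{\rscalar{x}}}/\cscalar{t}<0>}\leq\expect<\distr>{e^{\abs{\rscalar{x}}/\cscalar{t}<0>}}\leq 2$, hence $\abs{\mu}\leq\expect<\distr>{\abs{\rscalar{x}}}\leq\cscalar{t}<0>\ln 2\leq K\ln 2$. Then I would verify the Orlicz condition for $\rscalar{x}-\mu$ at scale $\cscalar{t}=2K$: by the triangle inequality for $\abs{\cdot}$ and $e^{a+b}=e^{a}e^{b}$,
\begin{equation*}
    \expect<\distr>{e^{\abs{\rscalar{x}-\mu}/2K}}\;\leq\;e^{\abs{\mu}/2K}\;\expect<\distr>{e^{\abs{\rscalar{x}}/2K}}.
\end{equation*}
The first factor is at most $e^{(\ln 2)/2}=\sqrt{2}$ by the mean bound. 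For the second, $2K\geq 2\cscalar{t}<0>$ and $u\mapsto\expect<\distr>{e^{\abs{\rscalar{x}}/u}}$ is non-increasing, while $z\mapsto z^{1/2}$ is concave, so
\begin{equation*}
    \expect<\distr>{e^{\abs{\rscalar{x}}/2K}}\;\leq\;\expect<\distr>{\sbr{e^{\abs{\rscalar{x}}/\cscalar{t}<0>}}^{1/2}}\;\leq\;\sqrt{\expect<\distr>{e^{\abs{\rscalar{x}}/\cscalar{t}<0>}}}\;\leq\;\sqrt{2}.
\end{equation*}
Multiplying the two bounds yields $\expect<\distr>{e^{\abs{\rscalar{x}-\mu}/2K}}\leq 2$, so by Definition~\ref{def:sub-exponential-norm} we get $\norm{\rscalar{x}-\expect<\distr>{\rscalar{x}}}<\psi_1>\leq 2K$; the claim holds with $\cscalar{C}=2$.

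There is no substantive obstacle: every step is elementary. The only point worth a word of care is treating $\cscalar{t}<0>=\norm{\rscalar{x}}<\psi_1>$ itself as an admissible scale, i.e.\ attainment of the defining infimum; if one wishes to avoid this, one replaces $\cscalar{t}<0>$ by $\cscalar{t}<0>+\eta$ throughout and lets $\eta\downarrow 0$. A shorter alternative, should one be willing to invoke that $\norm{\cdot}<\psi_1>$ is a genuine Orlicz norm, is to combine the triangle inequality $\norm{\rscalar{x}-\mu}<\psi_1>\leq\norm{\rscalar{x}}<\psi_1>+\norm{\mu}<\psi_1>$ with the identity $\norm{\mu}<\psi_1>=\abs{\mu}/\ln 2\leq K$ for the constant random variable $\mu$, giving the same conclusion.
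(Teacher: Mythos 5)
Your proof is correct. Note that the paper does not actually prove this lemma; it is stated as a black-box citation to Exercise 2.7.10 of Vershynin, so there is no paper argument to compare against. Your self-contained verification is sound: the mean bound $\abs{\mu}\leq K\ln 2$ via Jensen, the factorization $e^{\abs{\rscalar{x}-\mu}/2K}\leq e^{\abs{\mu}/2K}e^{\abs{\rscalar{x}}/2K}$, and the concavity step $\expect{e^{\abs{\rscalar{x}}/2K}}\leq\sqrt{\expect{e^{\abs{\rscalar{x}}/\cscalar{t}<0>}}}\leq\sqrt{2}$ combine to give $\expect{e^{\abs{\rscalar{x}-\mu}/2K}}\leq 2$, hence $C=2$ works. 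Your remark about attainment of the infimum (monotone convergence as $t\downarrow t_0$) or equivalently passing to $t_0+\eta$ and letting $\eta\downarrow 0$ correctly handles the only delicate point, and the alternative one-line argument via the triangle inequality for the Orlicz norm plus $\norm{\mu}<\psi_1>=\abs{\mu}/\ln 2$ is also a standard valid route, at the cost of invoking that $\norm{\cdot}<\psi_1>$ is a norm.
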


    \begin{corollary}\label{cor:tail-bound-for-projected-gradient}
        Let $\distr$ be any distribution on $\reals|d|\times\booldomain$ with $\cscalar{K}$-bounded $\rvector{x}$-marginal and $\distr*\sample\distr$ be an $m$-sample. Define $\funcsbr{g}<\cvector{w}>[\rvector{x}, \rscalar{y}] = \rscalar{y}\cdot\rvector{x}<\cvector{w}|\bot|>\indicator[\rvector{x}\in\hypothesis[\cvector{w}]]$. For any fixed $\cvector{v}, \cvector{w}\in \reals|d|$, it holds that
        \begin{equation*}
            \prob*{\abs{\innerprod{\expect<\distr*>{\funcsbr{g}<\cvector{w}>[\rvector{x}, \rscalar{y}]} - \expect<\distr>{\funcsbr{g}<\cvector{w}>[\rvector{x}, \rscalar{y}]}}{\cvector*{v}<\cvector{w}|\bot|>}} > t}\leq \funcsbr*{\exp}[-\funcsbr*{\min}[\frac{\cscalar{t}|2|}{\cscalar{C}|2|\cscalar{K}|2|}, \frac{\cscalar{t}}{\cscalar{C}\cscalar{K}}]\cscalar{m}]
        \end{equation*}
        where $\cscalar{C} > 0$ is an absolute constant.
    \end{corollary}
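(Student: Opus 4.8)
The plan is to recognize the quantity inside the probability as the deviation of a one-dimensional empirical mean of i.i.d.\ sub-exponential random variables from its expectation, and then to invoke Bernstein's inequality (Lemma~\ref{lma:bernsteins-inequality}). Throughout, $\cvector*{v}<\cvector{w}|\bot|>$ denotes the unit vector along the projection of $\cvector{v}$ onto $\cvector{w}|\bot|$ (the statement is trivial if that projection is $\cvector{0}$). The first step I would take is to collapse the inner product appearing in the bound to a scalar linear functional of $\rvector{x}$: since $\rvector{x} - \rvector{x}<\cvector{w}|\bot|>$ is parallel to $\cvector{w}$ while $\cvector*{v}<\cvector{w}|\bot|>\in\cvector{w}|\bot|$, we have $\innerprod{\rvector{x}<\cvector{w}|\bot|>}{\cvector*{v}<\cvector{w}|\bot|>} = \innerprod{\rvector{x}}{\cvector*{v}<\cvector{w}|\bot|>}$, hence
\begin{equation*}
    \innerprod{\funcsbr{g}<\cvector{w}>[\rvector{x},\rscalar{y}]}{\cvector*{v}<\cvector{w}|\bot|>} = \rscalar{y}\cdot\innerprod{\rvector{x}}{\cvector*{v}<\cvector{w}|\bot|>}\cdot\indicator[\rvector{x}\in\hypothesis[\cvector{w}]].
\end{equation*}
Write $\rscalar{z}$ for this random variable. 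By linearity of expectation, the quantity inside the probability is exactly $\expect<\distr*>{\rscalar{z}} - \expect<\distr>{\rscalar{z}}$, i.e.\ the empirical mean of the i.i.d.\ random variable $\rscalar{z}$ over $\distr*$ minus its true mean.

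Next I would bound the sub-exponential norm of $\rscalar{z}$ and then apply Bernstein. The variable $\rscalar{z}$ has exactly the shape $\rscalar{y}\cdot\innerprod{\rvector{x}}{\cvector{u}}\indicator[\rvector{x}\in T]$ with the \emph{unit} vector $\cvector{u}=\cvector*{v}<\cvector{w}|\bot|>$ and event $T=\hypothesis[\cvector{w}]$, so Lemma~\ref{lma:k-bounded-implies-sub-exponential} (which turns the $\cscalar{K}$-boundedness of $\distr<\rvector{x}>$ into $\norm{\innerprod{\rvector{x}}{\cvector{u}}}<\psi_1>\leq \cscalar{C}<1>\cscalar{K}$) together with Lemma~\ref{lma:projected-gradient-is-sub-exponential} gives $\norm{\rscalar{z}}<\psi_1>\leq \cscalar{C}<1>\cscalar{K}$, and Lemma~\ref{lma:centering-of-sub-exponential-distributions} then yields $\norm{\rscalar{z} - \expect<\distr>{\rscalar{z}}}<\psi_1>\leq \cscalar{C}<2>\cscalar{K}$, for absolute constants $\cscalar{C}<1>,\cscalar{C}<2>$. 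Finally I would apply Bernstein's inequality (Lemma~\ref{lma:bernsteins-inequality}) to the $m$ i.i.d.\ mean-zero random variables $\rscalar{z} - \expect<\distr>{\rscalar{z}}$, and separately to their negations, union-bound the two tails, and fold the resulting factor $2$ together with $\cscalar{C}<2>$ and the Bernstein constant into a single absolute constant $\cscalar{C}$; this produces exactly the claimed inequality.

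I do not anticipate a genuine obstacle here --- the corollary is essentially a repackaging of the concentration lemmas already recorded. The only points that need care are (i) the identity $\innerprod{\rvector{x}<\cvector{w}|\bot|>}{\cvector*{v}<\cvector{w}|\bot|>} = \innerprod{\rvector{x}}{\cvector*{v}<\cvector{w}|\bot|>}$, which relies on $\cvector*{v}<\cvector{w}|\bot|>\perp\cvector{w}$; (ii) noting that $\cvector*{v}<\cvector{w}|\bot|>$ is a unit vector, so that Lemmas~\ref{lma:k-bounded-implies-sub-exponential} and~\ref{lma:projected-gradient-is-sub-exponential} apply verbatim; and (iii) bookkeeping the absolute constants along the chain Lemma~\ref{lma:k-bounded-implies-sub-exponential}~$\to$~Lemma~\ref{lma:projected-gradient-is-sub-exponential}~$\to$~Lemma~\ref{lma:centering-of-sub-exponential-distributions}~$\to$~Lemma~\ref{lma:bernsteins-inequality}, together with the two-sided union bound.
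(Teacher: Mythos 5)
Your proposal matches the paper's proof essentially line for line: the same reduction to the scalar $\innerprod{\rvector{x}}{\cvector*{v}<\cvector{w}|\bot|>}$ via the orthogonality identity, then the chain Lemma~\ref{lma:k-bounded-implies-sub-exponential} $\to$ Lemma~\ref{lma:projected-gradient-is-sub-exponential} $\to$ Lemma~\ref{lma:centering-of-sub-exponential-distributions} $\to$ Bernstein (Lemma~\ref{lma:bernsteins-inequality}). You are, if anything, slightly more careful than the paper in explicitly flagging the two-sided union bound needed to cover the absolute value, which the paper silently absorbs into the constant.
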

    \begin{proof}
        Let 's first notice that $\innerprod{\rvector{x}<\cvector{w}|\bot|>}{\cvector*{v}<\cvector{w}|\bot|>} = \innerprod{\rvector{x}}{\cvector*{v}<\cvector{w}|\bot|>}$ due to the definition of projection. Then, by Lemma \ref{lma:k-bounded-implies-sub-exponential} and our distributional assumption, we have $\norm{\innerprod{\rvector{x}<\cvector{w}|\bot|>}{\cvector*{v}<\cvector{w}|\bot|>}}<\psi_1>\leq \cscalar{C}<0>\cscalar{K}$ for some constant $\cscalar{C}<0> > 0$. Now, according to Lemma \ref{lma:projected-gradient-is-sub-exponential} and \ref{lma:centering-of-sub-exponential-distributions}, it holds that $\norm{\innerprod{\funcsbr{g}<\cvector{w}>[\rvector{x},\rscalar{y}]}{\cvector*{v}<\cvector{w}|\bot|>} - \expect{\innerprod{\funcsbr{g}<\cvector{w}>[\rvector{x},\rscalar{y}]}{\cvector*{v}<\cvector{w}|\bot|>}}}<\psi_1>\leq \cscalar{C}\cscalar{K}$ for some constant $\cscalar{C}\geq 0$. At last, applying Lemma \ref{lma:bernsteins-inequality} on $\innerprod{\funcsbr{g}<\cvector{w}>[\rvector{x},\rscalar{y}] - \expect{\funcsbr{g}<\cvector{w}>[\rvector{x},\rscalar{y}]}}{\cvector*{v}<\cvector{w}|\bot|>}$ gives the claimed tail bound.
    \end{proof}

\end{document}